\newtheorem{theorem}{Theorem}[section]
\newtheorem{prop}[theorem]{Proposition}
\newtheorem{lemma}[theorem]{Lemma}
\newtheorem{remark}[theorem]{Remark}
\newtheorem{example}[theorem]{Example}
\def\N{{\mathbb{N}}}
\def\EP{{\mathbb{E}}}
\def\PR{{\mathbb{P}}}
\def\B{{\mathbb{B}}}
\def\I{{\mathbb{I}}}
\def\R{{\mathbb{R}}}
\def\one{{\mathbf{1}}}
\def\A{{\mathcal{A}}}
\def\C{{\mathcal{C}}}
\def\D{{\mathcal{D}}}
\def\F{{\mathcal{F}}}
\def\G{{\mathcal{G}}}
\def\X{{\mathcal{X}}}
\def\Y{{\mathcal{Y}}}
\def\NN{{\mathcal{N}}}
\def\H{{\mathcal{H}}}
\def\S{{\mathcal{S}}}
\def\Relu{{\hbox{\rm{Relu}}}}
\def\Loss{{\mathbf{L}}}
\def\argmin{\hbox{\rm{argmin}}}
\def\argmax{\hbox{\rm{argmax}}}
\def\mymin{\hbox{\rm{min}}}
\def\mymax{\hbox{\rm{max}}}
\def\maxmin{\hbox{\rm{maxmin}}}
\def\minmax{\hbox{\rm{minmax}}}
\def\myprod{\hbox{$\prod$}}
\def\mysum{\hbox{$\sum$}}
\def\AR{{\rm{AR}}}
\def\AA{{\rm{AA}}}
\def\CE{{\hbox{\scriptsize\rm{ce}}}}
\def\CW{{\hbox{\scriptsize\rm{cw}}}}
\def\MSE{{\hbox{\scriptsize\rm{mse}}}}
\def\Hyp{{\mathcal{H}}}
\def\Be{{\B_{\varepsilon}}}
\def\Bxe{{\B(x,\varepsilon)}}
\begin{document}


\title{
Achieve Optimal Adversarial Accuracy for Adversarial Deep Learning using Stackelberg Game\thanks{This work is partially supported by NSFC grant No.12288201 and NKRDP grant No.2018YFA0704705.}}
\author{
Xiao-Shan Gao, Shuang Liu, and Lijia Yu \\
Academy of Mathematics and Systems Science,  Chinese Academy of Sciences\\
University of  Chinese Academy of Sciences}
\date{\today}

\maketitle

\begin{abstract}
\noindent
Adversarial deep learning is to train robust DNNs against adversarial attacks,
which is one of the major research focuses of deep learning.
Game theory has been used to answer some of the basic questions
about adversarial deep learning such as the existence of a classifier with optimal robustness
and the existence of optimal adversarial samples  for a given class of classifiers.
In most previous work, adversarial deep learning was formulated as a simultaneous game
and the strategy spaces are assumed to be certain probability distributions in order for the Nash equilibrium to exist. But, this assumption is not applicable to the practical situation.
In this paper, we give answers to these basic questions
for the practical case where the classifiers are DNNs with a given structure,
by formulating the adversarial deep learning as sequential games.
The existence of Stackelberg equilibria for these games are proved.
Furthermore, it is shown that the equilibrium DNN has the largest adversarial accuracy among all DNNs with the same structure, when Carlini-Wagner's margin loss is used.
Trade-off between robustness and accuracy in adversarial deep learning
is also studied from game theoretical aspect.

\vskip20pt\noindent
{\bf Keywords.}
Adversarial deep learning,
Stackelberg game,
optimal robust DNN,
universal adversarial attack,
adversarial accuracy,
trade-off result.
\end{abstract}

\section{Introduction}

A major safety issue for deep learning~\cite{lecun2015deep} is the existence of adversarial samples~\cite{S2013}, that is, it is possible to  make little modifications to an input sample
which are essentially imperceptible to the human eye,
but the DNN outputs a wrong label or even any label given by the adversary.
Existence of adversarial samples makes
deep learning vulnerable in safety critical applications and
{\em adversarial deep learning}  has becomes a major research focus of deep learning~\cite{sur-adv}.
The goal of adversarial deep learning is to train robust DNNs against adversarial attacks and well as developing more effective attack methods for generating adversarial samples.

Many adversarial defence models were  proposed, including
the adversarial training based on robust optimization~\cite{M2017,trades},
the  gradient masking and obfuscation approaches~\cite{Obfuscated1,yu-biasc},
adversarial parameter attacks~\cite{aq1,pp1,yu-apa},
universal adversaries~\cite{game-aeg1,universal-adv},
randomized smoothing~\cite{smooth1},
and the adversarial sample detection~\cite{asdet1}.
Many attack methods are also proposed, including
the white-box attacks based on gradient information of the DNN~\cite{CW1,M2017,JSMA},
the black-box attacks based on the transferability of the adversaries~\cite{Bbox1},
the poisoning attacks for the input data~\cite{DataPoi1,DataPoi2},
and the physical world attacks~\cite{PhyAt1,PhyAt2}.
More details can be found in the survey~\cite{sur-adv}.

Many of the defenses are found to be susceptible to new adversarial attacks,
and stronger defences also are proposed against the new adversarial attacks.
To break this loop of defences and attacks, a recent line of research based on game theory~\cite{book-game1,book-game2} tries to establish more rigourous foundation for adversarial deep learning  by answering questions such as~\cite{game-aeg1,adl-gm1,game-ne1,game-rd1}:

{\bf Question $\mathbf{Q}_1$:} Does there exists a  classifier which ensures optimal robustness against any adversarial attack?

{\bf Question $\mathbf{Q_2}$:} Does there exist optimal adversarial samples for a given class of classifiers and a given set of data distribution?

To answer these questions, the adversarial deep learning was formulated as
a simultaneous game between the Classifier and the Adversary.
The goal of the Classifier is to train a robust DNN.
The goal of the Adversary is to create optimal adversarial samples.
A {\em Nash equilibrium} of the game is a DNN $\C^*$ and an attack $\A^*$,
such that no player can benefit by unilaterally changing its strategy
and thus gives an optimal solution to the adversarial deep learning.
%
%
Existence of  Nash equilibria was proved under various assumptions \cite{game-aeg1,game-ne1,game-rd1}.
%
%

Despite the grerat  progresses, questions $\mathbf{Q_1}$ and $\mathbf{Q_2}$ are not answered satisfactorily.
The main reason is that in order for the Nash equilibrium to exist,
both the Classifier and the Adversary are either assumed to be a convex set of probability distributions
or measurable functions.
However, in practice, DNNs with fixed structures are used and
Nash equilibria do not exist in this case.
In this paper, we will show that questions $\mathbf{Q_1}$ and $\mathbf{Q_2}$
can be answered positively for DNNS with a fixed structure by
formulating the adversarial deep learning as
Stackelberg games.

\subsection{Main contributions}

A positive answer to question $\mathbf{Q_1}$ is given
by formulating the adversarial deep learning
as a Stackelberg game $\G_s$ with the Classifier as the leader
and the Adversary as the follower,
where the strategy space for the Classifier is a class of DNNs with a given structure,
say DNNs with a fixed depth and width.
We show that game $\G_s$ has a Stackelberg equilibrium
which gives the optimal robust DNN under certain robustness measurement
(Refer to Theorem \ref{th-s11}).
We further show that when the Carlini-Wagner margin loss is used as the payoff function,
the equilibrium DNN is the optimal defense which has the largest adversarial accuracy among all DNNs with the same structure (Refer to Theorem \ref{th-ar12}).
Furthermore, the equilibrium DNN is the same as that of the adversarial training~\cite{M2017}.
Thus, our results give another theoretical explanation for
the fact that adversarial training is one of the most effective defences against adversarial attacks.

The trade-off property for deep learning means that
there exists a trade-off  between the robustness and accuracy~\cite{trade1,trade2,trades}.
We prove a trade-off result from game theoretical viewpoint.
Precisely, we show that if  a linear combination
of the payoff functions of adversarial training
and normal training is used as the total payoff function,
then the equilibrium DNN has robustness not higher
and accuracy no lower than that of the DNN obtained by adversarial training.
We also show that trade-off property does not hold
if using empirical loss to train the DNNs,
that is, the DNNs with the largest adversarial accuracy can be parameterized
by elements in an open set of $\R^K$, where $K$ is the number of parameters,
that is, there still exist rooms to improve the accuracy 
for DNNS with the optimal adversarial accuracy.

Finally, when using the empirical loss for a finite set of samples to train the DNN,
we compare $\G_s$ (denoted as $\G_1$ in this case) with two other games:   $\G_2$ is the Stackelberg game
with the Adversary as the leader and  $\G_3$ is the simultaneous game between the Classifier
and the Adversary.
We show that $\G_2$ has a Stackelberg equilibrium  and $\G_3$ has
a mixes strategy Nash equilibrium.
Furthermore,  the payoff functions of
$\G_1,\G_2,\G_3$ at their equilibria decrease  successively.
Existence of Stackelberg equilibrium for $\G_2$ gives
a positive answer to question $\mathbf{Q_2}$ for DNNs with a given structure.

\subsection{Related work}

The game theoretical approach to adversarial machine learning
was first studied in the seminal work of Dalvi, Domingos, Mausam, and Verma~\cite{aml-01},
where they formulated adversarial machine learning as a simultaneous game
between the Classifier and the Adversary.
Quite a number of work has been done along this line,
by formulating adversarial machine learning both as
a simultaneous game and as a Stackelberg game,
which can be found in the nice surveys~\cite{aml-sur1,book-game3}.
These works usually used linear models such as SVM
for binary classifications, and used spam email filtering as
the main application background.

Game theoretical approach to adversarial deep learning
appeared recently and was partially stimulated by the fact
that adversarial samples seem inevitable for deep learning~\cite{asulay1,Bast1,Bast2,adv-inev1}.
The adversarial training was introduced in~\cite{M2017}, which is one of the
best practical training method to defend adversaries.
%
In \cite{game-rd1,game-aeg1,game-latent1,game-ne1,game-pv1,mixedne1,mixedne2}, the adversarial deep learning was all formulated as a simultaneous game.
In \cite{game-rd1}, it was shown that the game exists no pure strategy Nash equilibrium,
but mixed strategies give more robust classifiers.
%
In \cite{game-aeg1}, it was proved that Nash equilibrium exists
when the strategy space for the Classifier is convex
and the strategy space for the Adversary is certain probability distributions.
%
In \cite{game-latent1,game-ne1},
it was proved that Nash equilibria exist and can be approximated by a pure strategy,
when the strategy spaces for both the Classifier and Adversary
are parameterized by distributions.
%
%
In \cite{game-pv1}, the Classifier ensures the robustness of a fixed DNN by adding perturbation to the sample to counteract the Adversary.
In \cite{mixedne1,mixedne2}, methods to compute mixed Nash equilibria were given.
In \cite{adl-gm1}, the adversarial deep learning was formulated as a Stackelberg game with the Adversary as the leader, but existence of equilibria was not given.
In \cite{game-Fiez1,game-Jin1}, properties and algorithms for local Stackelberg equilibria were studied.
In above work, the adversarial deep learning is modeled as a non-cooperative game.
In \cite{coop-game1}, the cooperative game is used
to explain various adversarial attacks and defenses.

Most of the above work formulated adversarial deep learning as a simultaneous
game and assume the strategy spaces to be certain convex probability distributions
in order to prove the existence of the Nash equilibrium.
In this paper, we show that
by formulating the adversarial deep learning as a sequential game,
Stackelberg equilibria exist for DNNs with a given structure,
and the equilibrium DNN is the best defence in that
it has the largest adversarial accuracy among all DNNs with the same structure.

\vskip 15pt
The rest of this paper is organized as follows.
In section \ref{sec-AT}, preliminary results are given.
In section \ref{sec-s0}, the adversarial deep learning is formulated as
a  Stackelberg game  and the existence of Stackelberg equilibria is proved.
In section \ref{sec-maxaa}, it is proved that adversarial training with
Carlini-Wagner loss gives the best adversarial accuracy.
In section \ref{sec-trade}, two trade-off results are proved.
In section \ref{sec-emp}, three types of adversarial games are
compared when the data set is finite.
In section \ref{sec-conc}, conclusions and problems for further study are given.

\section{Preliminaries}
\label{sec-AT}

\subsection{Adversarial training and robustness of DNN}
Let $\C:\X\to \R^m$ be a classification DNN with $m$ labels in $\Y=[m]=\{1,\ldots,m\}$~\cite{lecun2015deep}.
Without loss of generality, we assume $\X=\I^n$, where $\I=[0,  1]$.
Denote $\C_l(x)\in\R$ to be the $l$-th coordinate of $\C(x)$ for $l\in[m]$, which are called {\em logits} of the DNN.
For $x\in\X$, the classification result of $\C$ is $\widehat{\C}(x) = \argmax_{l\in\Y}\, \C_l(x).$
We assume that Relu is used as the {\em activation function},
so $\C$ is continuous and piecewise linear.
The results are easily generated to any activation functions
which are Lipschitz continuous.

To train a DNN, we need first to choose a {\em hypothesis space} $\Hyp$ for the DNNs,
say the set of CNNs or RNNs with certain fixed structure.
In this paper, denote $\NN_{W,D}$ to be the set of   DNNs
with width $W$ and depth $D$ and   use it as the hypothesis space.
For a given hypothesis space $\Hyp$, the parameter set of DNNs in $\Hyp$ is fixed
and is denoted as $\Theta\in\R^K$, where $K$ is the number of the parameters.
$\C$ can be written as $\C_{\Theta}$ if the parameters need to be mentioned explicitly,
 that is,
\begin{equation}
\label{eq-HS}
\Hyp=\{C_\Theta:\X\to \R^m\,:\,  \Theta\in\R^K\}.
\end{equation}

Let the objects to be classified satisfy a distribution $\D$  over $\X\times\Y$.
Given a loss function $\Loss:\R^m\times\Y\to\R$, the total loss for the data set is
\begin{equation}
\label{eq-LS0}
\varphi_0({\Theta})=
\EP_{(x,y)\sim\D}\,\Loss(\C_{\Theta}(x), y).
\end{equation}
Training a DNN $\C_{\Theta}$ is to make the total loss minimum by solving the following optimization problem
\begin{equation}
\label{eq-NT00}
\Theta^* = \argmin_{\Theta\in\R^K}\,\varphi_0({\Theta}).
\end{equation}

Given an {\em attack radius} $\varepsilon\in\R_+$,
denote $\Bxe=\{\overline{x}\in\R^n\,:\, ||\overline{x}-x||\le\varepsilon\}$.
We   use $\infty$ norm if not mentioned otherwise.
We will find adversaries for $x$ in $\Bxe$.
Precisely, $\overline{x}\in\Bxe$ is called an {\em adversary of $x$} with label $y$, if
$\widehat{\C}(\overline{x})\ne  y$.
%
%
In order to increase the robustness of a trained DNN, the {\em adversarial training}~\cite{M2017}
is introduced which is to solve the following robust optimization problem
\begin{equation}
\label{eq-AT}
\Theta^* = \argmin_{\Theta\in\R^K}\, \EP_{(x,y)\sim \D}\, \mymax_{\overline{x}\in\Bxe}\,
\Loss(\C_{\Theta}(\overline{x}),  y).
\end{equation}
%
Intuitively, the adversarial training is first computing a {\em most-adversarial sample}
$$x_a = \argmax_{\overline{x}\in\Bxe}\, \Loss(\F(\overline{x}),  l_x)$$
for $x$ and then minimizing $\Loss(\F(x_a),y)$ instead of $\Loss(\F(x),y)$.

Given a DNN $\C$ and an attack radius $\varepsilon$, we define
the {\em adversarial robustness measure} of $\C$ with respect to $\varepsilon$ as follows
\begin{equation}
\label{eq-arm121}
\begin{array}{lcl}
\AR_{\D}(\C,\varepsilon)
&=& \EP_{(x,y)\sim \D}\, \mymax_{\overline{x}\in\B(x,\varepsilon)} \Loss(\C(\overline{x}), y)\\
\end{array}
\end{equation}
which is the total loss of $\C$ at the most-adversarial samples.
$\C$ is more robust if $\AR_{\D}(\C,\varepsilon)$ is smaller.
Then the adversarial training is to  find a DNN in $\Hyp$ with the optimal adversarial robustness measurement which is denoted as
\begin{equation}
\label{eq-arm122}
\begin{array}{lcl}
\AR_{\D}(\Hyp,\varepsilon)
&=& \mymin_{\Theta\in\R^K} \AR_{\D}(\C_\Theta,\varepsilon).
\end{array}
\end{equation}
$\AR_{\D}(\C,\varepsilon)$ and $\AR_{\D}(\Hyp,\varepsilon)$  have the following simple properties.

(1) If $W_1\ge W_2$ and $D_1\ge D_2$, then
$\AR_{\D}(\NN_{W_1,D_1},\varepsilon)\le \AR_{\D}(\NN_{W_2,D_2},\varepsilon)$.

(2) If $\varepsilon_1\le \varepsilon_2$, then
$\AR_{\D}(\C,\varepsilon_1)\le \AR_{\D}(\C,\varepsilon_2)$.

(3) In the optimal case, we have $\AR_{\D}(\C,\varepsilon)=0$, which means that $\C$
gives the correct label for any $\overline{x}\in\B(x,\varepsilon)$.
In this case, we say that $\C$ is
{\em robust} for  the attack radius $\varepsilon$.
It was proved that there exist robust classifiers for a separated  data set~\cite{trade2}.

\subsection{Bounds and continuity of the DNN}
Let $C_\Theta:\X\to \R^m$ be a fully connected feed-forward DNN with depth $D$, whose $l$-th hidden layer is
\begin{equation}
\label{eq-dnn0}
\begin{array}{ll}
 x_{l}=\sigma(W_{l}x_{l-1}+b_{l}) \in \R^{n_{l}}, l=1,  \ldots, D,
\end{array}
\end{equation}
where $n_0=n$, $n_D = m$, $W_{l}\in \R^{n_{l}\times n_{l-1}}$, $b_{l}\in \R^{n_{l}}$,
$\sigma=\Relu$, $x_{0}\in\R^{n}$ is the input, and $x_D\in\R^{m}$ is the output.
The parameter set is $\Theta = \cup_{l=1}^D (W_l\cup b_l)$.
%
It is easy to show that $\C$ is bounded.
For $\varepsilon\in\R_+$, denote  $\I_{\varepsilon}=[-\varepsilon,1+\varepsilon]$.
\begin{lemma}
\label{lm-dnn11}
For any DNN $\C_{\Theta}:\I_{\varepsilon}^n\to\R^m$ with
width $\le W$, depth $\le D$, and   $||\Theta||_2\le E$,
there exists an  $\Omega(n,m,D,W,E,\varepsilon)\in\R_+$ such that $||\C_{\Theta}(x)|| \le \Omega(n,m,D,W,E,\varepsilon)$.
\end{lemma}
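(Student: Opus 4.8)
The plan is to bound the Euclidean norm of the output $x_D = \C_\Theta(x)$ by propagating a size estimate through the layers of the recursion \eqref{eq-dnn0}, and then to pass to any other norm on $\R^m$ by norm equivalence. The whole argument is a one-line affine recursion driven by three elementary observations.

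I would begin by bounding the input: since $x = x_0 \in \I_\varepsilon^n = [-\varepsilon, 1+\varepsilon]^n$, every coordinate obeys $|(x_0)_i| \le 1+\varepsilon$, so $||x_0||_2 \le \sqrt{n}(1+\varepsilon) =: M_0$. Next I record two facts. First, $\Relu$ is non-expansive coordinatewise, since $|\sigma(z_i)| = |\max(0,z_i)| \le |z_i|$, whence $||\sigma(z)||_2 \le ||z||_2$ for every $z$. Second, because the global parameter vector satisfies $||\Theta||_2 \le E$ and $\Theta$ is the concatenation of all entries of every $W_l$ and $b_l$, the spectral norm of each weight is controlled via the Frobenius norm, $||W_l||_2 \le ||W_l||_F \le E$, and likewise $||b_l||_2 \le E$. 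Combining these for each layer gives
$$ ||x_l||_2 = ||\sigma(W_l x_{l-1} + b_l)||_2 \le ||W_l x_{l-1} + b_l||_2 \le ||W_l||_2\,||x_{l-1}||_2 + ||b_l||_2 \le E\,||x_{l-1}||_2 + E. $$
Writing $\beta_l = ||x_l||_2$, this is the affine recursion $\beta_l \le E\beta_{l-1} + E$ with $\beta_0 \le M_0$, which unrolls by induction on $l$ to $\beta_D \le E^D M_0 + E(E^{D-1} + \cdots + E + 1)$. I would then simply define $\Omega(n,m,D,W,E,\varepsilon)$ to be this right-hand side, namely $E^D\sqrt{n}(1+\varepsilon) + E\cdot\frac{E^D-1}{E-1}$ when $E\ne 1$ and $E^D\sqrt{n}(1+\varepsilon) + ED$ when $E = 1$. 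Since this is a finite nonnegative real and $||\C_\Theta(x)|| = ||x_D|| \le ||x_D||_2 = \beta_D$ for any standard norm, the bound $||\C_\Theta(x)|| \le \Omega(n,m,D,W,E,\varepsilon)$ follows.

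There is essentially no hard step here; the computation is routine. The only points needing a little care are: (a) correctly deducing the per-layer operator-norm estimate $||W_l||_2 \le E$ from the single global constraint $||\Theta||_2 \le E$, which uses that the Frobenius norm dominates the spectral norm together with the fact that $\Theta$ collects all weights; and (b) treating the degenerate case $E = 1$ of the geometric sum separately. I note that the resulting bound genuinely depends only on $n, D, E, \varepsilon$: the quantities $m$ and $W$ enter only through the finiteness of the layer dimensions and need not appear explicitly, though listing them among the arguments of $\Omega$ is harmless. If one preferred to bound the $\infty$-norm directly, the same recursion works, but estimating $||\cdot||_2$ and invoking $||\cdot||_\infty \le ||\cdot||_2$ is cleaner and avoids any extra width-dependent factor.
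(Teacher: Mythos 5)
Your proof is correct, but it takes a more constructive route than the paper. The paper's own argument is a two-line appeal to topology: $\C_\Theta(x)$ is jointly continuous in $(\Theta,x)$, the parameter ball $\{||\Theta||_2\le E\}$ and the input cube $\I_\varepsilon^n$ are compact, hence the image is bounded; the explicit form of $\Omega$ is only asserted to be ``derivable from \eqref{eq-dnn0}'' without being written down. You instead carry out that derivation: bounding the input by $\sqrt{n}(1+\varepsilon)$, using the non-expansiveness of $\Relu$ and the estimate $||W_l||_2\le ||W_l||_F\le E$ to get the affine recursion $\beta_l\le E\beta_{l-1}+E$, and unrolling it to a closed-form $\Omega$. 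What your approach buys is an explicit, computable constant (and the observation that it does not actually depend on $m$ or $W$), at the cost of a slightly longer argument and the need to handle the $E=1$ case of the geometric sum and to justify the per-block norm bound from the single global constraint $||\Theta||_2\le E$ --- both of which you do. One small caveat: you silently drop the bias vectors' contribution to the constraint (each $||b_l||_2\le E$ is consistent with $||\Theta||_2\le E$ only because the global norm dominates every sub-block, which is the same fact you use for $W_l$, so this is fine, but worth stating once rather than treating weights and biases by separate appeals). The two proofs are complementary: the paper's suffices for every later use of the lemma (only the existence of the bound $B=\Omega$ is ever invoked, e.g.\ in \eqref{eq-Loss11}), while yours would be the one to use if a quantitative dependence on $D$ and $E$ were ever needed.
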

\begin{proof}
$\C_{\Theta}(x)$ is bounded because $\C_{\Theta}(x)$ is continuous on $x$ and $\Theta$, and
$[-\varepsilon,1+\varepsilon]^n$ and $[-E,E]^n$ are compact.
$\Omega(n,m,D,W,E,\varepsilon)$ can be derived from \eqref{eq-dnn0}.
\end{proof}

\begin{lemma}
\label{lm-con12}
For any DNN $\C_{\Theta}:\I_{\varepsilon}^n\to\R^m$ with
width $\le W$, depth $\le D$, and $||\Theta||_2\le E$,
there exist   $\Delta(m, n, W,D,E,\varepsilon)$ and $\Lambda(m,n,W,D,E,\varepsilon)\in\R_+$ such that

(1) $||\C_{\Theta}(x)-\C_{\Theta+\alpha}(x)||_2\le\Delta(m,n,W,D,E,\varepsilon)||\alpha||_2$,
that is $\C_{\Theta}(x)$ is Lipschitz on $\Theta$.

(2) $||\C_{\Theta}(x+\delta)-\C_{\Theta}(x)||_2
     \le\Lambda(m,n,W,D,E,\varepsilon)||\delta||$,
that is $\C_{\Theta}(x)$ is Lipschitz on $x$.

\noindent
Thus $\C$ is Lipschitz on $\Theta$ and $x$.
\end{lemma}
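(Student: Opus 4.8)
The plan is to prove both parts by induction on the depth $D$, relying on two elementary facts: the activation $\sigma=\Relu$ is $1$-Lipschitz (componentwise, hence in $||\cdot||_2$), and each weight matrix obeys the operator-norm bound $||W_l||_2\le ||W_l||_F\le||\Theta||_2\le E$. Since all norms on $\R^n$ are equivalent, I will carry out the estimates entirely in $||\cdot||_2$ and absorb the conversion from the input $\infty$-norm into the final constant (at worst a factor $\sqrt{n}$). Part (2) is the cleaner of the two, so I would dispatch it first.

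For part (2), write $x_l$ and $x_l'$ for the layer-$l$ activations produced from inputs $x$ and $x+\delta$ under the \emph{same} parameters $\Theta$. Because $\sigma$ is $1$-Lipschitz and the affine map $u\mapsto W_lu+b_l$ has Lipschitz constant $||W_l||_2\le E$, each layer map is $E$-Lipschitz, giving $||x_l-x_l'||_2\le E\,||x_{l-1}-x_{l-1}'||_2$. Iterating from $||x_0-x_0'||_2=||\delta||_2$ through all $D$ layers yields $||\C_{\Theta}(x+\delta)-\C_{\Theta}(x)||_2\le E^D||\delta||_2$, so one may take $\Lambda=E^D$ (or the tighter $\prod_l ||W_l||_2$, up to the $\sqrt n$ factor converting $||\delta||_2$ to $||\delta||$).

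For part (1) the perturbation $\alpha$ injects error at \emph{every} layer rather than only at the input, so the recursion acquires an inhomogeneous term. Let $x_l$ and $\widetilde x_l$ denote the layer-$l$ activations under $\Theta$ and $\Theta+\alpha$, and let $(\Delta W_l,\Delta b_l)$ be the blocks of $\alpha$ feeding layer $l$, so $||\Delta W_l||_2,||\Delta b_l||_2\le||\alpha||_2$. Using $1$-Lipschitzness of $\sigma$ and adding and subtracting $W_l\widetilde x_{l-1}$ inside the norm, I obtain
\begin{equation*}
||x_l-\widetilde x_l||_2\le ||W_l||_2\,||x_{l-1}-\widetilde x_{l-1}||_2+||\Delta W_l||_2\,||\widetilde x_{l-1}||_2+||\Delta b_l||_2 .
\end{equation*}
The factor $||\widetilde x_{l-1}||_2$ is controlled by Lemma \ref{lm-dnn11} applied to the first $l-1$ layers; since that bound needs a constraint on the perturbed parameters, I would first restrict to $||\alpha||_2\le 1$ so that $||\Theta+\alpha||_2\le E+1$, and set $\Omega^\ast=\Omega(n,m,D,W,E+1,\varepsilon)$ as a uniform bound on all intermediate activations. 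Writing $e_l=||x_l-\widetilde x_l||_2$, the recursion becomes $e_l\le E\,e_{l-1}+(\Omega^\ast+1)||\alpha||_2$ with $e_0=0$, whose solution is the geometric estimate $e_D\le(\Omega^\ast+1)\bigl(\sum_{j=0}^{D-1}E^{\,j}\bigr)||\alpha||_2$. Hence one may take $\Delta=(\Omega^\ast+1)\sum_{j=0}^{D-1}E^{\,j}$. The closing sentence of the lemma then follows by combining (1) and (2) through the triangle inequality $||\C_{\Theta}(x)-\C_{\Theta+\alpha}(x+\delta)||_2\le e_D+\Lambda||\delta||_2$.

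The main obstacle is the bookkeeping in part (1): unlike part (2), the error does not propagate purely multiplicatively, and one must simultaneously bound the perturbed intermediate activations $||\widetilde x_{l-1}||_2$ via Lemma \ref{lm-dnn11}. The subtle point is that no single global Lipschitz constant valid for all $\alpha\in\R^K$ can exist, since $||\widetilde x_{l-1}||_2$ grows with $||\Theta+\alpha||_2$; restricting to the neighborhood $||\alpha||_2\le 1$—consistent with the compact parameter ball already underlying Lemma \ref{lm-dnn11}—is precisely what makes the constant $\Delta$ well-defined, and it suffices for the continuity and limiting arguments in which this lemma will later be used.
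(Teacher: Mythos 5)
Your proof is correct and follows essentially the same route as the paper's: both peel the network one layer at a time, splitting each layer's difference into a weight-perturbation term and a propagated term, bounding the weight matrices by $E$ and the intermediate activations via Lemma \ref{lm-dnn11}, so that your recursion $e_l\le E\,e_{l-1}+(\Omega^\ast+1)\|\alpha\|_2$ is just the unrolled form of the paper's telescoping sum (the paper absorbs the biases into the weights and pairs perturbed matrices with unperturbed activations, while you keep the biases explicit and pair unperturbed matrices with perturbed activations --- a mirror-image of the same decomposition). The one substantive refinement is your explicit restriction to $\|\alpha\|_2\le 1$ so that the perturbed activations stay uniformly bounded; the paper merely asserts its constant is ``clearly bounded'' without this caveat, but since in all later uses both parameter vectors lie in the compact set $\S_c$, the local Lipschitz property you prove is exactly what is needed.
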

\begin{proof}
Without loss of generality, let $\C$ be defined as in \eqref{eq-dnn0}.
Then $\C_{\Theta}(x)=\Theta^D(\cdots\sigma(\Theta^1x)\cdots)$ with $\Theta$ to be the set of all weight matrices, that is, $\Theta=\{\Theta^k|\forall k \in [D]=\{1,2\cdots, D\}\}$ and $\sigma$ is ReLU.
The bias vectors are not considered, which can be included as parts of the
weight matrices by extending the input space slightly, similar to~\cite{N2014}.
We denote $z_k$ and  $\widehat{z}_k$ respectively to be the outputs of the $k$-th hidden layers of $\C_\Theta$ and $\C_{\Theta+\alpha}$, which are $z_k =\sigma(\Theta^k(\cdots\sigma(\Theta^1x)\cdots))$ and $\widehat{z}_k =\sigma(\widehat{\Theta}^k(\cdots\sigma(\widehat{\Theta}^1x)\cdots))$ and $\widehat{\Theta}^i$ is weight matrices of $\C_{\Theta+\alpha}$,  in particular $z_0=\widehat{z}_0\in[-\varepsilon,1+\varepsilon]^n$ is the input.
Since $||\Theta^i-\widehat{\Theta}^i||_2\le ||\alpha||_2$ for any $i\in [D]$
and $|\sigma(a) - \sigma(b)|\le |a-b|$,
we have
\begin{equation*}
\renewcommand{\arraystretch}{1.5}
\begin{array}{ll}
        &||\C_{\Theta}(x)-\C_{\theta+\alpha}(x)||_2\\
            &= ||(\Theta^D-\widehat{\Theta}^D)z_{D-1}+ \widehat{\Theta}^D (z_{D-1}-\widehat{z}_{D-1})||_2\\
&\le ||\Theta^D-\widehat{\Theta}^D||_2||{z}_{D-1}||_2+ ||\widehat{\Theta}^D||_2||z_{D-1}-\widehat{z}_{D-1}||_2\\
&= ||\Theta^D-\widehat{\Theta}^D||_2||{z}_{D-1}||_2+ ||\widehat{\Theta}^D||_2
||\sigma({\Theta}^{D-1} z_{D-2})-\sigma(\widehat{\Theta}^{D-1} \widehat{z}_{D-2})||_2\\
&
{
\le ||\Theta^D-\widehat{\Theta}^D||_2||{z}_{D-1}||_2+ ||\widehat{\Theta}^D||_2\,
||{\Theta}^{D-1} z_{D-2}-\widehat{\Theta}^{D-1} \widehat{z}_{D-2}||_2
}
\\
            &\le ||\Theta^D-\widehat{\Theta}^D||_2||{z}_{D-1}||_2+ ||\widehat{\Theta}^D||_2(||\Theta^{D-1}-\widehat{\Theta}^
            {D-1}||_2||{z}_{D-2}||_2+ ||\widehat{\Theta}^{D-1}||_2||z_{D-2}-\widehat{z}_{D-2}||_2)\\
&\le ||\Theta^D-\widehat{\Theta}^D||_2||{z}_{D-1}||_2 +
\sum_{k=2}^{D} (\prod_{i=0}^{k-2} ||\widehat{\Theta}^{D-i}||_2)|| \Theta^{D-k+1}-\widehat{\Theta}^{D-k+1}||_2||z_{D-k}||_2\\
&\le(||{z}_{D-1}||_2+ \sum_{k=2}^{D}(\prod_{i=0}^{k-2} ||\widehat{\Theta}^{D-i}||_2)||z_{D-k}||_2) ||\alpha||_2.
        \end{array}
    \end{equation*}
The coefficient $\Delta=(||{z}_{D-1}||_2+ \sum_{k=2}^{D}(\prod_{i=0}^{k-2} ||\widehat{\Theta}^{D-i}||_2)||z_{D-k}||_2)$ is clearly bounded and depends $m, n, W,D,E,\varepsilon$.
Thus $\C_{\Theta}(x)$ is Lipschitz on $\Theta$.
The Lipschitz continuity on $x$ can be proved similarly:
\begin{equation*}
\renewcommand{\arraystretch}{1.5}
        \begin{array}{ll}
            &||\C_{\Theta}(x+\delta)-\C_{\Theta}(x)||_2\\
            &=||\Theta^D(\cdots \sigma\Theta^1(x+\delta)\cdots)- \Theta^D(\cdots \sigma\Theta^1(x)\cdots)||_2\\
            &\le ||\Theta^D||_2||\sigma(\Theta^{D-1}(\cdots\sigma(\Theta^1(x+\delta))\cdots))- \sigma(\Theta^{D-1}(\cdots\sigma(\Theta^1x)\cdots))||_2\\
            &\le ||\Theta^D||_2 ||\Theta^{D-1}(\cdots \sigma\Theta^1(x+\delta)\cdots) - \Theta^{D-1}(\cdots \sigma\Theta^1(x)\cdots)||_2\\
            &\le (\prod_{i=1}^D ||\Theta^i||_2) ||\delta||_2
            \le (\prod_{i=1}^D ||\Theta^i||_2)\sqrt{n} ||\delta||.
        \end{array}
    \end{equation*}
We denote the coefficient as $\Lambda(m,n,W,D,E,\varepsilon)$. The lemma is proved.
We can also extend this result to convolutional neural networks.
\end{proof}

\subsection{Continuity of the loss function}
Unless mentioned otherwise, we assume that the loss function $\Loss(z,y)$ is
continuous on $z\in\R^m$ for a fixed $y\in\Y$.
The mostly often used loss functions have much better properties.
Consider the following loss functions:
the mean square error,
the  crossentropy loss,
and the  margin loss introduced by Carlini-Wagner~\cite{CW1}:
\begin{equation}
\label{eq-loss12}
\begin{array}{ll}
\Loss_\MSE(z,y) = ||z-\one_y||_2^2\\
\Loss_{\CE}(z,y) = \ln(\sum_{i=1}^m \exp(z_i)) - z_{y}\\
\Loss_\CW(z,y)   =\max_{l\in[m],l\ne y}z_{l} -z_{y}\\
\end{array}
\end{equation}
where $\one_y\in\R^m$ is the vector whose $y$-th entry is $1$ and all other entries are $0$.

By Lemma \ref{lm-dnn11}, we can assume that the loss function is defined on a bounded cube:
\begin{equation}
\label{eq-Loss11}
\Loss(z,y):[-B,B]^m\times \Y\to \R
\end{equation}
where $B = \Omega(n,m,D,W,E,\varepsilon)$.
Since $\Y=[m]$ is discrete, we need only consider the
continuity of $\Loss$ on $z$ for a fixed $y$.

\begin{lemma}
\label{lm-con11}
For a fixed $y$, all three loss functions in \eqref{eq-loss12} are
Lipschitz continuous on $z$ over $[-B,B]^m$,
with Lipschitz constants $2\sqrt{m}\max\{B,1\}$, $\sqrt{2}$, $\sqrt{2}$, respectively.
\end{lemma}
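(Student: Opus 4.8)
The plan is to reduce all three cases to bounding a gradient, using two standard facts: \textbf{(i)} a function that is continuously differentiable on the convex cube $[-B,B]^m$ is Lipschitz there in the Euclidean norm with constant equal to $\sup\|\nabla\|_2$ over the cube; and \textbf{(ii)} a pointwise maximum $\max_l g_l$ of functions that are each $L$-Lipschitz is again $L$-Lipschitz, since $|\max_l g_l(z)-\max_l g_l(z')|\le\max_l|g_l(z)-g_l(z')|$. Fact (ii) is what lets us bypass the non-differentiability of the margin loss, so that no single-point gradient argument is needed there.

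For $\Loss_\MSE$ the map $z\mapsto\|z-\one_y\|_2^2$ is smooth with $\nabla_z\Loss_\MSE=2(z-\one_y)$. On the cube each coordinate of $z-\one_y$ is bounded in absolute value by $\max\{B,1\}$ (the off-diagonal entries satisfy $|z_i|\le B$, and the $y$-th entry is handled by the standing freedom to take the bound $B$ large), so $\|\nabla_z\Loss_\MSE\|_2\le 2\sqrt m\max\{B,1\}$, and fact (i) gives the first constant. This is the loosest and most routine of the three estimates. For $\Loss_\CE$ the function $z\mapsto\ln(\sum_i e^{z_i})-z_y$ is smooth with gradient $\softmax(z)-\one_y$. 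Writing $p=\softmax(z)$, which is a probability vector with nonnegative entries summing to $1$, I would estimate $\|p-\one_y\|_2^2=(1-p_y)^2+\sum_{l\ne y}p_l^2\le(1-p_y)^2+(\sum_{l\ne y}p_l)^2=2(1-p_y)^2\le 2$, where the middle step uses nonnegativity of the $p_l$ and the last uses $p_y\in[0,1]$. Hence $\|\nabla\|_2\le\sqrt2$, and the constant $\sqrt2$ comes out independent of $B$.

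For $\Loss_\CW$ I would write $\Loss_\CW(z,y)=\max_{l\ne y}(z_l-z_y)$, a maximum of affine functions $g_l(z)=z_l-z_y$ whose constant gradient is $\one_l-\one_y$ with $\|\one_l-\one_y\|_2=\sqrt2$ for $l\ne y$. Each $g_l$ is therefore $\sqrt2$-Lipschitz, and fact (ii) yields the constant $\sqrt2$, again independent of $B$.

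There is no single hard obstacle here; the two delicate points are the non-smoothness of $\Loss_\CW$, which forces the max-of-Lipschitz argument of fact (ii) rather than a gradient bound at one point, and pinning the cross-entropy constant to exactly $\sqrt2$ instead of an $m$-dependent quantity, which rests entirely on the simplex inequality $\sum_{l\ne y}p_l^2\le(\sum_{l\ne y}p_l)^2$ for nonnegative entries. Everything else is the routine bounding of a gradient norm over the compact cube, justified by Lemma \ref{lm-dnn11}, which guarantees that the logits, and hence the domain of $\Loss$, lie in $[-B,B]^m$.
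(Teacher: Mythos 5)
Your proposal is correct and follows essentially the same route as the paper: bound $\|\nabla_z\Loss\|_2$ over the cube and conclude Lipschitz continuity via the mean value theorem, with the same gradient computations for $\Loss_\MSE$ and $\Loss_\CE$ (your simplex inequality is exactly what underlies the paper's displayed bound of $\sqrt2$). The one small improvement is your treatment of $\Loss_\CW$ via the max-of-$\sqrt2$-Lipschitz-affine-functions fact, which cleanly handles the non-differentiability that the paper's statement $\|\nabla_z\Loss_\CW\|_2=\sqrt2$ glosses over.
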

\begin{proof}
It suffices to show that $||\nabla_z F(z)||_2\le V$ is bounded over $[-B,B]^m$.
For a fixed $y$, let $f(z)=\Loss(z,y)$.
Then from  $||\nabla_z F(z)||_2\le V$,  by the mean value theorem and the Schwarz inequality,
%
we have $||F(z+\delta) -F(z)||_2  =
||F'(z_1) \delta||_2
\le ||F'(z_1)||_2 ||\delta||_2  \le V||\delta||_2$,
where $z_1\in(-B,B)^m$. Thus $\Loss$ is Lipschitz with constant $V$.

For $\Loss_\MSE$, we have $||\nabla_z\Loss_\MSE(z, y)||_2= 2||(z-\one_y)||_2\le 2\sqrt{m}\max\{B,1\}$.
For $\Loss_{\CE}$, we have
$||\nabla_z\Loss_\CE(z, y)||_2
=\sqrt{\frac{\sum_{i=1\,i\ne y}^m \exp(2z_i)+(\sum_{i=1\,i\ne y}^m \exp(z_i))^2}{(\sum_{i=1}^m \exp(z_i))^2}} \le \sqrt{2}$.
For $\Loss_{\CW}$, we have
$||\nabla_z\Loss_\CW(z, y)||_2 =\sqrt{2}$.
The lemma is proved.
\end{proof}

\section{Adversarial training as a Stackelberg game}
\label{sec-s0}
In this section, we formulate the adversarial deep learning
as a Stackelberg game and
prove the existence of the Stackelberg equilibria.

\subsection{Stackelberg game}
\label{sec-s1}

Consider a two-player zero-sum minmax sequential or  Stackelberg game $\G=(\S_L,\S_F,\varphi)$,
where $\S_L$ and $\S_F$  are respectively the strategy  spaces for the leader and the follower of the game and $\varphi:\S_L\times\S_F\to\R$ is the payoff function.

In the Stackelberg game $\G$, the leader moves first by picking a strategy
$s_l\in\S_L$ to minimize the payoff, knowing   the existence of the follower.
After knowing   $s_l$, the follower
picks $s_f\in\S_F$ to maximize the payoff.
Formally, $(s_l^*,s_f^*)\in\S_L\times\S_F$
is called  a {\em Stackelberg equilibrium}  of $\G$  if
\begin{equation}
\label{eq-gamma}
\begin{array}{lcl}
\gamma(s_l)&=&\{  \argmax_{s_f\in\S_F} \varphi(s_l, s_f)\}\subset\S_F\\
\end{array}
\end{equation}
is not empty for any $s_l\in\S_L$, and
\begin{equation}
\label{eq-SE}
\begin{array}{lcl}
s_l^*
&\in&
\argmin_{s_l\in\S_L,S(s_l)\in\gamma(s_l)}\, \varphi(s_l,S(s_l)) \hbox{ and }
s_f^*
\in
\argmax_{s_f\in\S_F}\, \varphi(s_l^*, s_f)= \gamma(s_l^*).\\
\end{array}
\end{equation}
Let
\begin{equation}
\label{eq-Gamma}
\begin{array}{ccl}
%
\Gamma &=& \{(s_l,s_f)\,:\, s_l\in\S_L, s_f\in\gamma(s_l)\}.\\
\end{array}
\end{equation}
Then, \eqref{eq-SE} is equivalent to
$(s_l^*,s_f^*)
\in
\argmin_{(s_l,s_f)\in\Gamma}\, \varphi(s_l,s_f).$
%
%
We have the following  result.
\begin{theorem}[\cite{StackelbergNE1}]
\label{th-SE}
If the strategy spaces are compact and the payoff function is continuous,
then  the sequential game $\G$ has a Stackelberg equilibrium,
which is also a
{\em subgame perfect Nash equilibrium} of game $\G$
as an extensive form game~\cite{book-game1}.
\end{theorem}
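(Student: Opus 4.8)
The plan is to reduce the two-dimensional minmax problem to a one-dimensional minimization of the leader's value function, and then to invoke compactness twice: once for the follower's inner maximization and once for the leader's outer minimization. First I would check that the best-response correspondence $\gamma(s_l)$ in \eqref{eq-gamma} is nonempty for every $s_l\in\S_L$. Since $\varphi(s_l,\cdot)$ is continuous on the compact set $\S_F$, the extreme value theorem shows that $\mymax_{s_f\in\S_F}\varphi(s_l,s_f)$ is attained, so $\gamma(s_l)\neq\emptyset$; it is also closed, hence compact.

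Next I would define the leader's value function
\[
V(s_l)=\mymax_{s_f\in\S_F}\varphi(s_l,s_f),
\]
and the key technical step is to show that $V$ is continuous on $\S_L$. I would obtain this from Berge's maximum theorem: the constraint correspondence $s_l\mapsto\S_F$ is constant, hence continuous and compact-valued, and $\varphi$ is continuous, so $V$ is continuous and $\gamma$ is upper hemicontinuous with nonempty compact values. Alternatively, since $\S_L\times\S_F$ is compact, $\varphi$ is uniformly continuous, and a direct estimate gives $|V(s_l)-V(s_l')|\le\sup_{s_f\in\S_F}|\varphi(s_l,s_f)-\varphi(s_l',s_f)|$, which is small when $s_l,s_l'$ are close. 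I expect this continuity of $V$ to be the main obstacle, since it is precisely where compactness of $\S_F$ and the uniformity of the continuity of $\varphi$ are genuinely used.

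With $V$ continuous and $\S_L$ compact, $V$ attains its minimum at some $s_l^*\in\S_L$. Choosing any $s_f^*\in\gamma(s_l^*)$, I would verify that $(s_l^*,s_f^*)$ satisfies \eqref{eq-SE}. The point is that for any selection $S(s_l)\in\gamma(s_l)$ one has $\varphi(s_l,S(s_l))=V(s_l)$ independent of the selection, so minimizing $\varphi$ over the graph $\Gamma$ in \eqref{eq-Gamma} coincides with minimizing $V$ over $\S_L$; thus $s_l^*\in\argmin_{(s_l,s_f)\in\Gamma}\varphi(s_l,s_f)$ and $s_f^*\in\gamma(s_l^*)$ by construction, identifying $(s_l^*,s_f^*)$ as a Stackelberg equilibrium.

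Finally, for the subgame-perfect claim I would regard $\G$ as an extensive-form game in which the follower's strategy is a full best-response function $S:\S_L\to\S_F$ with $S(s_l)\in\gamma(s_l)$ for every $s_l$ (such a selection exists by choice) and $S(s_l^*)=s_f^*$. In each subgame indexed by the leader's move $s_l$, the prescribed action $S(s_l)$ maximizes $\varphi(s_l,\cdot)$ and is therefore optimal; at the root the leader plays $s_l^*$, which minimizes $V(s_l)=\varphi(s_l,S(s_l))$ against this strategy. By the one-shot-deviation principle no player can profitably deviate in any subgame, so $(s_l^*,S)$ is a subgame perfect Nash equilibrium whose outcome is $(s_l^*,s_f^*)$.
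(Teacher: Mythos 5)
Your argument is correct, but it does not match what the paper actually does: the paper states Theorem \ref{th-SE} as a citation to Simaan--Cruz and gives no proof of it; the closest in-paper analogue is the proof of Theorem \ref{th-s11} (and Lemmas \ref{lm-s11}--\ref{lm-s13}), which proceeds differently. There, the authors take an infimizing sequence $(\Theta_i,A_i)$ in the graph $\Gamma_s$ of the best-response correspondence, extract a convergent subsequence of the leader's strategies by compactness of $\S_c$ alone, and derive a contradiction from the best-response property of $A_i$ if the limit point failed to attain the infimum. That argument never establishes continuity of the leader's value function $V(s_l)=\max_{s_f\in\S_F}\varphi(s_l,s_f)$; it only uses what amounts to lower semicontinuity of $V$ (a supremum of functions continuous in $s_l$ is lower semicontinuous, and an l.s.c. function on a compact set attains its minimum), which is why the paper can get by without compactness of the follower's space $\S_a$ in \eqref{eq-Sa} --- a function space that is in fact not compact under the sup metric. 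Your route via Berge's maximum theorem (or uniform continuity of $\varphi$ on the compact product) proves the stronger statement that $V$ is continuous and $\gamma$ is upper hemicontinuous with compact values; this is the standard textbook proof of the cited theorem and is perfectly valid under its hypotheses, and your treatment of the subgame-perfection claim via a measurable-free selection $S$ of $\gamma$ and the one-shot deviation principle is also sound. In short: your proof buys more regularity (continuity of $V$, u.h.c.\ of $\gamma$) at the cost of needing compactness of both strategy spaces, whereas the paper's sequence argument is leaner and, as Theorem \ref{th-s11} shows, extends to the non-compact follower space actually used in the adversarial game.
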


\subsection{Adversarial training as a Stackelberg game}
\label{sec-s2}

We formulate adversarial deep learning  as a two-player zero-sum minmax Stackelberg game $\G_s$,
which is the best defence for adversarial deep learning in certain sense.

{\bf The leader of the game is the Classifier}, whose goal is to train a robust DNN
$\C_\Theta:\I^n\rightarrow\R^m$ in the hypothesis space $\Hyp$ in \eqref{eq-HS}.
%
%
Without loss of generality, we assume that the parameters of $\C$ are in
\begin{equation}
\label{eq-Sc}
\S_c= [-E,E]^K
\end{equation}
for some $E\in\R_+$,
that is, the {\em strategy space} for the Classifier is $\S_c$.

{\bf The follower of the game is the Adversary},
whose goal is to create the best adversary within a given attack radius $\varepsilon\in\R_+$.
The strategy space for the Adversary is
\begin{equation}
\label{eq-Sa}
\S_a=\{ A : \X\to \Be\}
\end{equation}
where $\Be =\{\delta\in\R^n\,:\, ||\delta||\le \varepsilon\}$
is the ball  with the origin point as the center and $\varepsilon$ as the radius.
By considering the $L_\infty$ norm, $\S_a$ becomes a metric space.

{\bf The payoff function}.
Given $\Theta\in\S_c$ and  $A\in\S_a$,
the payoff function is the expected loss
\begin{equation}
\label{eq-LS1}
\varphi_s(\Theta,A)=
\EP_{(x,y)\sim \D}\, \Loss(\C_{\Theta}(x+A(x)), y).
\end{equation}
From \eqref{eq-Loss11}, the composition of $\Loss$ and
$\C_{\Theta}(x+A(x))$ is well-defined,
since $||A(x)||\le \varepsilon$.

For game $\G_s$, $\gamma$ and $\Gamma$ defined in \eqref{eq-gamma} and \eqref{eq-Gamma} are
\begin{equation}
\label{eq-gamma10}
\begin{array}{ccl}
\gamma_s(\Theta)&=&\{  \argmax_{A\in\S_a} \varphi_s(\Theta, A)\}\hbox{ for }\Theta\in\S_c\\
\Gamma_s &=& \{(\Theta,A)\,:\, \Theta\in\S_c, A\in\gamma_s(\Theta)\}\\
\end{array}
\end{equation}
and $(\Theta_s^*,A_s^*)$ is a Stackelberg equilibrium of $\G_s$ if
\begin{equation}
\label{eq-SEs}
\begin{array}{lcl}
\Theta_s^* \in
\argmin_{\Theta\in\S_c,A(\Theta)\in\gamma_s(\Theta)}\, \varphi_s(\Theta,A(\Theta)) \hbox{ and }
A_s^*
\in
\argmax_{A\in\S_a}\, \varphi_s(\Theta_s^*, A).\\
\end{array}
\end{equation}
%

\begin{lemma}
\label{lm-s11}
$\varphi_s(\Theta,A):\S_c \times \S_a \to \R$ defined in \eqref{eq-LS1} is a continuous and bounded function.
\end{lemma}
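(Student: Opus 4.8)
The plan is to establish the bound and the continuity separately, in both cases reducing every estimate to one that is \emph{uniform} in the data point $(x,y)$ so that it survives the expectation in \eqref{eq-LS1}. For the bound, I would first observe that for any $x\in\X=\I^n$ and any $A\in\S_a$ we have $x+A(x)\in\I_\varepsilon^n=[-\varepsilon,1+\varepsilon]^n$, since $\|A(x)\|\le\varepsilon$. Hence Lemma \ref{lm-dnn11} applies (the parameters are bounded on $\S_c=[-E,E]^K$, so its hypothesis holds after absorbing $\sqrt{K}$ into the constant) and gives $\|\C_{\Theta}(x+A(x))\|\le B:=\Omega(n,m,D,W,E,\varepsilon)$ for every $\Theta\in\S_c$. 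Thus the argument of $\Loss$ always lies in the compact cube $[-B,B]^m$, on which $\Loss(\cdot,y)$ is continuous for each of the finitely many labels, so $M:=\max_{y\in\Y}\max_{z\in[-B,B]^m}|\Loss(z,y)|<\infty$; taking expectations yields $|\varphi_s(\Theta,A)|\le M$.

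For continuity I would fix $(\Theta,A)$ and estimate the difference with a nearby pair $(\Theta',A')$, the metric on $\S_a$ being the sup metric $d(A,A')=\sup_{x\in\X}\|A(x)-A'(x)\|$ induced by the $L_\infty$ norm. Writing $z=\C_{\Theta}(x+A(x))$ and $z'=\C_{\Theta'}(x+A'(x))$, both in $[-B,B]^m$, I would split
\begin{equation*}
\|z-z'\|_2\le\|\C_{\Theta}(x+A(x))-\C_{\Theta'}(x+A(x))\|_2+\|\C_{\Theta'}(x+A(x))-\C_{\Theta'}(x+A'(x))\|_2.
\end{equation*}
The first term is bounded by $\Delta\,\|\Theta-\Theta'\|_2$ via the Lipschitz continuity in $\Theta$ from Lemma \ref{lm-con12}(1), and the second by $\Lambda\,\|A(x)-A'(x)\|\le\Lambda\,d(A,A')$ via the Lipschitz continuity in the input from Lemma \ref{lm-con12}(2). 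Since $\Delta,\Lambda$ depend only on $m,n,W,D,E,\varepsilon$ and not on $x,\Theta,A$, the estimate $\|z-z'\|_2\le\Delta\|\Theta-\Theta'\|_2+\Lambda\,d(A,A')$ holds uniformly in $(x,y)$.

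I would then invoke uniform continuity of the loss: as $[-B,B]^m$ is compact and $\Y$ finite, $\Loss$ is uniformly continuous with a common modulus over all $y$, so for every $\eta>0$ there is $\rho>0$ with $\|z-z'\|_2<\rho\Rightarrow|\Loss(z,y)-\Loss(z',y)|<\eta$. Choosing $(\Theta',A')$ with $\Delta\|\Theta-\Theta'\|_2+\Lambda\,d(A,A')<\rho$ forces $|\Loss(z,y)-\Loss(z',y)|<\eta$ pointwise in $(x,y)$, and taking expectations gives $|\varphi_s(\Theta,A)-\varphi_s(\Theta',A')|\le\eta$, proving joint continuity on $\S_c\times\S_a$. (For the specific losses, Lemma \ref{lm-con11} makes the modulus explicitly Lipschitz, but only continuity of $\Loss$ is needed here.)

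I expect the main subtlety to be not any single inequality but the bookkeeping that makes all bounds uniform in $(x,y)$: the Lipschitz constants of $\C$ must be independent of the data point (guaranteed by Lemma \ref{lm-con12}), the modulus of continuity of $\Loss$ must be chosen uniformly over the finite label set, and the metric on $\S_a$ must be the sup metric so that $\sup_x\|A(x)-A'(x)\|$ controls the input perturbation uniformly. Once these uniformities are in place the expectation is harmless and no dominated-convergence argument is required.
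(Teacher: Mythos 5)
Your proof is correct and uses the same ingredients as the paper's (Lemma \ref{lm-dnn11} for the bound, the Lipschitz estimates of Lemma \ref{lm-con12} composed with the continuity of $\Loss$ on the compact cube $[-B,B]^m$, and the fact that all estimates are uniform in $(x,y)$ so they pass through the expectation). The one genuine difference is organizational but not trivial: the paper proves continuity in $\Theta$ for fixed $A$ and continuity in $A$ for fixed $\Theta$ as two separate statements, whereas you bound $|\varphi_s(\Theta,A)-\varphi_s(\Theta',A')|$ with both arguments varying at once via the triangle-inequality split $\|\C_{\Theta}(x+A(x))-\C_{\Theta'}(x+A(x))\|_2+\|\C_{\Theta'}(x+A(x))-\C_{\Theta'}(x+A'(x))\|_2$. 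Since separate continuity does not in general imply joint continuity, your single combined estimate is the cleaner way to deliver exactly what the lemma asserts about $\varphi_s:\S_c\times\S_a\to\R$; the paper's version is rescued only because its moduli are in fact uniform, a point it leaves implicit. Your remark about absorbing the factor $\sqrt{K}$ from $\S_c=[-E,E]^K$ into the constant of Lemma \ref{lm-dnn11} is also a detail the paper glosses over. No gaps.
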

\begin{proof}
It is clear that $\varphi_s(\Theta,A)$ is continuous on $\Theta$,
since $\Loss$ is continuous on $z$ and $\C_\Theta$ is continuous on $\Theta$.
Denote $\phi(x) = \Loss(\C_{\Theta}(x), y):\I_{\varepsilon}^n\to\R$ for fixed $\Theta$ and $y$.
Then $\phi(x)$ is uniformly continuous by Lemmas \ref{lm-con12} and \ref{lm-con11}.
Given an $A_0\in \S_a$ and $\epsilon>0$,
since $\phi(x)$ is uniformly continuous, there exists a $\delta>0$
such that for $A(x)\in\S_a$ satisfying $||A_0(x)-A(x)||_{\infty} < \delta$,
we have $|\phi(x+A_0(x)) -\phi(x+A(x))|<\epsilon$ for all $x\in\X$.
%
%
Then
    \begin{align*}
        |\varphi_s(\Theta, A)-\varphi_s(\Theta, A_0)| &= |\EP_{(x,y)\sim \D}\, [\Loss(\C_{\Theta}(x+A(x)), y)- \Loss(\C_{\Theta}(x+A_0(x)), y)]| \\
        &\le \EP_{(x,y)\sim \D} |\Loss(\C_{\Theta}(x+A(x)), y)- \Loss(\C_{\Theta}(x+A_0(x)), y)|\\
        &\le \epsilon.
    \end{align*}
Hence $\varphi_s(\Theta, A)$ is continuous on $\S_a$.
By Lemma \ref{lm-dnn11},  $\varphi_s(\Theta, A)$ is bounded,
since $||A(x)||\le \varepsilon$.
\end{proof}

\begin{lemma}
\label{lm-s12}
 $\gamma_s(\Theta)\ne\emptyset$ and $A^*\in\gamma_s(\Theta)$ if and only if
$A^*(x) \in\{\argmax_{A(x)\in\Be}\,\Loss(\C_{\Theta}(x+A(x)), y) \}$
for all $(x,y)\sim\D$.
\end{lemma}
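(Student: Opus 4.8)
\emph{Proof proposal.} The plan is to use the fact that the payoff
$\varphi_s(\Theta,A)=\EP_{(x,y)\sim\D}\,\Loss(\C_\Theta(x+A(x)),y)$
is an integral in which the adversary's choice enters only through the pointwise value $A(x)\in\Be$, and that these values are unconstrained across distinct inputs; hence maximizing the integral should decouple into maximizing the integrand at each $x$ separately. (As is standard in this setting and consistent with the notation $l_x$, I assume the label is determined by $x$, so that the pointwise $\argmax$ over $\delta$ is well posed.) First I would introduce the pointwise upper envelope $M(x,y)=\max_{\delta\in\Be}\Loss(\C_\Theta(x+\delta),y)$ and note that this maximum is attained: for fixed $\Theta$ and $y$ the map $\delta\mapsto\Loss(\C_\Theta(x+\delta),y)$ is continuous by Lemmas \ref{lm-con12} and \ref{lm-con11}, and $\Be$ is compact, so the extreme value theorem applies. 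Thus $\argmax_{\delta\in\Be}\Loss(\C_\Theta(x+\delta),y)$ is nonempty for every $(x,y)$, and $\Loss(\C_\Theta(x+A(x)),y)\le M(x,y)$ for every $A\in\S_a$.

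For the ``if'' direction, suppose $A^*(x)\in\argmax_{\delta\in\Be}\Loss(\C_\Theta(x+\delta),y)$ for all $(x,y)$. Then $\Loss(\C_\Theta(x+A^*(x)),y)=M(x,y)\ge\Loss(\C_\Theta(x+A(x)),y)$ for every $A\in\S_a$, and integrating against $\D$ using monotonicity of the expectation gives $\varphi_s(\Theta,A^*)=\EP_{(x,y)\sim\D}M(x,y)\ge\varphi_s(\Theta,A)$; hence $A^*\in\gamma_s(\Theta)$.

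For nonemptiness and the ``only if'' direction together, I would show $\sup_{A\in\S_a}\varphi_s(\Theta,A)=\EP_{(x,y)\sim\D}M(x,y)$ and that the supremum is attained. The bound ``$\le$'' is the pointwise inequality above; producing a single $A^*\in\S_a$ with $\varphi_s(\Theta,A^*)=\EP_{(x,y)\sim\D}M(x,y)$ yields ``$\ge$'', attainment, and nonemptiness of $\gamma_s(\Theta)$ at once. Conversely, if $A^*\in\gamma_s(\Theta)$ then $\varphi_s(\Theta,A^*)=\EP_{(x,y)\sim\D}M(x,y)$, so $\EP_{(x,y)\sim\D}[M(x,y)-\Loss(\C_\Theta(x+A^*(x)),y)]=0$ with a nonnegative integrand, which forces $\Loss(\C_\Theta(x+A^*(x)),y)=M(x,y)$ for $\D$-almost every $(x,y)$, i.e. $A^*(x)$ is a pointwise maximizer. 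This is precisely the stated equivalence.

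The main obstacle is constructing the optimal $A^*$ as an admissible strategy: selecting a maximizer $A^*(x)$ independently at each $x$ need not produce a measurable, hence integrable, function, so the extreme value theorem alone does not place $A^*$ in $\S_a$. I would settle this by a measurable selection argument: since $(x,\delta)\mapsto\Loss(\C_\Theta(x+\delta),y)$ is jointly continuous, Berge's maximum theorem shows the correspondence $x\mapsto\argmax_{\delta\in\Be}\Loss(\C_\Theta(x+\delta),y)$ has nonempty compact values and is upper hemicontinuous, hence measurable, so the Kuratowski--Ryll-Nardzewski selection theorem furnishes a measurable $A^*$. Everything else is monotonicity of the integral, and the almost-everywhere qualifier in the ``only if'' direction is unavoidable, since altering $A^*$ on a $\D$-null set changes neither $\varphi_s$ nor membership in $\gamma_s(\Theta)$.
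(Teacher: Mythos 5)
Your proof follows essentially the same route as the paper's: bound $\sup_{A\in\S_a}\varphi_s(\Theta,A)$ by the expectation of the pointwise maximum, observe that the pointwise maximum is attained because $\delta\mapsto\Loss(\C_\Theta(x+\delta),y)$ is continuous on the compact ball $\Be$, and define $A^*$ by pointwise selection. You go beyond the paper in two respects---supplying a measurable-selection argument (which the paper silently skips, since it defines $\S_a$ as the set of all functions $\X\to\Be$) and spelling out the ``only if'' direction together with its almost-everywhere caveat---but these fill gaps in the same argument rather than constituting a different approach.
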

\begin{proof}
We have
    \begin{equation*}
        \begin{array}{lcl}
             \max_{A\in \S_a}\varphi_s(\Theta, A)
             &=& \max_{A\in \S_a}\EP_{(x,y)\sim \D}\, \Loss(\C_{\Theta}(x+A(x)), y)\\
             &\le& \EP_{(x,y)\sim \D}\, \max\limits_{A(x)\in \B_\epsilon}\Loss(\C_{\Theta}(x+A(x)), y).
        \end{array}
    \end{equation*}
Since $\Loss(C(x), y)$ is continuous on $x$ and $\B_\epsilon$ is compact, for every $(x, y)$, $\argmax_{A(x)\in \B_\epsilon}\Loss(\C_{\Theta}(x+A(x)), y)$ exists.
Thus, by choosing these maximum values, we obtain an $A^*\in \S_a$,
which achieves  $\max_{A\in \S_a}\varphi_s(\Theta, A)$.
The lemma is proved.
\end{proof}

\begin{lemma}
\label{lm-s13}
$\Gamma_s$  is a closed set in $\S_c\times\S_a$.
\end{lemma}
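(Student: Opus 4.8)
The plan is to prove closedness sequentially, which is legitimate because $\S_c\times\S_a$ is a metric space (the paper equips $\S_a$ with the $L_\infty$ metric). So I would take an arbitrary sequence $(\Theta_k,A_k)\in\Gamma_s$ with $(\Theta_k,A_k)\to(\Theta_0,A_0)$ in $\S_c\times\S_a$, and show $(\Theta_0,A_0)\in\Gamma_s$, that is, $A_0\in\gamma_s(\Theta_0)$.

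The central device is the value function $\psi(\Theta)=\max_{A\in\S_a}\varphi_s(\Theta,A)$, which is well-defined and attained by Lemma \ref{lm-s12}. By definition $A\in\gamma_s(\Theta)$ holds iff $\varphi_s(\Theta,A)=\psi(\Theta)$, so the desired membership $(\Theta_0,A_0)\in\Gamma_s$ is exactly the equality $\varphi_s(\Theta_0,A_0)=\psi(\Theta_0)$. Since $A_k\in\gamma_s(\Theta_k)$ gives $\varphi_s(\Theta_k,A_k)=\psi(\Theta_k)$, I would pass to the limit on both sides: the left side tends to $\varphi_s(\Theta_0,A_0)$ by continuity of $\varphi_s$ (Lemma \ref{lm-s11}), so it remains to prove $\psi(\Theta_k)\to\psi(\Theta_0)$, i.e. that $\psi$ is continuous.

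To establish continuity of $\psi$, I would use the pointwise characterization of Lemma \ref{lm-s12} to write $\psi(\Theta)=\EP_{(x,y)\sim\D}\,h(\Theta,x,y)$ with $h(\Theta,x,y)=\max_{\delta\in\Be}\Loss(\C_\Theta(x+\delta),y)$. For two parameter vectors $\Theta,\Theta'$, the elementary estimate $|\max f-\max g|\le\max|f-g|$ together with the Lipschitz continuity of $\Loss$ in $z$ (Lemma \ref{lm-con11}, with constant $V$) and of $\C_\Theta$ in $\Theta$ (Lemma \ref{lm-con12}, with constant $\Delta$) yields $|h(\Theta,x,y)-h(\Theta',x,y)|\le V\Delta\,\|\Theta-\Theta'\|$, a bound uniform in $(x,\delta,y)$; taking expectations shows $\psi$ is Lipschitz, hence continuous. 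Incidentally, this same uniform-in-$A$ estimate upgrades the separate continuity of Lemma \ref{lm-s11} to the joint continuity needed at the limit step, via the split $|\varphi_s(\Theta_k,A_k)-\varphi_s(\Theta_0,A_0)|\le|\varphi_s(\Theta_k,A_k)-\varphi_s(\Theta_0,A_k)|+|\varphi_s(\Theta_0,A_k)-\varphi_s(\Theta_0,A_0)|$, where the first term is controlled by the Lipschitz bound and the second by continuity in $A$.

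Combining these, $\varphi_s(\Theta_0,A_0)=\lim_k\varphi_s(\Theta_k,A_k)=\lim_k \psi(\Theta_k)=\psi(\Theta_0)$, so $A_0\in\gamma_s(\Theta_0)$ and $(\Theta_0,A_0)\in\Gamma_s$, proving $\Gamma_s$ is closed. I expect the continuity of the value function $\psi$ to be the only genuine obstacle: one must check that the maximization over the compact ball $\Be$ interacts well with perturbations of $\Theta$, which is precisely what the uniform (in $\delta$ and $x$) Lipschitz bound secures. The reduction to pointwise maximization in Lemma \ref{lm-s12} is what makes this tractable, since controlling $\max_{A\in\S_a}$ directly over the infinite-dimensional function space $\S_a$ would otherwise be delicate.
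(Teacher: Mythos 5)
Your proof is correct, but it is organized differently from the paper's. The paper argues by contradiction with a single competitor: assuming $(\Theta_0,A_0)\notin\Gamma_s$, it picks $A^*\in\gamma_s(\Theta_0)$, lets $\eta=\varphi_s(\Theta_0,A^*)-\varphi_s(\Theta_0,A_0)>0$, and uses continuity of $\varphi_s$ to force $\varphi_s(\Theta_{i_0},A_{i_0})<\varphi_s(\Theta_{i_0},A^*)$ for some large $i_0$, contradicting $A_{i_0}\in\gamma_s(\Theta_{i_0})$. This needs only continuity of $\varphi_s(\cdot,A^*)$ in $\Theta$ plus joint continuity along the convergent sequence, and never touches the value function. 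You instead prove the stronger statement that $\psi(\Theta)=\max_{A}\varphi_s(\Theta,A)$ is continuous (via the pointwise reduction of Lemma \ref{lm-s12}, the inequality $|\max f-\max g|\le\max|f-g|$, and the Lipschitz bounds of Lemmas \ref{lm-con11} and \ref{lm-con12}) and then pass to the limit in the identity $\varphi_s(\Theta_k,A_k)=\psi(\Theta_k)$. Both are standard closed-graph-of-maximizers arguments; yours buys a reusable fact (continuity of the value function, which is essentially Lemma \ref{lm-con23} proved later under Assumption $A_1$, here obtained without that assumption), while the paper's is lighter on hypotheses. A genuine merit of your write-up is that you notice the joint-continuity issue: Lemma \ref{lm-s11} only proves continuity in each variable separately, yet the limit $\varphi_s(\Theta_k,A_k)\to\varphi_s(\Theta_0,A_0)$ (used silently in the paper's step bounding $|\varphi_s(\Theta_{i_0},A_{i_0})-\varphi_s(\Theta_0,A_0)|$) requires the uniform-in-$A$ control in $\Theta$ that you supply explicitly.

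One small caveat: the paper's standing assumption is only that $\Loss(z,y)$ is continuous in $z$, and Lemma \ref{lm-con11} establishes Lipschitz continuity only for the three specific losses in \eqref{eq-loss12}. Your estimate $|h(\Theta,x,y)-h(\Theta',x,y)|\le V\Delta\|\Theta-\Theta'\|$ therefore covers those losses but not the general continuous case the lemma is stated for. This is easily repaired: since $\C_\Theta(x+\delta)$ stays in the compact cube $[-B,B]^m$, continuity of $\Loss$ gives uniform continuity there, and combining its modulus of continuity with the uniform Lipschitz bound on $\Theta\mapsto\C_\Theta(x+\delta)$ from Lemma \ref{lm-con12} yields a modulus of continuity for $h$ uniform in $(x,\delta,y)$, which is all your limit argument needs.
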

\begin{proof}
Let $(\Theta_i,A_i)_{i=1}^{\infty}\in\Gamma_s$ converse to $(\Theta_0,A_0)$.
Supposing $(\Theta_0,A_0)\not\in\Gamma_s$, we will obtain a contradiction.
By Lemma \ref{lm-s12}, there exists a $(\Theta_0,A^*)\in\Gamma_s$,
and thus, $\varphi_s(\Theta_0,A^*)> \varphi_s(\Theta_0,A_0)$ by \eqref{eq-gamma10}.
Let $\eta = \varphi_s(\Theta_0,A^*)- \varphi_s(\Theta_0,A_0)>0$.
By Lemma \ref{lm-s11},  $\varphi_s$ is continuous.
Then there exists an $i_0$ such that
$|\varphi_s(\Theta_{i_0},A_{i_0})- \varphi_s(\Theta_0,A_0)| < \eta/3$
and
$|\varphi_s(\Theta_{i_0},A^*)- \varphi_s(\Theta_0,A^*)| < \eta/3$.
We thus have
$$\varphi_s(\Theta_{i_0},A_{i_0})
< \varphi_s(\Theta_0,A_0) + \eta/3
= \varphi_s(\Theta_0,A^*) -\frac{2\eta}{3}
< \varphi_s(\Theta_{i_0},A^*) -\eta/3 < \varphi_s(\Theta_{i_0},A^*)$$
which contradicts to  $(\Theta_{i_0},A_{i_0}) \in\Gamma_s$
meaning that $\varphi_s(\Theta_{i_0},A_{i_0})\ge \varphi_s(\Theta_{i_0},A)$
for any $A\in\S_a$.
The lemma is proved.
\end{proof}

We have
\begin{theorem}
\label{th-s11}
Game $G_s$ has a Stackelberg equilibrium $(\Theta_s^*, A_s^*)$.
Furthermore,
$\Theta_s^*$ is the solution to the adversarial training in \eqref{eq-AT}.
\end{theorem}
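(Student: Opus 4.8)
The plan is to reduce the existence of a Stackelberg equilibrium to the attainment of a minimum of the adversary's value function over the compact parameter box $\S_c$. One cannot simply invoke Theorem \ref{th-SE} here, because the adversary's strategy space $\S_a=\{A:\X\to\Be\}$ is an infinite-dimensional function space and is \emph{not} compact; the argument must instead exploit the special structure of the payoff established in Lemmas \ref{lm-s11}--\ref{lm-s13}.

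First I would introduce the value function
\[
V(\Theta)=\max_{A\in\S_a}\varphi_s(\Theta,A),
\]
which is well defined since Lemma \ref{lm-s12} guarantees $\gamma_s(\Theta)\neq\emptyset$ for every $\Theta\in\S_c$, so the supremum is attained. The key observation is that $V$ is the pointwise supremum of the family $\{\Theta\mapsto\varphi_s(\Theta,A)\}_{A\in\S_a}$, each member of which is continuous in $\Theta$ by Lemma \ref{lm-s11}. A pointwise supremum of continuous functions is lower semi-continuous, so $V$ is lower semi-continuous on $\S_c$.

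Since $\S_c=[-E,E]^K$ is compact and $V$ is lower semi-continuous, $V$ attains its infimum; I would set $\Theta_s^*\in\argmin_{\Theta\in\S_c}V(\Theta)$ and choose any $A_s^*\in\gamma_s(\Theta_s^*)$, nonempty by Lemma \ref{lm-s12}. Then I verify that $(\Theta_s^*,A_s^*)$ satisfies \eqref{eq-SEs}: the condition $A_s^*\in\argmax_{A}\varphi_s(\Theta_s^*,A)$ holds by construction, while the leader's condition follows because $\varphi_s$ is constant, equal to $V(\Theta)$, on each fiber $\{\Theta\}\times\gamma_s(\Theta)$, so minimizing $\varphi_s(\Theta,A(\Theta))$ subject to $A(\Theta)\in\gamma_s(\Theta)$ is exactly minimizing $V(\Theta)$ over $\S_c$. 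Lemma \ref{lm-s13} supplies an equivalent route: along a minimizing sequence $\Theta_i\to\Theta_0$ in the compact set $\S_c$, lower semi-continuity forces $V(\Theta_0)=\inf V$, and closedness of $\Gamma_s$ lets one produce the matching adversary.

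For the identification with adversarial training, I would use Lemma \ref{lm-s12} to interchange the maximum with the expectation, obtaining
\[
V(\Theta)=\EP_{(x,y)\sim\D}\,\mymax_{\overline{x}\in\Bxe}\Loss(\C_\Theta(\overline{x}),y)=\AR_{\D}(\C_\Theta,\varepsilon),
\]
so that $\Theta_s^*=\argmin_{\Theta\in\S_c}\AR_{\D}(\C_\Theta,\varepsilon)$, which is precisely the adversarial training objective \eqref{eq-AT} (with the minimization over $\R^K$ realized inside $\S_c$, as assumed without loss of generality for $E$ large enough). The main obstacle is establishing the lower semi-continuity of $V$: this is exactly where the noncompactness of $\S_a$ is circumvented, and it rests essentially on the continuity of $\varphi_s$ in $\Theta$ (Lemma \ref{lm-s11}) together with the max/expectation interchange of Lemma \ref{lm-s12}; everything else is a routine application of the Weierstrass principle for lower semi-continuous functions on a compact set.
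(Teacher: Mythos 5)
Your proposal is correct and is essentially the paper's own argument: the paper also builds the equilibrium by minimizing the value function $\Theta\mapsto\max_{A\in\S_a}\varphi_s(\Theta,A)$ over the compact set $\S_c$ (via a minimizing sequence in $\Gamma_s$ and an $\eta/3$ contradiction that is precisely a hand-rolled proof of lower semi-continuity of the pointwise supremum), and then identifies the minimizer with the adversarial training objective by the max/expectation interchange of Lemma \ref{lm-s12}. Your packaging of the key step as ``supremum of continuous functions is lower semi-continuous, hence attains its minimum on a compact set'' is a clean restatement of the same idea, and your observation that Theorem \ref{th-SE} cannot be invoked directly because $\S_a$ is not compact correctly reflects why the paper gives a direct proof.
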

\begin{proof}
By Lemma \ref{lm-s11},  $\varphi_s(\Theta,A)$ is bounded.
Then
$\alpha
=\inf_{(\Theta,A)\in\Gamma_s}\,\varphi_s(\Theta, A)$ exists and is finite.
%
%
There exist $(\Theta_i,A_i)_{i=1}^{\infty}\in\Gamma_s$ such that
$\varphi_s(\Theta_i, A_i)$ converges to $\alpha$.
Since $\S_c$ is compact, we can assume that
$\Theta_i$ converges to $\Theta_0$.
Then there exists an $A_0\in\S_a$ such that $(\Theta_0, A_0)\in\Gamma_s$.

We claim that
$\varphi_s(\Theta_i, A_i)$ converges to $\varphi_s(\Theta_0, A_0)$.
Suppose the contrary, that is, $\varphi_s(\Theta_0, A_0) > \alpha$.
Then there exists an $\eta > 0$ such that $\varphi_s(\Theta_0, A_0) > \alpha + \eta$.
Since $\varphi_s(\Theta_i, A_i)$ converges to $\alpha$,
$\exists K_1\in\N_{+}$ such that $\varphi_s(\Theta_k, A_k) < \alpha + \frac{\eta}{3}$ for $\forall k > K_1$.
Since $\varphi_s(\Theta, A)$ is continuous on $\Theta$,
$\exists K_2\in\N_{+}$ such that $\varphi_s(\Theta_k, A_0)> \varphi_s(\Theta_0,A_0)-\frac{\eta}{3}$ for $\forall k > K_2$.
Then for $k>\max\{K_1,K_2\}$, we have
\begin{equation*}
    \begin{array}{lcl}
    \varphi_s(\Theta_k, A_0)
    > \varphi_s(\Theta_0, A_0)-\frac{\eta}{3}
    >\alpha +\frac{2\eta}{3} > \varphi_s(\Theta_k, A_k) + \frac{\eta}{3}
    > \varphi_s(\Theta_k, A_k)
    \end{array}
\end{equation*}
which contradicts to   $(\Theta_k, A_k)\in \Gamma_s$. Then $(\Theta_0, A_0)$ is a Stackelberg equilibrium of game $\G_s$.
%

Let $(\Theta_s^*, A_s^*)$ be a Stackelberg equilibria of game $\G_s$.
By  Lemma \ref{lm-s12},
\begin{equation*}
\label{eq-se13}
\begin{array}{lcl}
\Theta_s^*
&\in&
\argmin_{\Theta\in\S_c, A(\Theta)\in\gamma(\Theta)}\,
\varphi_s(\Theta,A(\Theta))\\
&=&
\argmin_{\Theta\in\S_c, A_{\Theta}\in\gamma(\Theta)}\,
\EP_{(x,y)\sim \D}\, \Loss(\C_{\Theta}(x+A_{\Theta}(x)), y)\\
&\in&
\argmin_{\Theta\in\S_c}\,
\EP_{(x,y)\sim \D}\, \max_{A_{\Theta}(x)} \Loss(\C_{\Theta}(x+A_{\Theta}(x)), y)\\
&=&
\argmin_{\Theta\in\S_c}\,
\EP_{(x,y)\sim \D}\, \max_{\overline{x}\in\Bxe} \Loss(\C_{\Theta}(\overline{x}), y).\\
\end{array}
\end{equation*}
Briefly,
\begin{equation}
\label{eq-pr-31}
\begin{array}{lcl}
\Theta_s^*
%
&=&
\argmin_{\Theta\in\S_c}\, \varphi_s(\Theta, \argmax_{A\in S_a}\, \varphi_s(\Theta,A))\\
&=&
\argmin_{\Theta\in\S_c}\, \mymax_{A\in S_a}\, \varphi_s(\Theta,A).\\
%
\end{array}
\end{equation}
That is, $\Theta_s^*$ is the solution to the adversarial training \eqref{eq-AT}.
\end{proof}

\begin{remark}
\label{rem-s10}
As a consequence of Theorem \ref{th-s11},
the Stackelberg game $\G_s$  gives the {\em best defence}
in the hypothesis space $\Hyp$ for a given attack radius,
if using $\AR_\D$ in  \eqref{eq-arm122} to measure the robustness.
Precisely, let $(\Theta_s^*, A_s^*)$ be  a Stackelberg equilibrium of game $G_s$.
Then $\AR_\D(\C_{\Theta_s^*},\varepsilon) = \AR_\D(\Hyp,\varepsilon)$.
\end{remark}

\subsection{Refined properties of $\Gamma_s$}
\label{sec-gamma}

In the general case, $\gamma_s(\Theta)$ defined in \eqref{eq-gamma10} may have more than one elements.
In this section, we will prove that
if $\gamma_s(\Theta)$ contains a unique element,
then $\Gamma_s$  defined in \eqref{eq-gamma10} is compact,
which will be used in section \ref{sec-emp}.
%

{\noindent\bf Assumption $A_1$}.
For any $\Theta\in\S_c$, $\gamma_s(\Theta)=\{A^*(\Theta)\}$ defined in \eqref{eq-gamma10} has a unique element and the loss function $\Loss$ is Lipschitz.

\begin{remark}
\label{rem-uni}
Assumption ${A}_1$ is   true in the generic case.
By Lemma \ref{lm-s12}, $A^*\in\gamma_s(\Theta)$ if and only if
$A^*(x) \in\{\argmax_{A\in\Be}\,\Loss(\C_{\Theta}(x+A), y) \}$.
Then Assumption ${A}_1$ is   true if and only if
$\argmax_{A\in\Be}\,\Loss(\C_{\Theta}(x+A), y)$ has a unique solution.
Suppose the loss function is $\Loss_\CW$.
Then $\phi(A)=\Loss(\C_{\Theta}(x+A), y)$ is a piecewise linear function in $A$
and its graph over $\Be$ is a polyhedron as illustrated in Figure \ref{fig-poly1}.
Then its maximum can be achieved only at the vertex of the polyhedron
or the intersection of the $(n-1)$-dimensional sphere $||x-x_0||=\varepsilon$ and the one dimensional edges of the polyhedron.
In the generic case, that is, when the parameters are sufficiently general (refer to Assumption 3.1 in \cite{yu-l2} for more details), there exists only one maximum.
\end{remark}

\tikzset{every picture/.style={line width=0.75pt}} 
\begin{figure}[H]
\centering
\begin{tikzpicture}
[x=0.75pt,y=0.75pt,yscale=-0.45,xscale=0.45]

\draw   (181,17) -- (450,17) -- (450,286) -- (181,286) -- cycle ;
\draw   (321.53,96) -- (358,177) -- (247,177) -- cycle ;
\draw    (181,17) -- (321.53,96) ;
\draw [fill={rgb, 255:red, 0; green, 0; blue, 0 }  ,fill opacity=1 ]   (321.53,96) -- (450,17) ;
\draw    (358,177) -- (450,286) ;
\draw    (247,177) -- (181,286) ;
\draw    (247,177) -- (181,17) ;
\draw   (215,145.5) .. controls (215,94.97) and (255.97,54) .. (306.5,54) .. controls (357.03,54) and (398,94.97) .. (398,145.5) .. controls (398,196.03) and (357.03,237) .. (306.5,237) .. controls (255.97,237) and (215,196.03) .. (215,145.5) -- cycle ;
\draw  [fill={rgb, 255:red, 0; green, 0; blue, 0 }  ,fill opacity=1 ] (309,148) .. controls (309,146.62) and (307.88,145.5) .. (306.5,145.5) .. controls (305.12,145.5) and (304,146.62) .. (304,148) .. controls (304,149.38) and (305.12,150.5) .. (306.5,150.5) .. controls (307.88,150.5) and (309,149.38) .. (309,148) -- cycle ;
\draw  [fill={rgb, 255:red, 0; green, 0; blue, 0 }  ,fill opacity=1 ] (324.03,96) .. controls (324.03,94.62) and (322.91,93.5) .. (321.53,93.5) .. controls (320.15,93.5) and (319.03,94.62) .. (319.03,96) .. controls (319.03,97.38) and (320.15,98.5) .. (321.53,98.5) .. controls (322.91,98.5) and (324.03,97.38) .. (324.03,96) -- cycle ;
\draw  [fill={rgb, 255:red, 0; green, 0; blue, 0 }  ,fill opacity=1 ] (360.5,177) .. controls (360.5,175.62) and (359.38,174.5) .. (358,174.5) .. controls (356.62,174.5) and (355.5,175.62) .. (355.5,177) .. controls (355.5,178.38) and (356.62,179.5) .. (358,179.5) .. controls (359.38,179.5) and (360.5,178.38) .. (360.5,177) -- cycle ;
\draw  [fill={rgb, 255:red, 0; green, 0; blue, 0 }  ,fill opacity=1 ] (249.5,177) .. controls (249.5,175.62) and (248.38,174.5) .. (247,174.5) .. controls (245.62,174.5) and (244.5,175.62) .. (244.5,177) .. controls (244.5,178.38) and (245.62,179.5) .. (247,179.5) .. controls (248.38,179.5) and (249.5,178.38) .. (249.5,177) -- cycle ;
\draw  [fill={rgb, 255:red, 0; green, 0; blue, 0 }  ,fill opacity=1 ] (364,72.33) .. controls (364,70.95) and (362.88,69.83) .. (361.5,69.83) .. controls (360.12,69.83) and (359,70.95) .. (359,72.33) .. controls (359,73.71) and (360.12,74.83) .. (361.5,74.83) .. controls (362.88,74.83) and (364,73.71) .. (364,72.33) -- cycle ;
\draw  [fill={rgb, 255:red, 0; green, 0; blue, 0 }  ,fill opacity=1 ] (267.36,64) .. controls (267.36,62.62) and (266.24,61.5) .. (264.86,61.5) .. controls (263.48,61.5) and (262.36,62.62) .. (262.36,64) .. controls (262.36,65.38) and (263.48,66.5) .. (264.86,66.5) .. controls (266.24,66.5) and (267.36,65.38) .. (267.36,64) -- cycle ;
\draw  [fill={rgb, 255:red, 0; green, 0; blue, 0 }  ,fill opacity=1 ] (223.7,113.67) .. controls (223.7,112.29) and (222.58,111.17) .. (221.2,111.17) .. controls (219.81,111.17) and (218.7,112.29) .. (218.7,113.67) .. controls (218.7,115.05) and (219.81,116.17) .. (221.2,116.17) .. controls (222.58,116.17) and (223.7,115.05) .. (223.7,113.67) -- cycle ;
\draw  [fill={rgb, 255:red, 0; green, 0; blue, 0 }  ,fill opacity=1 ] (381.7,201.67) .. controls (381.7,200.29) and (380.58,199.17) .. (379.2,199.17) .. controls (377.81,199.17) and (376.7,200.29) .. (376.7,201.67) .. controls (376.7,203.05) and (377.81,204.17) .. (379.2,204.17) .. controls (380.58,204.17) and (381.7,203.05) .. (381.7,201.67) -- cycle ;
\draw  [fill={rgb, 255:red, 0; green, 0; blue, 0 }  ,fill opacity=1 ] (235.7,199.67) .. controls (235.7,198.29) and (234.58,197.17) .. (233.2,197.17) .. controls (231.81,197.17) and (230.7,198.29) .. (230.7,199.67) .. controls (230.7,201.05) and (231.81,202.17) .. (233.2,202.17) .. controls (234.58,202.17) and (235.7,201.05) .. (235.7,199.67) -- cycle ;
%
\draw (309,148) node [anchor=north west][inner sep=0.75pt]   [align=left] {$\displaystyle x_{0}$};
\end{tikzpicture}
\caption{Illustration for the graph of $\Loss(\C(x+A),y)$ as a function of $x$ and $A$.}
\label{fig-poly1}
\end{figure}
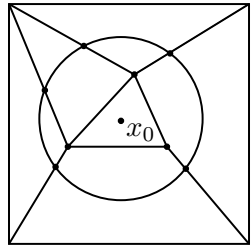

We first introduce three notations which will be used in this section.
By Lemma \ref{lm-con11}, $\Loss(z, y)$ is Lipschitz for $z$ over $[-B,B]^m$
when the loss functions in \eqref{eq-loss12} are used,
and let $\Psi$ be the Lipschitz constant.
By Lemma \ref{lm-con12}, $\C_{\Theta}(x)$ is Lipschitz for $\Theta$ and $x$,
and let $\Delta$ and $\Lambda$ be the Lipschitz constants, respectively.

\begin{lemma}
\label{lm-con13}
For any $\C_{\Theta}:\I_{\varepsilon}^n\to\R^m$  and $\D$, $\varphi_s(\Theta,A)$  defined in \eqref{eq-LS1}
is Lipschitz on $\Theta$ and $A$ when the loss function is Lipschitz.
\end{lemma}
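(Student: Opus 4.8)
The plan is to establish a joint Lipschitz bound on $\varphi_s$ by separating the variation in its two arguments and then chaining the three Lipschitz estimates already on record: the constant $\Psi$ for $\Loss$ in its first argument (Lemma \ref{lm-con11}), and the constants $\Delta$ and $\Lambda$ for $\C_{\Theta}(x)$ in $\Theta$ and in $x$ respectively (Lemma \ref{lm-con12}). Concretely, for $(\Theta_1,A_1),(\Theta_2,A_2)\in\S_c\times\S_a$ I would start from the triangle inequality
$$|\varphi_s(\Theta_1,A_1)-\varphi_s(\Theta_2,A_2)|\le |\varphi_s(\Theta_1,A_1)-\varphi_s(\Theta_2,A_1)|+|\varphi_s(\Theta_2,A_1)-\varphi_s(\Theta_2,A_2)|$$
and bound each term using the definition $\varphi_s(\Theta,A)=\EP_{(x,y)\sim\D}\,\Loss(\C_{\Theta}(x+A(x)),y)$ from \eqref{eq-LS1}.

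For the first term (varying $\Theta$ with $A_1$ fixed), I would move the absolute value inside the expectation, apply the Lipschitz bound for $\Loss$ to obtain the pointwise estimate $\Psi||\C_{\Theta_1}(x+A_1(x))-\C_{\Theta_2}(x+A_1(x))||_2$, and then invoke Lemma \ref{lm-con12}(1) to bound this by $\Psi\Delta||\Theta_1-\Theta_2||_2$. As this bound is uniform in $(x,y)$, taking the expectation leaves $\Psi\Delta||\Theta_1-\Theta_2||_2$. For the second term (varying $A$ with $\Theta_2$ fixed), the argument is analogous but uses the Lipschitz continuity of $\C$ in its \emph{input}: the two network inputs $x+A_1(x)$ and $x+A_2(x)$ differ by $A_1(x)-A_2(x)$, so Lemma \ref{lm-con12}(2) gives the pointwise bound $\Psi\Lambda||A_1(x)-A_2(x)||$. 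Combining the two estimates yields
$$|\varphi_s(\Theta_1,A_1)-\varphi_s(\Theta_2,A_2)|\le \Psi\Delta||\Theta_1-\Theta_2||_2+\Psi\Lambda\,\sup_{x\in\X}||A_1(x)-A_2(x)||,$$
which is precisely the claimed joint Lipschitz property.

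The computations here are routine chain-style Lipschitz estimates, so I expect the only genuine subtlety to be matching the per-sample bound $||A_1(x)-A_2(x)||$ to the metric on the function space $\S_a$. Since $\S_a$ carries the $L_\infty$ (sup) metric, I would bound $||A_1(x)-A_2(x)||\le\sup_{x}||A_1(x)-A_2(x)||$ uniformly in $x$ before integrating, so that passing the estimate through the expectation $\EP_{(x,y)\sim\D}$ costs nothing and no integrability issue arises. Everything stays well-defined and bounded on $\I_{\varepsilon}^n$ throughout, because $||A(x)||\le\varepsilon$ keeps each perturbed input inside the domain where $\Loss$ and $\C$ are Lipschitz, so the argument closes without further hypotheses beyond the standing assumption that $\Loss$ is Lipschitz.
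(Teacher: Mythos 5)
Your proposal is correct and follows essentially the same route as the paper: both arguments push the absolute value inside the expectation and chain the Lipschitz constant $\Psi$ of the loss (Lemma \ref{lm-con11}) with the constants $\Delta$ and $\Lambda$ of $\C_\Theta$ in $\Theta$ and in $x$ (Lemma \ref{lm-con12}). The only difference is cosmetic: the paper writes out the $\Theta$-estimate and declares the $A$-estimate ``similar,'' whereas you spell out both and combine them via the triangle inequality into a joint bound, also noting correctly that the sup metric on $\S_a$ makes the per-sample bound uniform before integrating.
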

\begin{proof}
Firstly, consider  $\varphi_s(\Theta, A)$ for any fixed $A$.
%
%
For any $\epsilon >0$, let $\delta=\frac{\epsilon}{\Psi\Delta}$. Then for any $\Theta_1, \Theta_2$ satisfying $||\Theta_1-\Theta_2||_2\le \delta$, we have
\begin{equation*}
\renewcommand{\arraystretch}{1.4}
\begin{array}{ll}
|\varphi_s(\Theta_1, A)-\varphi_s(\Theta_2, A)|
&=|\EP_{(x, y)\sim \D}[\Loss(\C_{\Theta_1}(x+A(x)), y) - \Loss(\C_{\Theta_2}(x+A(x)), y)]|\\
&\le \EP_{(x, y)\sim\D}\,\Psi||\C_{\Theta_1}(x+A(x))-\C_{\Theta_2}(x+A(x))||_2\\
&\le \EP_{(x, y)\sim\D}\,\Psi\Delta||\Theta_1-\Theta_2||_2
%
            \le \epsilon
        \end{array}
    \end{equation*}
that is, $\varphi_s(\Theta, A)$ is Lipshitz continuous on $\Theta$.
The proof for the Lipschitz continuity on $A$ is  similar.
\end{proof}

\begin{lemma}
\label{lm-con14}
For $\Theta_i\in\S_c$, if $\lim_{i\to\infty}\Theta_i=\Theta_0$ and $g_i\in \gamma_s(\Theta_i)$,
then for any $(x,y)\sim\D$, the limit of any convergent subsequence of  $\{g_i(x)\}_{i=1}^\infty$ belongs to $\argmax_{A\in\B_{\epsilon}}L(\C_{\Theta_0}(x+A),y)$.
\end{lemma}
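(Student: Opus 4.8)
The plan is to prove this as an instance of the closed-graph (upper semicontinuity) property of the parametrized $\argmax$, carried out directly through the best-response characterization of Lemma~\ref{lm-s12}. Fix a sample $(x,y)\sim\D$ and let $\{g_{i_k}(x)\}_{k=1}^\infty$ be a convergent subsequence with limit $A_0$. Since each $g_{i_k}(x)\in\Be$ and $\Be$ is closed, we immediately have $A_0\in\Be$, so $A_0$ is at least a feasible candidate. Because $g_{i_k}\in\gamma_s(\Theta_{i_k})$, Lemma~\ref{lm-s12} gives, for every $A\in\Be$,
\begin{equation}
\label{eq-br-ineq}
\Loss(\C_{\Theta_{i_k}}(x+g_{i_k}(x)),y)\ \ge\ \Loss(\C_{\Theta_{i_k}}(x+A),y).
\end{equation}
The whole argument then reduces to passing to the limit $k\to\infty$ on both sides of \eqref{eq-br-ineq}.

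The key step, and the one I expect to be the main obstacle, is establishing \emph{joint} continuity of the map $(\Theta,A)\mapsto\Loss(\C_{\Theta}(x+A),y)$, since Lemma~\ref{lm-con12} only supplies Lipschitz bounds in $\Theta$ and in the input separately. I would combine them through the triangle inequality
\begin{equation*}
\|\C_{\Theta_1}(x+A_1)-\C_{\Theta_2}(x+A_2)\|_2
\le \|\C_{\Theta_1}(x+A_1)-\C_{\Theta_2}(x+A_1)\|_2
  + \|\C_{\Theta_2}(x+A_1)-\C_{\Theta_2}(x+A_2)\|_2,
\end{equation*}
where the first term is bounded by $\Delta\|\Theta_1-\Theta_2\|_2$ via Lemma~\ref{lm-con12}(1) and the second by $\Lambda\|A_1-A_2\|$ via Lemma~\ref{lm-con12}(2). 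This makes $\C_{\Theta}(x+A)$ jointly Lipschitz in $(\Theta,A)$ on the relevant bounded domains (note $x+A\in\I_\varepsilon^n$ for $x\in\I^n$ and $\|A\|\le\varepsilon$, so the constants $\Delta,\Lambda$ apply uniformly); composing with the continuous loss $\Loss$ (Lipschitz for the standard losses by Lemma~\ref{lm-con11}) yields the desired joint continuity.

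With joint continuity available, the conclusion is immediate. On the left side of \eqref{eq-br-ineq} both $\Theta_{i_k}\to\Theta_0$ and $g_{i_k}(x)\to A_0$, so the left side converges to $\Loss(\C_{\Theta_0}(x+A_0),y)$; on the right side $A$ is held fixed, so only continuity in $\Theta$ is needed and the right side converges to $\Loss(\C_{\Theta_0}(x+A),y)$. Taking limits in \eqref{eq-br-ineq} gives $\Loss(\C_{\Theta_0}(x+A_0),y)\ge\Loss(\C_{\Theta_0}(x+A),y)$ for every $A\in\Be$, which is exactly $A_0\in\argmax_{A\in\Be}\Loss(\C_{\Theta_0}(x+A),y)$. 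The one point requiring care is that the two terms in \eqref{eq-br-ineq} must be treated differently: the maximizing argument $g_{i_k}(x)$ moves with $k$ and so genuinely requires the joint continuity established above, whereas the comparison point $A$ is fixed and needs only continuity in $\Theta$; conflating the two would quietly assume what must be proved.
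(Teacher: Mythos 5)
Your argument is correct and complete. The paper itself gives no written proof of this lemma beyond the remark that it ``can be proved similar to that of Lemma~\ref{lm-s13}''; that proof is a contradiction argument (assume the limit is not a maximizer, extract a strictly better response with gap $\eta$, and use continuity to contradict optimality of the $g_i$ along the sequence). Your version is the direct contrapositive of the same idea: write the best-response inequality $\Loss(\C_{\Theta_{i_k}}(x+g_{i_k}(x)),y)\ge \Loss(\C_{\Theta_{i_k}}(x+A),y)$ supplied by Lemma~\ref{lm-s12} and pass to the limit. The one ingredient the paper never makes explicit at the pointwise level is the joint continuity of $(\Theta,A)\mapsto \Loss(\C_{\Theta}(x+A),y)$, which the contradiction proof of Lemma~\ref{lm-s13} also silently uses when it compares $\varphi_s(\Theta_{i_0},A_{i_0})$ with $\varphi_s(\Theta_0,A_0)$; your triangle-inequality combination of the two separate Lipschitz bounds of Lemma~\ref{lm-con12} supplies exactly this and is the right way to fill the gap. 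You are also right to distinguish the moving argument $g_{i_k}(x)$ (which needs joint continuity) from the fixed comparison point $A$ (which needs only continuity in $\Theta$). In short: same underlying mechanism as the paper intends, presented directly rather than by contradiction, with the key continuity step made explicit rather than deferred.
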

\begin{proof}
The result can be proved similar to that of Lemma \ref{lm-s13}.
\end{proof}

\begin{lemma}
\label{lm-con21}
Under Assumption ${A}_1$,
%
for any $\Theta \in \S_c$, $A^*(\Theta)(x)$ is continuous on $x$.
\end{lemma}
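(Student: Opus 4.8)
The plan is to prove continuity of the single-valued maximizer map $x \mapsto A^*(\Theta)(x)$ by a compactness-plus-uniqueness argument, which is the pointwise-uniqueness sharpening of Berge's maximum theorem. Throughout I fix $\Theta \in \S_c$ and, for a fixed label $y$, write $\phi(x, A) = \Loss(\C_{\Theta}(x+A), y)$ for $(x,A) \in \I^n \times \Be$. By Lemma \ref{lm-con12} the map $(x,A) \mapsto \C_{\Theta}(x+A)$ is (Lipschitz) continuous, and by Lemma \ref{lm-con11} $\Loss$ is Lipschitz in its first argument, so $\phi$ is jointly continuous on $\I^n \times \Be$. By Lemma \ref{lm-s12} together with Assumption $A_1$, for each $x$ the set $\argmax_{A\in\Be}\phi(x,A)$ is a single point, which is exactly $A^*(\Theta)(x)$.

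First I would fix an arbitrary point $x_0$ and an arbitrary sequence $x_i \to x_0$, and set $A_i = A^*(\Theta)(x_i) \in \Be$. Since $\Be$ is compact, it suffices to show that every convergent subsequence of $\{A_i\}$ has limit $A^*(\Theta)(x_0)$; this then forces the whole sequence to converge to that point. So suppose $A_{i_k} \to \bar{A} \in \Be$. For any fixed $A \in \Be$, the maximality of $A_{i_k}$ at $x_{i_k}$ gives $\phi(x_{i_k}, A_{i_k}) \ge \phi(x_{i_k}, A)$; letting $k \to \infty$ and using joint continuity of $\phi$ yields $\phi(x_0, \bar{A}) \ge \phi(x_0, A)$. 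As $A$ was arbitrary, $\bar{A} \in \argmax_{A\in\Be}\phi(x_0,A)$, and uniqueness (Assumption $A_1$) forces $\bar{A} = A^*(\Theta)(x_0)$.

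Combining the two steps, $A_i \to A^*(\Theta)(x_0)$ whenever $x_i \to x_0$, so $A^*(\Theta)$ is continuous at $x_0$; since $x_0$ was arbitrary, continuity on all of $\I^n$ follows. This is essentially the argument already used for the $\Theta$-variable in Lemma \ref{lm-con14}, now run in the $x$-variable and sharpened by uniqueness. One delicate point is the label $y$: the argument is carried out with $y$ held fixed, which is legitimate in the classification setting because $y$ is determined by $x$ and is locally constant on the support of $\D$, so continuity need only be checked within each label region. The main obstacle, then, is not the limit extraction itself but correctly invoking Assumption $A_1$ to upgrade the upper hemicontinuity of the $\argmax$ correspondence (which is all Lemma \ref{lm-con14}'s argument delivers) into genuine continuity of the single-valued map; without uniqueness one obtains only that limits of maximizers remain maximizers, which is insufficient.
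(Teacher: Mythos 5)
Your proof is correct, but it is organized differently from the paper's. You run the standard Berge-maximum-theorem argument: extract a convergent subsequence $A_{i_k}\to\bar A$ from the compact ball $\Be$, pass to the limit in the optimality inequality $\phi(x_{i_k},A_{i_k})\ge\phi(x_{i_k},A)$ using joint continuity of $\phi$, conclude $\bar A$ is a maximizer at $x_0$, and let uniqueness (Assumption $A_1$) collapse the limit set to the single point $A^*(\Theta)(x_0)$. The paper instead argues by contradiction with an explicit quantitative gap: it sets $\zeta$ equal to the difference between the unique maximal value at $x_0$ and the best value achievable outside an $\eta$-neighborhood of $A^*(\Theta)(x_0)$ (positive by uniqueness), and then uses the Lipschitz constants $\Psi$ and $\Lambda$ to show that for $x_k$ sufficiently close to $x_0$ the point $A^*(\Theta)(x_0)$ would outperform $A^*(\Theta)(x_k)$ at $x_k$, contradicting optimality. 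The two proofs rest on the same two pillars — upper hemicontinuity of the $\argmax$ correspondence plus single-valuedness — but yours needs only continuity of the payoff and no Lipschitz estimates, and cleanly reuses the subsequence mechanism of Lemma \ref{lm-con14}, whereas the paper's version implicitly yields a modulus of continuity in terms of $\Psi$, $\Lambda$, and the gap $\zeta$. Your handling of the label (fixing $y$ and using that $\Y$ is finite/locally constant) matches what the paper does when it assumes $y_i=y_0$ for all $i$, so there is no gap there either.
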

\begin{proof}
Let $\{(x_i,y_i)\}_{i=1}^{\infty}\subset\X\times\Y$ converges to $(x_0,y_0)$.
Since $\Y$ is finite, we may assume $y_i=y_0$ for all $i$.
Then for any $\Theta$, we will prove $\lim\limits_{i\rightarrow \infty} A^*(\Theta)(x_i)=A^*(\Theta)(x_0)$.
Suppose the contrary. Then  $\forall \eta > 0 $, $||A^*(\Theta)(x_{i})- A^*(\Theta)(x_0)|| >\eta$ holds for infinitely many $i$.
In the rest of the proof, we assume $\eta< \varepsilon/2$.

Let $\zeta=\Loss(\C_{\Theta}(x_0+A^*(\Theta)(x_0)), y_0)-\max_{\alpha\in \B_\epsilon, ||\alpha-A^*(\Theta)(x)||> \eta}\Loss(\C_{\Theta}(x_0+\alpha), y_0)$.
Since $\eta< \varepsilon/2$,
$\{\alpha\in \Be\,:\, ||\alpha-A^*(\Theta)(x)||> \eta\}\ne \emptyset$.
From the uniqueness of $A^*(\Theta)$, we have $\varepsilon>0$.
%
%
%
By the convergence of $\{x_i\}_{i=1}^{\infty}$, $\exists N$, such that when $i> N$, $||x_0 - x_{i}|| < \frac{\varepsilon}{3\Psi\Lambda}$. There exists a $k> N$ such that $||A^*(\Theta)(x_{k})- A^*(\Theta)(x_0)|| >\eta$.
Then
    \begin{equation*}
      \renewcommand{\arraystretch}{1.4}
    \begin{array}{ll}
        \Loss(\C_{\Theta}(x_k+A^*(\Theta)(x_0)), y_0) &\ge
        \Loss(\C_{\Theta}(x_0+A^*(\Theta)(x_0)), y_0)-\Psi\Lambda||x_0-x_k||\\
        &\ge \Loss(\C_{\Theta}(x_0+A^*(\Theta)(x_k)), y_0)-\Psi\Lambda||x_0-x_k||+\zeta\\
        &\ge \Loss(\C_{\Theta}(x_k+A^*(\Theta)(x_k)), y_0)-2\Psi\Lambda||x_0-x_k||+\zeta\\
        &> \Loss(\C_{\Theta}(x_k+A^*(\Theta)(x_k)), y_0)+\zeta/3\\
        &> \Loss(\C_{\Theta}(x_k+A^*(\Theta)(x_k)), y_0)
         \end{array}
    \end{equation*}
which contradicts to the definition of $A^*(\Theta)(x_{k})$. Hence $A^*(\Theta)(x)$ is continuous on $x$.
\end{proof}

\begin{lemma}
\label{lm-con23}
Under Assumption ${A}_1$,
%
%
%
$\psi(\Theta)=\varphi_s(\Theta,A^*(\Theta)):\S_c\to\R$ is continuous on $\Theta$.
\end{lemma}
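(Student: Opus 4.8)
The plan is to recognize $\psi$ as the value function of the Adversary's inner maximization and to prove that this value function is in fact Lipschitz in $\Theta$, which immediately gives continuity. First I would record the key identity: since $A^*(\Theta)$ is, by Assumption $A_1$, the unique element of $\gamma_s(\Theta)=\argmax_{A\in\S_a}\varphi_s(\Theta,A)$, we have
\[
\psi(\Theta)=\varphi_s(\Theta,A^*(\Theta))=\max_{A\in\S_a}\varphi_s(\Theta,A).
\]
Thus $\psi$ is the pointwise supremum over $A\in\S_a$ of the family $\Theta\mapsto\varphi_s(\Theta,A)$. By Lemma \ref{lm-s12} this supremum is attained (at $A^*(\Theta)$), and by Lemmas \ref{lm-s11} and \ref{lm-dnn11} it is finite, so $\psi$ is well-defined and bounded.

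The main step is to exploit the \emph{uniformity} of the estimate in Lemma \ref{lm-con13}. In its proof the bound $|\varphi_s(\Theta_1,A)-\varphi_s(\Theta_2,A)|\le\Psi\Delta\|\Theta_1-\Theta_2\|_2$ holds with a constant $\Psi\Delta$ that does not depend on the choice of $A\in\S_a$. With this in hand I would argue directly: for every $A\in\S_a$,
\[
\varphi_s(\Theta_1,A)\le\varphi_s(\Theta_2,A)+\Psi\Delta\|\Theta_1-\Theta_2\|_2\le\psi(\Theta_2)+\Psi\Delta\|\Theta_1-\Theta_2\|_2.
\]
Taking the supremum over $A$ on the left gives $\psi(\Theta_1)\le\psi(\Theta_2)+\Psi\Delta\|\Theta_1-\Theta_2\|_2$, and exchanging the roles of $\Theta_1$ and $\Theta_2$ yields $|\psi(\Theta_1)-\psi(\Theta_2)|\le\Psi\Delta\|\Theta_1-\Theta_2\|_2$. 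Hence $\psi$ is Lipschitz on $\S_c$, and in particular continuous on $\Theta$.

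The one delicate point, which I would be careful to justify, is precisely this uniformity of the Lipschitz constant over the whole strategy space $\S_a$: the argument collapses if the constant were allowed to depend on $A$. It is guaranteed because $\Psi\Delta$ is assembled from the Lipschitz constant $\Psi$ of $\Loss$ on the bounded cube and the Lipschitz constant $\Delta$ of $\C_\Theta$ in $\Theta$, neither of which sees the perturbation $A(x)$ (the perturbed inputs $x+A(x)$ remain in $\I_\varepsilon^n$ for all $A\in\S_a$). I would also note that Assumption $A_1$ is used here only to make the symbol $A^*(\Theta)$ unambiguous; the continuity of $\psi=\max_A\varphi_s(\cdot,A)$ holds regardless of uniqueness. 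An alternative route would split into lower semicontinuity (immediate, since $\psi$ is a supremum of the continuous functions $\varphi_s(\cdot,A)$) and upper semicontinuity (controlling the limit of $A^*(\Theta_i)$ via Lemma \ref{lm-con14}), but this is strictly more involved than the uniform-Lipschitz argument above, so I would avoid it.
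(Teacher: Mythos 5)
Your proof is correct and uses essentially the same mechanism as the paper's: the one-sided bound obtained by combining the $A$-uniform Lipschitz estimate of Lemma \ref{lm-con13} with the optimality of the maximizer at the other parameter, then symmetrizing. The only difference is presentational --- the paper runs the estimate pointwise in $x$ with the specific perturbations $A^*(\Theta_1)(x)$, $A^*(\Theta_2)(x)$ and then takes expectation, whereas you work directly with $\psi=\max_{A\in\S_a}\varphi_s(\cdot,A)$ as the supremum of an equi-Lipschitz family; your added observations (the explicit Lipschitz constant $\Psi\Delta$ and the fact that Assumption $A_1$ is only needed to make $A^*(\Theta)$ unambiguous) are accurate.
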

\begin{proof}
We will prove that for any $\zeta>0$, $\exists \delta >0$, such that if
$||\Theta_1-\Theta_2||_2\le \delta$ then $|\varphi_s(\Theta_1, A^*(\Theta_1))-\varphi_s(\Theta_2, A^*(\Theta_2))|\le \zeta.$
Let $\delta= \frac{\zeta}{\Psi\Delta}$.
Then for any $x$,
    \begin{equation*}
    \begin{array}{ll}
     \Loss(\C_{\Theta_1}(x+A^*(\Theta_1)(x)))
    &\le \Loss(\C_{\Theta_2}(x+A^*(\Theta_1)(x)))+\Psi\Delta\delta\\
    &\le
  \Loss(\C_{\Theta_2}(x+A^*(\Theta_2)(x)))+\Psi\Delta\delta\\
    & =\Loss(\C_{\Theta_2}(x+A^*(\Theta_2)(x)))+\zeta.
    \end{array}
    \end{equation*}
By exchanging $\Theta_1$ and $\Theta_2$, we have $|\Loss(\C_{\Theta_1}(x+A^*(\Theta_1)(x)))-\Loss(\C_{\Theta_2}(x+A^*(\Theta_2)(x)))|\le \zeta$.
Then $|\varphi_s(\Theta_1, A(\Theta_1))-\varphi_s(\Theta_2, A(\Theta_2))|\le \zeta.$ Thus $\varphi_s(\Theta, A^*(\Theta))$ is continuous on $\Theta$.
\end{proof}

\begin{lemma}
\label{lm-con22}
Under Assumption ${A}_1$,
 $A^*(\Theta):\S_c\to\S_a$ is continuous.
\end{lemma}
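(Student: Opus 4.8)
The plan is to upgrade the pointwise convergence that is already available into uniform convergence over $\X$, since the latter is exactly what continuity into $(\S_a,d)$ demands. Recall that the metric on $\S_a$ is $d(A,A')=\sup_{x\in\X}\|A(x)-A'(x)\|$, so proving continuity of $A^*$ at an arbitrary $\Theta_0\in\S_c$ amounts to showing that $\Theta_i\to\Theta_0$ forces $\sup_{x\in\X}\|A^*(\Theta_i)(x)-A^*(\Theta_0)(x)\|\to 0$. Lemma \ref{lm-con14} together with Assumption $A_1$ already yields $A^*(\Theta_i)(x)\to A^*(\Theta_0)(x)$ for each \emph{fixed} $x$: every convergent subsequence of $\{A^*(\Theta_i)(x)\}$ lands in $\argmax_{A\in\Be}\Loss(\C_{\Theta_0}(x+A),y)=\{A^*(\Theta_0)(x)\}$, and $\Be$ is compact, so the whole sequence converges. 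But pointwise convergence is insufficient for the sup-metric, so the crux is to exploit the compactness of $\X=\I^n$.

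First I would argue by contradiction. If $A^*$ were discontinuous at $\Theta_0$, there would be $\epsilon_0>0$, a sequence $\Theta_i\to\Theta_0$, and points $x_i\in\X$ with $\|A^*(\Theta_i)(x_i)-A^*(\Theta_0)(x_i)\|>\epsilon_0$. Since $\X$ and $\Be$ are compact and $\Y$ is finite, I would pass to a subsequence along which $x_i\to x_0$, the labels are constant $y_i=y_0$ (exactly as in Lemma \ref{lm-con21}, so that $x_0$ carries the common label $y_0$), and $A^*(\Theta_i)(x_i)\to a^*\in\Be$. The heart of the proof is then to identify $a^*$: for every $A\in\Be$ the optimality of $A^*(\Theta_i)(x_i)$ gives $\Loss(\C_{\Theta_i}(x_i+A^*(\Theta_i)(x_i)),y_0)\ge\Loss(\C_{\Theta_i}(x_i+A),y_0)$. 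Letting $i\to\infty$ and using the joint continuity of $(\Theta,x)\mapsto\C_{\Theta}(x)$ — obtained by combining the two Lipschitz estimates of Lemma \ref{lm-con12} — together with the continuity of $\Loss$ (Lemma \ref{lm-con11}), one gets $\Loss(\C_{\Theta_0}(x_0+a^*),y_0)\ge\Loss(\C_{\Theta_0}(x_0+A),y_0)$ for all $A\in\Be$. Hence $a^*\in\argmax_{A\in\Be}\Loss(\C_{\Theta_0}(x_0+A),y_0)$, which by Assumption $A_1$ is the singleton $\{A^*(\Theta_0)(x_0)\}$, so $a^*=A^*(\Theta_0)(x_0)$.

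To close the contradiction I would invoke Lemma \ref{lm-con21}: since $A^*(\Theta_0)$ is continuous in $x$ and $x_i\to x_0$, we have $A^*(\Theta_0)(x_i)\to A^*(\Theta_0)(x_0)$. Combining this with $A^*(\Theta_i)(x_i)\to a^*=A^*(\Theta_0)(x_0)$ forces $\|A^*(\Theta_i)(x_i)-A^*(\Theta_0)(x_i)\|\to 0$, contradicting the gap $>\epsilon_0$. I expect the main obstacle to be precisely this passage from the pointwise statement of Lemma \ref{lm-con14} to uniformity over $x$: one must let $\Theta$ and $x$ vary \emph{simultaneously} and still pin down the limit, which is possible only because $\X$ is compact and the maximizer is unique (Assumption $A_1$). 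Equivalently, the argument establishes the joint continuity of $(\Theta,x)\mapsto A^*(\Theta)(x)$ on the compact product $\S_c\times\X$, whence uniform continuity — and therefore continuity of $A^*$ in the sup-metric — follows automatically; the label technicality is harmless because $\Y$ is finite.
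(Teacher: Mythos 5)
Your proof is correct, and it takes a genuinely different --- and in one respect more careful --- route than the paper's. The paper also argues by contradiction from uniqueness of the maximizer, but it fixes a \emph{single} witness point $x$ at which $\|A^*(\Theta_n)(x)-A^*(\Theta_0)(x)\|>\eta$ for infinitely many $n$, defines the quantitative gap $\zeta$ between the optimal loss value at $A^*(\Theta_0)(x)$ and the best value outside the $\eta$-ball around it, and then chains the Lipschitz estimate in $\Theta$ (Lemma \ref{lm-con12}) with Lemma \ref{lm-con23} to reach a numerical contradiction at that one $x$. Strictly speaking, failure of convergence in the sup-metric only produces points $x_n$ that depend on $n$, so the paper's reduction to a fixed $x$ is precisely the uniformity issue you flag; your argument resolves it by letting $x_i$ vary, extracting convergent subsequences from the compact set $\S_c\times\X\times\Be$, identifying the limit $a^*$ of the maximizers as the unique maximizer at $(\Theta_0,x_0)$ by passing to the limit in the optimality inequalities (a Berge-maximum-theorem style step), and closing the loop with Lemma \ref{lm-con21}. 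What the paper's approach buys is an explicit quantitative estimate (the constants $\Psi\Delta$ and the gap $\zeta$); what yours buys is a qualitative proof that genuinely establishes continuity into $\S_a$ equipped with the sup-metric. The one loose end you share with the paper is the label technicality: both arguments assume the limit point $x_0$ carries the common label $y_0$ (equivalently, that the uniqueness in Assumption $A_1$ applies at $(x_0,y_0)$), which is harmless under the paper's conventions but worth a sentence if you write this up formally.
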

\begin{proof}
It suffices to prove that when $\{\Theta_n\}_{n=1}^{\infty}$ converges to $\Theta_0$, $\lim\limits_{n\rightarrow\infty}A^*(\Theta_n)=A^*(\Theta_0)$.
Suppose the contrary. Then there exist $x\in \X$ and $\eta>0$ such that $||A^*(\Theta_n)(x)-A^*(\Theta_0)(x)||> \eta$ holds for infinitely $n$.
We assume  $\eta< \varepsilon/2$.

Let $\zeta=\Loss(\C_{\Theta_0}(x+A^*(\Theta_0)(x)), y)-\max_{\alpha\in \B_\epsilon, ||\alpha-A^*(\Theta_0)(x)||> \eta}\Loss(\C_{\Theta_0}(x+\alpha), y)$.
It is clear that $\zeta >0$.
There exists an $N\in \N_{+}$, such that for any $n>N$,
we have $||\Theta_n - \Theta_0||_2 < \frac{\varepsilon}{2\Psi\Delta}$
and $|\Loss(\C_{\Theta_0}(x+A^*(\Theta_0)(x)), y)-\Loss(\C_{\Theta_n}(x+A^*(\Theta_n)(x)), y)|< \frac{\varepsilon}{2}$ by Lemma \ref{lm-con23}.
There exists a $\ j >N$,  $||A^*(\Theta_j)(x)-A^*(\Theta_0)(x)||> \eta$.
Then
    \begin{equation*}
    \begin{array}{ll}
     \Loss(\C_{\Theta_0}(x+A^*(\Theta_0)(x)), y)
    &\ge \Loss(\C_{\Theta_0}(x+A^*(\Theta_j)(x)), y)+\zeta\\
    &\ge \Loss(\C_{\Theta_j}(x+A^*(\Theta_j)(x)), y)+\zeta - \Psi\Delta||\Theta_j-\Theta_0||_2\\
    &>\Loss(\C_{\Theta_j}(x+A^*(\Theta_j)(x)),y)+\frac{\zeta}{2}
    \end{array}
    \end{equation*}
which contradicts to $|\Loss(\C_{\Theta_0}(x+A^*(\Theta_0)(x)), y)-\Loss(\C_{\Theta_j}(x+A^*(\Theta_j)(x)), y)|< \frac{\varepsilon}{2}$.
Thus for any $x, \eta>0$, there exists an $N$ such that for $n>N$, $||A^*(\Theta_n)(x)-A^*(\Theta_0)(x)||\le\eta$ holds, that is,
$\lim \limits_{n\rightarrow\infty} ||A^*(\Theta_n)-A^*(\Theta_0)||_\infty = 0$, which means $A^*(\Theta)$ is continuous on $\Theta$.
\end{proof}

\begin{prop}
\label{prop-ga11}
Under Assumption ${A}_1$,
$\Gamma_s$  defined in \eqref{eq-Gamma} is a compact set in $\S_c\times\S_a$.
\end{prop}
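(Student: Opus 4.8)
The plan is to exploit the structural simplification that Assumption $A_1$ provides. Under $A_1$, the correspondence $\gamma_s$ is single-valued: for every $\Theta\in\S_c$ we have $\gamma_s(\Theta)=\{A^*(\Theta)\}$. Hence $\Gamma_s$ is nothing but the graph of the map $A^*:\S_c\to\S_a$, that is, $\Gamma_s=\{(\Theta,A^*(\Theta)):\Theta\in\S_c\}$. My strategy is then to realize $\Gamma_s$ as the continuous image of a compact set, so that its compactness becomes a routine topological consequence rather than a fresh analytic estimate.

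First I would record the two ambient facts: $\S_c=[-E,E]^K$ is compact, and $\S_a$, equipped with the $L_\infty$ metric as stipulated in Section \ref{sec-s2}, is a metric space, so that $\S_c\times\S_a$ is itself a metric space (in which compactness and sequential compactness coincide). The substantive ingredient is already in hand, namely Lemma \ref{lm-con22}, which states that $A^*:\S_c\to\S_a$ is continuous. Combining these, the pairing map $\Phi:\S_c\to\S_c\times\S_a$ defined by $\Phi(\Theta)=(\Theta,A^*(\Theta))$ is continuous, being the coupling of the identity on $\S_c$ with the continuous map $A^*$.

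The conclusion then follows immediately: $\Gamma_s=\Phi(\S_c)$ is the continuous image of the compact set $\S_c$, hence compact. Equivalently, I could argue by sequences, which may read more transparently in the metric setting: given any $(\Theta_i,A_i)_{i=1}^{\infty}\subset\Gamma_s$, compactness of $\S_c$ yields a subsequence with $\Theta_i\to\Theta_0\in\S_c$; continuity of $A^*$ then forces $A_i=A^*(\Theta_i)\to A^*(\Theta_0)$, so $(\Theta_i,A_i)\to(\Theta_0,A^*(\Theta_0))\in\Gamma_s$, exhibiting sequential compactness directly.

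I expect no genuine obstacle, since all the delicate work has been absorbed into the preceding lemmas, particularly the continuity of $A^*$ in Lemma \ref{lm-con22}. The only point requiring care is the identification $\Gamma_s=\{(\Theta,A^*(\Theta)):\Theta\in\S_c\}$, which rests essentially on the uniqueness clause of Assumption $A_1$; without it $\gamma_s$ would be genuinely set-valued, and one would be forced onto the more delicate route of combining the closedness of $\Gamma_s$ (Lemma \ref{lm-s13}) with a uniform control on $\S_a$, precisely the complication that single-valuedness lets us bypass.
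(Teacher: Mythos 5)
Your proof is correct and follows essentially the same route as the paper's: both arguments extract a convergent subsequence from the compact $\S_c$ and invoke the continuity of $A^*$ from Lemma \ref{lm-con22} to force convergence of the second coordinate. Your graph-of-a-continuous-map framing is a slight streamlining, since under Assumption $A_1$ the limit $(\Theta_0,A^*(\Theta_0))$ lies in $\Gamma_s$ by definition, making the paper's additional appeal to the closedness of $\Gamma_s$ (Lemma \ref{lm-s13}) unnecessary.
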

\begin{proof}
Given a sequence $\{(\Theta_n, A^*(\Theta_n))\}_{n=1}^\infty$ in $\Gamma_s$, since $\S_a$ is compact, there exists a subsequence $\{\Theta_{i_n}\}_{n=1}^\infty$ converges to $\Theta_0$, that is, $\lim\limits_{n\rightarrow\infty}\Theta_{i_n}=\Theta_0$.
By Lemma \ref{lm-con22}, $A^*(\Theta)$ is continuous on $\Theta$, then $\lim\limits_{n\rightarrow\infty}A^*(\Theta_{i_n})=A^*(\Theta_0)$. Hence $\{(\Theta_{i_n}, A^*(\Theta_{i_n}))\}_{n=1}^\infty$ is subsequence converging to $(\Theta_0, A^*(\Theta_{0}))$.
By Lemma \ref{lm-s13}, $\Gamma_s$ is closed, thus $(\Theta_0, A^*(\Theta_{0}))\in\Gamma_s$
and $\Gamma_s$ is compact.
\end{proof}

\section{A Stackelberg game to achieve  maximal adversarial accuracy}
\label{sec-maxaa}

The {\em adversarial accuracy} of a DNN $\C$ with respect to an attack radius $\varepsilon$ is
\begin{equation}
\label{eq-arm12}
\begin{array}{l}
\AA_{\D}(\C,\varepsilon)=\PR_{(x,y)\sim \D}\,( \forall \overline{x} \in\Bxe\, (\widehat{\C}(\overline{x})=y))
\end{array}
\end{equation}
which is the most widely used robustness measurement for DNNs.
Comparing to the robustness measurement $\AR_{\D}$ in \eqref{eq-arm122},
$\AA_{\D}(\C,\varepsilon)$ does not depends on the loss function.
In this section, we will show that adversarial training with the
Carlini-Wagner loss function will give a DNN with the optimal adversarial accuracy.

We first introduce a new game.
Denote $\G_a$ to be the two person zero-sum minmax Stackelberg game with the Classifier as the leader, the Adversary as the follower, and
\begin{equation}
\label{eq-ofunca}
\varphi_a(\Theta,A)=
\EP_{(x,y)\sim \D}\, \Loss_A(\C_{\Theta}(x+A(x)), y).
\end{equation}
as the payoff function, where the loss function is defined as
\begin{equation}
\label{eq-lossA}
\Loss_A(\C(x), y) =
\begin{cases}
0 & \Loss_\CW(\C(x),y)\ge0\\
-1  & \Loss_\CW(\C(x),y)<0\\
\end{cases}
\end{equation}
and $\Loss_\CW$ is the Carlini-Wagner loss function  defined in \eqref{eq-loss12}.

For game $\G_a$,  $\gamma$ and $\Gamma$ defined in \eqref{eq-gamma} and \eqref{eq-Gamma} are
\begin{equation}
\label{eq-gamma1a}
\begin{array}{ccl}
\gamma_a(\Theta)&=&\{  \argmax_{A\in\S_a} \varphi_a(\Theta, A)\}\hbox{ for } \Theta\in\S_c\\
\Gamma_a &=& \{(\Theta,A)\,:\, \Theta\in\S_c, A\in\gamma_a(\Theta)\}.\\
\end{array}
\end{equation}

%
%

\begin{lemma}
\label{lm-ar12}
Let $A_a \in \gamma_a(\Theta)$.
Then
$\varphi_a(\Theta,A_a) = -\AA_{\D}(C_{\Theta},\varepsilon)$.
\end{lemma}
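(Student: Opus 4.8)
The plan is to reduce the statement to a pointwise analysis of the degenerate loss $\Loss_A$ and then to recognize the resulting expectation as exactly $-\AA_\D$. The crucial preliminary observation is that $\Loss_A$ takes only the two values $0$ and $-1$, and that for a fixed $(x,y)$ and a fixed $\overline{x}=x+A(x)$ one has $\Loss_A(\C_\Theta(\overline{x}),y)=0$ precisely when $\Loss_\CW(\C_\Theta(\overline{x}),y)\ge 0$, i.e. when $\max_{l\ne y}(\C_\Theta)_l(\overline{x})\ge(\C_\Theta)_y(\overline{x})$; under the argmax convention this is exactly the condition $\widehat{\C}_\Theta(\overline{x})\ne y$ (ties being counted against $y$, as the sign convention in \eqref{eq-lossA} dictates). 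Conversely $\Loss_A=-1$ iff $\widehat{\C}_\Theta(\overline{x})=y$.

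First I would establish the analogue of Lemma \ref{lm-s12} for $\G_a$: any $A_a\in\gamma_a(\Theta)$ can be built by pointwise maximization, that is $A_a(x)\in\argmax_{A(x)\in\Be}\Loss_A(\C_\Theta(x+A(x)),y)$ for all $(x,y)\sim\D$. The argument of Lemma \ref{lm-s12} carries over once the inner maximum is shown to be attained; the only genuinely new point is that $\Loss_A$ is no longer continuous, so I cannot quote compactness-plus-continuity directly. Instead I note that for fixed $(x,y)$ the superlevel set $\{A\in\Be:\Loss_\CW(\C_\Theta(x+A),y)\ge0\}$ is the intersection of the compact ball $\Be$ with the preimage of a closed set under the continuous map $A\mapsto\Loss_\CW(\C_\Theta(x+A),y)$, hence closed. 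Therefore the pointwise supremum of $\Loss_A$ over $\Be$ equals $0$ and is attained whenever this set is nonempty, and equals $-1$ and is attained otherwise; this attainment is exactly what the interchange of maximum and expectation in Lemma \ref{lm-s12} requires.

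With the pointwise maximizer in hand I would evaluate $\Loss_A(\C_\Theta(x+A_a(x)),y)$ by cases. If some $\overline{x}\in\Bxe$ satisfies $\widehat{\C}_\Theta(\overline{x})\ne y$, i.e. $x$ admits an adversary within radius $\varepsilon$, then the pointwise maximum is $0$; if instead $\widehat{\C}_\Theta(\overline{x})=y$ for every $\overline{x}\in\Bxe$, then $\Loss_A\equiv-1$ on the ball and the maximum is $-1$. Hence $\Loss_A(\C_\Theta(x+A_a(x)),y)=-\one\{\,\forall\overline{x}\in\Bxe:\widehat{\C}_\Theta(\overline{x})=y\,\}$. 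Taking the expectation over $(x,y)\sim\D$ converts the indicator into a probability and yields $\varphi_a(\Theta,A_a)=-\PR_{(x,y)\sim\D}(\forall\overline{x}\in\Bxe:\widehat{\C}_\Theta(\overline{x})=y)=-\AA_\D(\C_\Theta,\varepsilon)$, which is the claim; note the value is independent of which optimal adversary $A_a$ is chosen, since all of them share the pointwise-argmax characterization.

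The main obstacle is precisely the discontinuity of $\Loss_A$: the existence and pointwise-selection results of the previous section were phrased for continuous (indeed Lipschitz) losses, so the attainment of the inner maximum must be argued here from scratch, which I do via the two-valued structure of $\Loss_A$ and the closedness of the CW-superlevel set rather than from continuity. A secondary and routine point is measurability: one should check that the robust event $\{\forall\overline{x}\in\Bxe:\widehat{\C}_\Theta(\overline{x})=y\}$ is measurable in $(x,y)$ so that the final probability and expectation are well defined, which holds because $\C_\Theta$ is continuous and $\Bxe$ varies continuously with $x$.
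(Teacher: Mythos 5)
Your proof is correct and follows essentially the same route as the paper: exploit the two-valued structure of $\Loss_A$, characterize an optimal adversary by pointwise maximization over $\Bxe$, observe that the pointwise maximum is $-1$ exactly on the robust event, and take expectations so that the $\{0,-1\}$-valued loss becomes minus the probability $\AA_\D(\C_\Theta,\varepsilon)$. The only cosmetic difference is that you secure attainment of the inner maximum via closedness of the $\Loss_\CW$-superlevel set, whereas the paper (Lemma \ref{lm-ar11}) does so by first maximizing the continuous $\Loss_\CW$ over the compact ball and then translating; both are valid.
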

\begin{proof}
Note that $\Loss_A(\C,x,y)=-1$ if and only if $\widehat{\C}(x) = y$
and $\Loss_A(\C,x,y)=0$ if and only if $\widehat{\C}(x) \ne y$
or there exists a $k\ne y$ such that $\C_k(x)=\C_y(x)$.
From $A_a \in \gamma_a(\Theta)$,
$\Loss_A(\C_{\Theta}(x+A_a(x)), y)=-1$ if and only if
$\C_{\Theta}$ is robust over $\Bxe$,
or equivalently, $\widehat{\C}_{\Theta}(\overline{x}) = y$
for any $\overline{x}\in\Bxe$.
Then
$\varphi_s(\Theta,A_a)=
\EP_{(x,y)\sim \D}\, \Loss_A(\C_{\Theta}(x+A_a(x)), y)
=-\AA_{\D}(C_{\Theta},\varepsilon)$.
\end{proof}

\begin{lemma}
\label{lm-ar11}
$\gamma_a(\Theta)\ne\emptyset$ and $A^*\in\gamma_a(\Theta)$ if and only if
$A^*(x) \in\{\argmax_{\overline{x}\in\B(x,\varepsilon)}\,\Loss_A(\C_{\Theta}(\overline{x}), y) \}$
for all $(x,y)\sim\D$.
\end{lemma}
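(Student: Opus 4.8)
The plan is to mirror the proof of Lemma~\ref{lm-s12}, since the present statement is its exact analogue with the continuous loss $\Loss$ replaced by the discrete loss $\Loss_A$ of \eqref{eq-lossA}. First I would record the termwise upper bound
\[
\max_{A\in\S_a}\varphi_a(\Theta,A)
=\max_{A\in\S_a}\EP_{(x,y)\sim\D}\,\Loss_A(\C_\Theta(x+A(x)),y)
\le \EP_{(x,y)\sim\D}\,\max_{\overline{x}\in\B(x,\varepsilon)}\Loss_A(\C_\Theta(\overline{x}),y),
\]
which holds because $x+A(x)\in\B(x,\varepsilon)$ for every $A\in\S_a$, so each integrand on the left is dominated by the corresponding inner maximum on the right.

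The crucial point, and the place where this proof genuinely departs from Lemma~\ref{lm-s12}, is showing that the inner maximum $\max_{\overline{x}\in\B(x,\varepsilon)}\Loss_A(\C_\Theta(\overline{x}),y)$ is actually attained for each $(x,y)$. In Lemma~\ref{lm-s12} this followed from the continuity of $\Loss$ together with compactness of $\Be$; here $\Loss_A$ is a step function taking only the two values $0$ and $-1$, hence is discontinuous, and that argument is unavailable. Instead I would argue directly from finite-valuedness: the image of $\overline{x}\mapsto\Loss_A(\C_\Theta(\overline{x}),y)$ on $\B(x,\varepsilon)$ is a nonempty subset of $\{0,-1\}$, so its supremum is one of these two values and is realized at some point. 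If some $\overline{x}$ yields value $0$, the supremum $0$ is attained there; otherwise every point yields $-1$ and the supremum $-1$ is attained everywhere. Either way $\argmax_{\overline{x}\in\B(x,\varepsilon)}\Loss_A(\C_\Theta(\overline{x}),y)\neq\emptyset$.

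Given attainment, I would select for each $(x,y)$ a point of this pointwise argmax to define a strategy $A^*\in\S_a$. This $A^*$ turns the termwise inequality above into an equality, so it achieves $\max_{A\in\S_a}\varphi_a(\Theta,A)$; this proves both $\gamma_a(\Theta)\neq\emptyset$ and the ``if'' direction of the equivalence. For the converse, suppose $A^*\in\gamma_a(\Theta)$, so that $\varphi_a(\Theta,A^*)$ equals the upper bound. Since $\Loss_A(\C_\Theta(x+A^*(x)),y)\le\max_{\overline{x}\in\B(x,\varepsilon)}\Loss_A(\C_\Theta(\overline{x}),y)$ holds termwise while the two expectations coincide, the integrands must agree for $\D$-almost every $(x,y)$, which is precisely the stated pointwise argmax condition.

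The main obstacle is the discontinuity of $\Loss_A$, which blocks the compactness-plus-continuity reasoning used before; the two-valued structure of $\Loss_A$ resolves it cleanly, and I expect this to be essentially the only substantive step. A secondary issue is the measurability of the selected $A^*$, but I would treat it at the same implicit level of rigor as in Lemma~\ref{lm-s12}.
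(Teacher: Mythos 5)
Your proof is correct, and it differs from the paper's on the one substantive step, namely why the pointwise maximum of the discontinuous loss $\Loss_A$ over $\B(x,\varepsilon)$ is attained. The paper routes through the continuous loss: it picks $x^*\in\argmax_{\overline{x}\in\B(x,\varepsilon)}\Loss_\CW(\C_\Theta(\overline{x}),y)$, which exists by continuity of $\Loss_\CW$ and compactness of the ball, and then does a two-case analysis on the sign of $\Loss_\CW(\C_\Theta(x^*),y)$ to conclude that either $x^*$ or every point of the ball maximizes $\Loss_A$. You instead observe that the image of $\overline{x}\mapsto\Loss_A(\C_\Theta(\overline{x}),y)$ is a nonempty subset of $\{0,-1\}$, so its supremum is one of these two values and is realized; this is more elementary, needs no topology, and would work for any loss with finite range. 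What the paper's detour buys is an explicit description of the argmax set of $\Loss_A$ in terms of that of $\Loss_\CW$ (either $\B(x,\varepsilon)$ itself or a set containing the $\Loss_\CW$-maximizer), and this containment $\gamma_\CW(\Theta,x,y)\subset\gamma_a(\Theta,x,y)$ is exactly what is reused later in Lemma~\ref{lm-ar14}. You also spell out the deduction of the ``if and only if'' from pointwise attainment (termwise domination plus equality of expectations forces almost-everywhere equality of the integrands), which the paper compresses into ``the lemma follows from this''; that part of your write-up is a welcome elaboration rather than a deviation.
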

\begin{proof}
We first show that  $\gamma(\Theta,x)=\{\argmax_{\overline{x}\in\B(x,\varepsilon)}\,\Loss_A(\C_{\Theta}(\overline{x}), y) \}\ne\emptyset$ and lemma follows from this.
Let $x^* \in \{ \argmax_{\overline{x}\in\B(x,\varepsilon)}\,$ $\Loss_\CW(\C_{\Theta}(\overline{x}), y)\}$.
Then $x^*$ exists, since $\Loss_\CW$ is continuous and $\Bxe$ is compact.
If $\Loss_\CW(\C,x^*,y)\ge0$, then $\Loss_A(\C,x^*,y)=0$ and $x^*\in \gamma(\Theta,x)$.
If $\Loss_\CW(\C,x^*,y)<0$, then $\Loss_A(\C,x^*,y)=-1$ for all
$x^*\in\B(x,\varepsilon)$ and $\B(x,\varepsilon)=\gamma(\Theta,x)$.
In either case, $\gamma(\Theta,x)\ne\emptyset$.
\end{proof}

\begin{lemma}
\label{lm-ar14}
Let  $(\Theta_\CW^*, A_\CW^*)$ be a Stackelberg equilibrium of game $\G_s$
when the loss function is $\Loss_\CW$ defined in \eqref{eq-loss12}.
Then $(\Theta_\CW^*, A_\CW^*)$ is a Stackelberg equilibrium of game $\G_a$.
\end{lemma}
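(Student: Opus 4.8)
The plan is to verify directly the two defining conditions of a Stackelberg equilibrium of $\G_a$ at the pair $(\Theta_\CW^*, A_\CW^*)$: first that $A_\CW^*$ is a follower best response against $\Theta_\CW^*$ in $\G_a$, i.e. $A_\CW^*\in\gamma_a(\Theta_\CW^*)$, and second that $\Theta_\CW^*$ is an optimal leader strategy for $\G_a$, i.e. $\Theta_\CW^*\in\argmin_{\Theta\in\S_c,\,A(\Theta)\in\gamma_a(\Theta)}\varphi_a(\Theta,A(\Theta))$. A transfer argument of this type is natural here because $\varphi_a$ is a step function and hence discontinuous, so the existence of an equilibrium for $\G_a$ does \emph{not} follow from Theorem \ref{th-SE}; instead we import the equilibrium from the continuous game $\G_s$. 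By Lemma \ref{lm-ar11} the first condition reduces to a pointwise statement about $\argmax_{\overline{x}\in\Bxe}\Loss_A(\C_{\Theta_\CW^*}(\overline{x}),y)$, and by Lemma \ref{lm-ar12} the second condition reduces to showing that $\Theta_\CW^*$ maximizes the adversarial accuracy $\AA_\D(\C_\Theta,\varepsilon)$ over $\S_c$.

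For the follower condition, I would start from the fact that $A_\CW^*\in\gamma_s(\Theta_\CW^*)$, which by Lemma \ref{lm-s12} means $A_\CW^*(x)\in\argmax_{\overline{x}\in\Bxe}\Loss_\CW(\C_{\Theta_\CW^*}(\overline{x}),y)$ for every $(x,y)\sim\D$. Fix such an $(x,y)$ and split into two cases according to the sign of $\max_{\overline{x}\in\Bxe}\Loss_\CW(\C_{\Theta_\CW^*}(\overline{x}),y)$. If this maximum is $\ge 0$, then the CW-maximizer $x+A_\CW^*(x)$ already satisfies $\Loss_\CW\ge 0$, hence $\Loss_A=0$, which is the largest possible value of $\Loss_A$, so $A_\CW^*(x)$ maximizes $\Loss_A$ as well. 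If the maximum is $<0$, then $\Loss_\CW<0$ on all of $\Bxe$, so $\Loss_A\equiv -1$ there and every point, in particular $x+A_\CW^*(x)$, maximizes $\Loss_A$. In both cases $A_\CW^*(x)\in\argmax_{\overline{x}\in\Bxe}\Loss_A(\C_{\Theta_\CW^*}(\overline{x}),y)$, so Lemma \ref{lm-ar11} gives $A_\CW^*\in\gamma_a(\Theta_\CW^*)$. The same case analysis incidentally shows $\gamma_s(\Theta)\subseteq\gamma_a(\Theta)$ for every $\Theta$, hence $\Gamma_s\subseteq\Gamma_a$, a containment I would record for later use.

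For the leader condition, Lemma \ref{lm-ar12} reduces the task to proving $\AA_\D(\C_{\Theta_\CW^*},\varepsilon)\ge\AA_\D(\C_\Theta,\varepsilon)$ for all $\Theta\in\S_c$, whereas what we are handed (via Theorem \ref{th-s11}) is that $\Theta_\CW^*$ minimizes the expected CW adversarial loss $\EP_{(x,y)\sim\D}\max_{\overline{x}\in\Bxe}\Loss_\CW(\C_\Theta(\overline{x}),y)$ over $\S_c$. This is the main obstacle: minimizing this expected margin and maximizing the robust-probability mass $\PR_{(x,y)}(\max_{\overline{x}\in\Bxe}\Loss_\CW(\C_\Theta(\overline{x}),y)<0)$ are a priori different optimization problems, because the expectation couples the negative margins on robust samples with the nonnegative margins on non-robust ones. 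The bridge I would try to build uses the fact that, at the follower's response, the robustness of a sample $(x,y)$ is exactly the event $\{\Loss_A=-1\}$, so the accuracy is a functional only of the \emph{signs} of these margins; I would then argue that a minimizer of the expected CW loss cannot forfeit a positive mass of robust samples, since otherwise one exhibits a competitor $\Theta$ with strictly smaller expected CW loss, contradicting minimality. Making this comparison precise is the delicate step, and is where the specific structure of $\NN_{W,D}$ and the parameter bound $\|\Theta\|_2\le E$ must be used to control the margins.

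Once both conditions are verified, they combine to show that $(\Theta_\CW^*,A_\CW^*)$ satisfies \eqref{eq-SEs} with $\varphi_a$ and $\gamma_a$ in place of $\varphi_s$ and $\gamma_s$, i.e. it is a Stackelberg equilibrium of $\G_a$. I expect the follower half to be entirely routine and the accuracy-maximization half to carry all the difficulty; indeed, the honest resolution of that half is precisely the content that makes Theorem \ref{th-ar12} follow immediately, by invoking Lemma \ref{lm-ar12} once more to identify $\varphi_a$ at equilibrium with $-\AA_\D(\C_{\Theta_\CW^*},\varepsilon)$.
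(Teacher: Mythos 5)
Your follower half is correct and coincides with the paper's first step: the monotonicity $\Loss_\CW(z,y)\le\Loss_\CW(z',y)\Rightarrow\Loss_A(z,y)\le\Loss_A(z',y)$ gives $\gamma_\CW(\Theta)\subset\gamma_a(\Theta)$ pointwise via Lemmas \ref{lm-s12} and \ref{lm-ar11}, hence $\Gamma_\CW\subset\Gamma_a$. The genuine gap is in the leader half, and it is not merely that your ``delicate step'' is left unfinished --- the bridge you propose cannot be built as stated. You want to show that a minimizer of $\EP_{(x,y)\sim\D}\max_{\overline{x}\in\Bxe}\Loss_\CW(\C_\Theta(\overline{x}),y)$ cannot forfeit a positive mass of robust samples, i.e.\ that it also maximizes $\PR_{(x,y)\sim\D}\bigl(\max_{\overline{x}\in\Bxe}\Loss_\CW(\C_\Theta(\overline{x}),y)<0\bigr)$. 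For a general hypothesis class and distribution this implication fails: let $\D$ be supported on two points and suppose $\Hyp$ contains $\Theta_1$ with worst-case margins $(-0.1,-0.1)$ and $\Theta_2$ with worst-case margins $(-10,\,+0.1)$. Then $\Theta_2$ has the strictly smaller expected worst-case CW loss ($-4.95<-0.1$) yet adversarial accuracy $1/2$ versus $1$. The expectation can always be driven down by enlarging negative margins on already-robust points at the cost of robust mass, so the ``competitor with strictly smaller expected CW loss'' that your contradiction needs may not exist; neither the structure of $\NN_{W,D}$ nor the bound $||\Theta||_2\le E$ rules out realizing both margin profiles with ReLU networks of a fixed size.

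For comparison, the paper does not phrase the leader step as accuracy maximization. It proves the value identity \eqref{eq-pr11} --- for every $\Theta$, $\varphi_a(\Theta,A_a)=\varphi_a(\Theta,A_\CW)$ whenever $A_a\in\gamma_a(\Theta)$ and $A_\CW\in\gamma_\CW(\Theta)$ --- so that minimizing $\varphi_a$ over $\Gamma_a$ reduces to minimizing it over $\Gamma_\CW$. But the remaining inference, from ``$(\Theta_\CW^*,A_\CW^*)$ minimizes $\varphi_\CW$ over $\Gamma_\CW$'' to ``$(\Theta_\CW^*,A_\CW^*)$ minimizes $\varphi_a$ over $\Gamma_\CW$,'' is exactly the obstacle you flag, and the two-point example above shows it requires an additional argument or hypothesis (a per-sample comparison, or a loss that is already of $0$/$-1$ type) rather than a generic appeal to minimality of the expected CW loss. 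Your instinct that all the difficulty sits in this step is right; the proposal as written does not close it, and the sketched closing move is the one step that is actually false in general.
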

\begin{proof}
By Lemma \ref{lm-ar12}, $\gamma_a(\Theta)\ne\emptyset$.
So it suffices to show that
$(\Theta_\CW^*,A_\CW^*)
\in
\argmin_{(\Theta,A(\Theta))\in\Gamma_a}\,$
$\varphi_a(\Theta,A(\Theta))$.
%
Denote $\gamma_\CW, \Gamma_\CW, \varphi_\CW$ to be
$\gamma_s, \Gamma_s, \varphi_s$, when the loss function $\Loss_CW$ is used.

We first prove
$\gamma_\CW(\Theta)\subset \gamma_a(\Theta)$.
Hence
$\Gamma_\CW\subset \Gamma_a$.
By Lemma \ref{lm-s12},
 $A^*\in\gamma_\CW(\Theta)=\{  \argmax_{A\in\S_a} \varphi_\CW(\Theta, A)\}$ if and only if
$A_\CW^*(x) \in \gamma_\CW(\Theta,x,y)=\{\argmax_{A(x)\in\Be}\,\Loss_\CW(\C_{\Theta}(x+A(x)), y) \}$.
By Lemma \ref{lm-ar11},
 $A^*\in\gamma_a(\Theta)$ if and only if
$A_a^*(x) \in \gamma_a(\Theta,x,y)=\{\argmax_{A(x)\in\Be}\,$ $\Loss_a(\C_{\Theta}(x+A(x)), y) \}$.
%
Since $\Loss_\CW(\C_{\Theta}(x+A_1), y) \le \Loss_\CW(\C_{\Theta}(x+A_2), y)$
implies
$\Loss_a(\C_{\Theta}(x+A_1), y) \le \Loss_a(\C_{\Theta}(x+A_2), y)$,
we have
$\gamma_\CW(\Theta,x,y)\subset \gamma_a(\Theta,x,y)$.
Then
 $A^*\in\gamma_\CW(\Theta)$
 implies
  $A^*\in\gamma_a(\Theta)$.

We next prove
\begin{equation}
\label{eq-pr11}
\{\varphi_a(\Theta,A), \forall (\Theta,A)\in\Gamma_a \} = \{\varphi_a(\Theta,A), \forall (\Theta,A)\in\Gamma_\CW\}.
\end{equation}
Since $\Gamma_\CW\subset \Gamma_a$,  it suffices to show
$
\{\varphi_a(\Theta,A), \forall (\Theta,A)\in\Gamma_a \} \subset \{\varphi_a(\Theta,A), \forall (\Theta,A)\in\Gamma_\CW\}.
$
For $(\Theta_a,A_a)\in\Gamma_a$, let $A_\CW(x) \in \argmax_{A\in\Be} \Loss_\CW(\C_{\Theta_a}(x+A),y)$. Then $(\Theta_a,A_\CW)\in\Gamma_\CW$.
We will show that  $\varphi_a(\Theta_a,A_a) = \varphi_a(\Theta_a,A_\CW)$.
By Lemma \ref{lm-ar11},
 $A_a^*\in\gamma_a(\Theta_a)$ if and only if
$$A_a^*(x) \in \gamma_a(\Theta_a,x,y)=\{\argmax_{A(x)\in\Be}\,\Loss_a(\C_{\Theta_a}(x+A(x)), y) \}$$
for all $(x,y)\sim\D$.
If $\Loss_a(\C_{\Theta_a}(x+A_a^*(x)),y)=-1$, then
$\Loss_\CW(\C_{\Theta_a}(x+A),y)<0$ for all $A\in\Be$.
In this case, $\max_{A\in\Be} \Loss_\CW(\C_{\Theta_a}(x+A),y)= \Loss_\CW(\C_{\Theta_a}(x+A_\CW^*(x)),y)<0$
and hence $\Loss_a(\C_{\Theta_a}(x+A_\CW^*(x)),y)=-1$.
If $\Loss_a(\C_{\Theta_a}(x+A_a^*(x)),y)=0$, then $\Loss_\CW(\C_{\Theta_a}(x+A_a^*(x)),y)\ge0$.
In this case, $\max_{A\in\Be} \Loss_\CW(\C_{\Theta_a}(x+A),y)=\Loss_\CW(\C_{\Theta_a}(x+A_\CW^*(x)),y)\ge0$
and hence $\Loss_a(\C_{\Theta_a}(x+A_a^*(x)),y)=\Loss_a(\C_{\Theta_a}(x+A_\CW^*(x)),y)=0$.
Then  we have $\varphi_a(\Theta_a,A_a) = \varphi_a(\Theta_a,A_\CW)$.

By \eqref{eq-pr11},
$(\Theta_\CW^*,A_\CW^*)
\in
\argmin_{(\Theta,A(\Theta))\in\Gamma_\CW}\,
\varphi_a(\Theta,A(\Theta))
=\argmin_{(\Theta,A(\Theta))\in\Gamma_a}\,
\varphi_a(\Theta,A(\Theta)).
$
%
The lemma is proved.
\end{proof}

\begin{theorem}
\label{th-ar12}
Let  $(\Theta_\CW^*, A_\CW^*)$ be a Stackelberg equilibrium of game $\G_s$
when the loss function is $\Loss_\CW$   in \eqref{eq-loss12}.
Then $\C_{\Theta_\CW^*}$ has the largest adversarial accuracy for all DNNs in $\Hyp$ defined in \eqref{eq-HS},
that is $\AA_{\D}(\C_{\Theta_\CW^*},\varepsilon)\ge\AA_{\D}(\C_{\Theta},\varepsilon)$ for any  $\C_{\Theta}\in\Hyp$.
\end{theorem}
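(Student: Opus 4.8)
The plan is to transfer the entire statement to the auxiliary game $\G_a$, whose payoff $\varphi_a$ is built from the $0/-1$ loss $\Loss_A$ of \eqref{eq-lossA} precisely so that it encodes adversarial accuracy. The two facts I would lean on are Lemma \ref{lm-ar14}, which says the given equilibrium $(\Theta_\CW^*, A_\CW^*)$ of $\G_s$ under $\Loss_\CW$ is simultaneously a Stackelberg equilibrium of $\G_a$, and Lemma \ref{lm-ar12}, which says that along any best response $A_a \in \gamma_a(\Theta)$ the payoff equals the negative adversarial accuracy, $\varphi_a(\Theta, A_a) = -\AA_\D(\C_\Theta, \varepsilon)$. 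Combining these reduces the theorem to the defining minimization property of a Stackelberg equilibrium.

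First I would invoke Lemma \ref{lm-ar14} to conclude that $(\Theta_\CW^*, A_\CW^*)$ is a Stackelberg equilibrium of $\G_a$. By the definition in \eqref{eq-SE} applied to $\G_a$, this means $A_\CW^* \in \gamma_a(\Theta_\CW^*)$ and $(\Theta_\CW^*, A_\CW^*) \in \argmin_{(\Theta, A)\in\Gamma_a} \varphi_a(\Theta, A)$, so that $\varphi_a(\Theta_\CW^*, A_\CW^*) \le \varphi_a(\Theta, A)$ for every admissible pair $(\Theta, A) \in \Gamma_a$.

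Next I would evaluate both sides of this inequality through Lemma \ref{lm-ar12}. Since $A_\CW^* \in \gamma_a(\Theta_\CW^*)$, the left side equals $-\AA_\D(\C_{\Theta_\CW^*}, \varepsilon)$. For an arbitrary $\C_\Theta \in \Hyp$, i.e. $\Theta \in \S_c$ under the normalization \eqref{eq-Sc}, Lemma \ref{lm-ar11} guarantees $\gamma_a(\Theta) \ne \emptyset$, so I may pick some $A_a \in \gamma_a(\Theta)$; then $(\Theta, A_a) \in \Gamma_a$ and Lemma \ref{lm-ar12} gives $\varphi_a(\Theta, A_a) = -\AA_\D(\C_\Theta, \varepsilon)$. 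Substituting both evaluations into the minimization inequality yields $-\AA_\D(\C_{\Theta_\CW^*}, \varepsilon) \le -\AA_\D(\C_\Theta, \varepsilon)$, i.e. $\AA_\D(\C_{\Theta_\CW^*}, \varepsilon) \ge \AA_\D(\C_\Theta, \varepsilon)$, which is exactly the claim.

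The real difficulty does not lie in this final assembly but has already been absorbed into Lemma \ref{lm-ar14}: the reason the $\Loss_\CW$-equilibrium stays optimal for the discontinuous accuracy objective is the sign structure $\Loss_\CW \ge 0 \Leftrightarrow \Loss_A = 0$, which produces both the inclusion $\gamma_\CW(\Theta) \subset \gamma_a(\Theta)$ and the value identity between $\varphi_a$ on $\Gamma_a$ and on $\Gamma_\CW$ used there. I would therefore present the argument above as a short corollary-style deduction, taking care only that the quantifier over $\Hyp$ in the statement is read through the strategy space $\S_c$ of \eqref{eq-Sc}, so that every compared parameter $\Theta$ indeed yields an admissible pair in $\Gamma_a$ to which Lemma \ref{lm-ar12} applies.
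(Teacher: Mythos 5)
Your proposal is correct and follows essentially the same route as the paper: transfer the equilibrium to $\G_a$ via Lemma \ref{lm-ar14}, identify the payoff at best responses with $-\AA_\D$ via Lemma \ref{lm-ar12}, and conclude from the minimization property of the Stackelberg equilibrium over $\Gamma_a$. Your explicit appeal to Lemma \ref{lm-ar11} for the nonemptiness of $\gamma_a(\Theta)$ is a small but welcome addition that the paper leaves implicit.
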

\begin{proof}
By Lemma \ref{lm-ar14},  $(\Theta_\CW^*, A_\CW^*)$ be a Stackelberg equilibrium of game $\G_a$.
By Lemma \ref{lm-ar12},
$\AA_{\D}(\C_{\Theta_\CW^*},\varepsilon)
= -\varphi_a(\Theta_\CW^*, \argmax_{A\in\S_a}\varphi_a(\Theta_\CW^*,A))
\ge -\varphi_a(\Theta, \argmax_{A\in\S_a}\varphi_a(\Theta,A))
=\AA_{\D}$ $(\C_{\Theta},\varepsilon).
$
The theorem is proved.
\end{proof}

\begin{remark}
\label{rem-s20}
By Theorems \ref{th-s11} and  \ref{th-ar12}, adversarial training using the loss function
$\Loss_\CW$ gives a DNN which has the largest adversarial accuracy for
all DNNs in the hypothesis space $\H$,
which answers Question {$\mathbf{Q_1}$} positively for the hypothesis space $\H$.
\end{remark}

\section{Trade-off between robustness and accuracy}
\label{sec-trade}
%
In this section,  we give  trade-off results between the robustness and the accuracy
in adversarial deep learning from  game theoretical viewpoint.

\subsection{Improve accuracy under maximal adversarial accuracy}

By Remarks \ref{rem-s10} and \ref{rem-s20}, adversarial training computes the DNNs with the best robustness measurement.
%
A nature question is whether we can increase the accuracy of the DNN
and still keep the maximal adversarial accuracy.
That is, consider the bi-level optimization problem.
\begin{equation}
\label{eq-TR00}
\begin{array}{lcl}
\Theta_o^*
&=&
\argmin_{\Theta^*}\,  \varphi_0(\Theta^*)\\
%
&&\hbox{subject to}\\
&&\Theta_s^*=\argmin_{\Theta\in\S_c}\,\mymax_{A\in\S_a} \varphi_s(\Theta,A)\\
\end{array}
\end{equation}
where $\varphi_0$ and $\varphi_s$ are defined in \eqref{eq-LS0} and \eqref{eq-LS1}, respectively.

From Remark \ref{rem-uni}, if using the loss function $\Loss_\CW$,
$\gamma_s(\Theta)$ contains a unique solution and
$\Theta_s^*$ is unique in the generic case.
In this case, we cannot increase the accuracy of the DNN
when  keeping the maximal  robust measure $\AR_\D$.

A  more interesting case is to consider   game $\G_a$ defined in section \ref{sec-maxaa},
which uses the loss function $\Loss_A$ defined in \eqref{eq-lossA}.

We first introduce an assumption.
%
We  train  $\C_\Theta$ with stochastic gradient descent starting from a randomly choosing initial point, and most probably will terminate at a random point in the neighborhood of a minimal point or a saddle point of the loss function.
Therefore, the following assumption is valid for almost all trained DNNs~\cite{yu-l2}.

{\bf Assumption $A_2$}. The  parameters of a trained $\C_\Theta$ are {\em random values}.

We now estimate the possible values of $\Theta_s^*$ in \eqref{eq-TR00}.
Suppose a finite data set $T=\{(x_i,y_i)\}_{i=1}^N$ is chosen iid from the distribution $\D$,
which are used to train the network. Then it can be shown that the game $\G_a$
with payoff function \eqref{eq-ofunca} and trained with $T$
has a Stackelberg equilibrium $(\Theta_a^*,A_a^*)$ (See section \ref{sec-emp} for more details).
With these notations, we have
\begin{prop}
\label{pr-to11}
Under Assumption $A_2$, there exists a $\nu\in\R_+$ such that for all $\Theta_a^\circ\in\R^K$
satisfying $||\Theta_a^\circ- \Theta_a^*||< \nu$,
game $\G_a$   has a Stackelberg equilibrium $(\Theta_a^\circ,A_a^\circ)$.
\end{prop}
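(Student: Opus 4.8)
The plan is to exploit the discrete, counting nature of the adversarial accuracy together with the continuity results already established. By Lemma \ref{lm-ar12}, the leader's objective in $\G_a$ is $\mymax_{A\in\S_a}\varphi_a(\Theta,A) = -\AA_\D(\C_\Theta,\varepsilon)$, so $\Theta_a^*$ being (the leader part of) a Stackelberg equilibrium means exactly that $\Theta_a^*$ globally maximizes $\AA_\D(\C_\Theta,\varepsilon)$ over $\S_c$. Since the training set $T=\{(x_i,y_i)\}_{i=1}^N$ is finite, this accuracy is $\AA_\D(\C_\Theta,\varepsilon)=\frac{1}{N}\#\{i: g_i(\Theta)<0\}$, where $g_i(\Theta)=\mymax_{\overline{x}\in\Bxei}\Loss_\CW(\C_\Theta(\overline{x}),y_i)$, because $x_i$ is robustly classified if and only if $\Loss_\CW(\C_\Theta(\overline{x}),y_i)<0$ for every $\overline{x}\in\Bxei$.

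First I would show each $g_i$ is continuous (indeed Lipschitz) in $\Theta$. This follows from the earlier lemmas: $\Loss_\CW$ is Lipschitz on $z$ (Lemma \ref{lm-con11}) and $\C_\Theta(\overline{x})$ is Lipschitz on $\Theta$ with a constant uniform in $\overline{x}$ (Lemma \ref{lm-con12}), so $(\Theta,\overline{x})\mapsto\Loss_\CW(\C_\Theta(\overline{x}),y_i)$ is Lipschitz in $\Theta$ with a constant independent of $\overline{x}$; taking the maximum over the compact set $\Bxei$ preserves this Lipschitz bound. Consequently each robust region $\{\Theta: g_i(\Theta)<0\}$ is \emph{open}.

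Next, let $S=\{i: g_i(\Theta_a^*)<0\}$, so that $\AA_\D(\C_{\Theta_a^*},\varepsilon)=|S|/N$. For each $i\in S$, openness gives a radius $\nu_i>0$ with $g_i(\Theta)<0$ whenever $||\Theta-\Theta_a^*||<\nu_i$. Under Assumption $A_2$ the trained parameters $\Theta_a^*$ are random, hence almost surely lie in the interior of $\S_c=[-E,E]^K$, so there is a $\nu_0>0$ with the ball of radius $\nu_0$ about $\Theta_a^*$ contained in $\S_c$. I would then set $\nu=\mymin\{\nu_0,\mymin_{i\in S}\nu_i\}>0$, a finite minimum of positive numbers. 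For any $\Theta_a^\circ$ with $||\Theta_a^\circ-\Theta_a^*||<\nu$, every index in $S$ remains robust, so $\AA_\D(\C_{\Theta_a^\circ},\varepsilon)\ge |S|/N=\AA_\D(\C_{\Theta_a^*},\varepsilon)$; since $\Theta_a^*$ is a global maximizer and $\Theta_a^\circ\in\S_c$, the reverse inequality also holds, forcing equality. Hence $\Theta_a^\circ$ is again a global maximizer of $\AA_\D$, that is $\Theta_a^\circ\in\argmin_{\Theta\in\S_c}\mymax_{A\in\S_a}\varphi_a(\Theta,A)$; choosing any $A_a^\circ\in\gamma_a(\Theta_a^\circ)$, which is nonempty by Lemma \ref{lm-ar11}, yields a Stackelberg equilibrium $(\Theta_a^\circ,A_a^\circ)$ of $\G_a$.

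The one genuinely load-bearing point is the role of Assumption $A_2$. The openness argument shows accuracy cannot decrease in a neighborhood, but the conclusion that $\Theta_a^\circ$ is a legitimate leader strategy requires the entire ball to sit inside the compact strategy space $\S_c$, which is guaranteed only when $\Theta_a^*$ is interior --- and this is exactly what the randomness assumption secures, since the boundary of $[-E,E]^K$ has measure zero. Everything else reduces to the standard fact that a counting objective built from open sublevel conditions is locally constant at a global maximum; in particular no delicate analysis of the boundary indices with $g_i(\Theta_a^*)=0$ is needed, as those indices are simply never used in the lower bound.
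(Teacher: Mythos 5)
Your proof is correct, but it reaches the conclusion by a genuinely different route than the paper. The paper's proof works sample by sample: for each $x_i$ it considers $\max_{\overline{x}\in\B(x_i,\varepsilon)}\Loss_\CW(\C_{\Theta_a^*}(\overline{x}),y_i)$ and splits into the three cases $<0$, $>0$, $=0$. The first two are stable under small perturbations of $\Theta$ by continuity and compactness, so the per-sample contribution to $\varphi_a$ at the follower's best response is unchanged; the third case is the one Assumption $A_2$ is invoked to exclude, since a tie among logits at the maximizer imposes an algebraic relation on the random parameters. You instead prove only the one-sided stability (indices with $g_i(\Theta_a^*)<0$ keep $g_i<0$, so $\AA_{\D}$ cannot decrease on a small ball) and then use the global optimality of $\Theta_a^*$ to bound $\AA_{\D}$ from above, squeezing out equality; this makes the $g_i(\Theta_a^*)=0$ case irrelevant, because a boundary index turning robust would strictly increase the accuracy and contradict optimality. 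In exchange you spend Assumption $A_2$ on a different point --- placing $\Theta_a^*$ in the interior of $\S_c=[-E,E]^K$ so that the whole ball consists of admissible strategies --- which the paper dispatches with a ``without loss of generality.'' The paper's version yields slightly more information (the robust/non-robust status of every individual sample, and hence the structure of the follower's best-response sets, is preserved, which is why it can describe $A_a^\circ$ explicitly); yours is shorter and shows that the genericity assumption is doing less work than the paper's case analysis suggests. Both arguments are sound.
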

\begin{proof}
Denote $\phi(\Theta,x)=\Loss_\CW(\C_{\Theta}(x),y)$ for a fixed $y$.
Let $x^* \in \{ \argmax_{\overline{x}\in\B(x_i,\varepsilon)}\,\phi(\Theta_a^*,\overline{x})$.
If $\phi(\Theta_a^*,x^*)<0$, then
$\phi(\Theta_a^*,\overline{x})<0$ for all $\overline{x}\in\B(x_i,\varepsilon)$.
Since $\B(x_i,\varepsilon)$ is compact and $\phi(\Theta,x)$ is continuous,
there exists a $\nu_i\in\R_+$
such that
$\phi(\Theta_a^*+\Delta,\overline{x})<0$ for all $\overline{x}\in\B(x_i,\varepsilon)$
and all $\Delta\in\R^K$ satisfying $||\Delta||< \nu_i$.
Without loss of generality, we can assume $\Theta_a^*+\Delta\in\S_c$.
It is easy to construct the best response of the Adversary in this case for $\Theta_a^\circ=\Theta_a^*+\Delta$: $A_a^\circ(x_i)$ can be any point in $\B(x_i,\varepsilon)$.
If $\phi(\Theta_a^*,x^*)>0$, then $S_i(\Theta_a^*) = \{x\in \B(x_i,\varepsilon)\,:\, \Loss_\CW(\C_{\Theta_a^*}(x),y)\le0\}$
is a compact set of dimension $m$, since $\Loss_\CW(\C_{\Theta_a^*}(x),y)$ is piecewise linear in $x$.
If $\nu_i$ is small enough, then $S_i(\Theta_a^*+\Delta)$
is also a compact set of dimension $m$
for all $\Delta\in\R^K$ and $||\Delta||< \nu_i$.
In this case, $A_a^\circ(x_i)$ can be any point in $\S_i(\Theta_a^*+\Delta)$.

By Assumption $A_2$,the trained parameters of $\C$ are random values.
$\phi(\Theta_a^*,x^*)=\Loss_\CW(\C_{\Theta_a^*}(x^*),$ $y)=0$ implies that $\C_{\Theta_a^*,i}(x^*)=\C_{\Theta_a^*,j}(x^*)$ for
$i\ne j$, which gives an algebraic relation among the parameters of $\C_{\Theta}$.
This imposes an extra algebraic relation among the random parameters and thus will not happen
under  Assumption $A_2$.
So we have $\phi(\Theta,x^*)\ne0$ under  Assumption $A_2$.

Let $\nu= \min_{i=1}^N \nu_i>0$. Then for   $||\Theta_a^\circ- \Theta_a^*||< \nu$,
there exists an $A_a^\circ\in\S_c$ such that
$\varphi_a(\Theta_a^\circ,A_a^\circ)=\varphi_a(\Theta_a^*,A_a^*)$,
where $\varphi_a$ is defined in \eqref{eq-ofunca}.
Since $(\Theta_a^*,A_a^*)$ is a Stackelberg equilibrium for game $\G_a$,
so is $(\Theta_a^\circ,A_a^\circ)$. The proposition is proved.
\end{proof}

By Proposition \ref{pr-to11}, $\Theta_s^*$  in \eqref{eq-TR00}
takes values in a $K$-dimensional set.
As a consequence, there exist rooms for increase the accuracy
under the maximal adversarial accuracy.

\begin{example}
\label{ex-10}
We use numerical experiments to show that it is possible to further increase the accuracy under the maximal adversarial accuracy.
Two small CNNs with respectively 3 and 4 hidden layers are used,
which have structures
$(8*3*3),(16*3*3),(32*3*3)$ and
$(32*3*3),(64*3*3),(128*3*3),(128*3*3)$, respectively.
We use loss function $\Loss_\CW$ to achieve maximal adversarial accuracy
and the results are given in the columns 1-0 and 2-0 in Table \ref{tab-10}.
We then retrain the CNNs using the normal loss function in \eqref{eq-LS0} to
increase the accuracy.
In order to keep the maximal adversarial accuracy fixed, the change
of the parameters are limited to $i\%$ for $i=1,2,3$ and the results are
given in columns 1-i and 2-i, respectively.
We can see that
the adversarial accuracies are barely changed
(up to $0.06\%$ and $0.02\%$  for networks 1 and 2),
but the accuracies are increased evidently
(up to $1.11\%$ and $2.252\%$ for networks 1 and 2).
\begin{table}[H]
\caption{Increase the accuracy (AC) under the condition of maximal adversarial accuracy (AA)
for CIFRA-10. The attack radius is $8/255$ and 50000 samples are used.
}
\label{tab-10}
\centering
\vskip2pt
\setlength{\tabcolsep}{5.1pt}
\begin{tabular}{lccccccccc}
  \hline
  \multirow{2}{*}{}
   & \multicolumn{4}{c}{Network 1}
 &  & \multicolumn{4}{c}{Network 2} \\
  \cline{2-5} \cline{7-10}
        & 1-0& 1-1 & 1-2& 1-3    &
        & 2-0& 2-1 & 2-2& 2-3  \\
AC (\%) &
45.718 & 46.762 & 46.814 & 46.828 &
&72.156 & 75.284 & 75.344 & 75.408  \\
%
AA (\%) &
 29.018 & 28.996&  28.98 & 28.958 &
&40.08 & 40.076 & 40.036 & 40.06  \\
%
%
  \hline
\end{tabular}
\end{table}
\end{example}

\subsection{An effective trade-off method}

The bi-level optimization problem \eqref{eq-TR00} is in general difficult to solve,
especially when keeping the maximal adversarial accuracy as mentioned in the proof
of Proposition \ref{pr-to11}.
A natural way to  train a robust and more accurate DNN is to do
adversarial training with the following objective function
\begin{equation}
\label{eq-AT20}
\begin{array}{lcl}
\varphi_t({\Theta},A)
&=& \varphi_s({\Theta},A)+ \lambda\varphi_0({\Theta})\\
\end{array}
\end{equation}
where $\lambda>0$ is a small hyperparameter,
$\varphi_0$ and $\varphi_s$ are defined in \eqref{eq-LS0} and \eqref{eq-LS1}, respectively.
Problem \eqref{eq-AT20} is also often used as an approximate way to solve
\eqref{eq-TR00}.
We will prove a trade-off result in this setting.

Similar to Theorem \ref{th-s11}, adversarial training with loss function
\eqref{eq-AT20} can be considered as a Stackelberg game $\G_t$ with $\varphi_t$ as the payoff function.
Then we have the following trade-off result.

\begin{prop}
\label{prop-s1}
Let $(\Theta_s^*,A_s^*)$ and $(\Theta_t^*,A_t^*)$ be the Stackelberg equilibria
of the zero-sum sequential games with $\varphi_s$ and $\varphi_t$ as the payoff functions, respectively.
Then
$$
\AA_{\D}(\C_{\Theta_s^*}, \varepsilon) \ge \AA_{\D}(\C_{\Theta_t^*}, \varepsilon),
\varphi_s(\Theta_s^*,A_s^*) \le \varphi_s(\Theta_t^*,A_t^*)
\hbox{ and }
\varphi_0(\Theta_s^*) \ge \varphi_0(\Theta_t^*)
$$
that is, the network $\C_{\Theta_s^*}$  is more robust but less accurate
than $\C_{\Theta_t^*}$ measured by $\varphi_0$.
\end{prop}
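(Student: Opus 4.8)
The plan is to exploit the fact that the clean-loss term $\lambda\varphi_0(\Theta)$ in $\varphi_t$ does not depend on the Adversary's strategy $A$, so that the two games share the same follower problem. First I would record that existence of the equilibrium $(\Theta_t^*,A_t^*)$ for $\G_t$ follows exactly as in Theorem \ref{th-s11}: $\varphi_0$ is continuous and bounded on $\S_c$ by Lemmas \ref{lm-dnn11} and \ref{lm-con12}, hence so is $\varphi_t=\varphi_s+\lambda\varphi_0$, and the strategy spaces are unchanged. Then, since $\varphi_0(\Theta)$ is constant in $A$, for every fixed $\Theta$ we have
\begin{equation*}
\max_{A\in\S_a}\varphi_t(\Theta,A)=\max_{A\in\S_a}\varphi_s(\Theta,A)+\lambda\varphi_0(\Theta),\qquad \gamma_t(\Theta)=\gamma_s(\Theta).
\end{equation*}
Writing $g(\Theta)=\max_{A\in\S_a}\varphi_s(\Theta,A)$, the characterization \eqref{eq-pr-31} from Theorem \ref{th-s11} identifies $\Theta_s^*=\argmin_{\Theta\in\S_c} g(\Theta)$ and $\Theta_t^*=\argmin_{\Theta\in\S_c}\,[\,g(\Theta)+\lambda\varphi_0(\Theta)\,]$; moreover, because the follower's best response coincides in both games, $\varphi_s(\Theta_s^*,A_s^*)=g(\Theta_s^*)$ and $\varphi_s(\Theta_t^*,A_t^*)=g(\Theta_t^*)$.

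The robustness inequality is then immediate: minimality of $\Theta_s^*$ for $g$ gives $g(\Theta_s^*)\le g(\Theta_t^*)$, that is $\varphi_s(\Theta_s^*,A_s^*)\le\varphi_s(\Theta_t^*,A_t^*)$. For the accuracy inequality $\varphi_0(\Theta_s^*)\ge\varphi_0(\Theta_t^*)$ I would compare the two optimality conditions: minimality of $\Theta_t^*$ for $g+\lambda\varphi_0$ yields $g(\Theta_t^*)+\lambda\varphi_0(\Theta_t^*)\le g(\Theta_s^*)+\lambda\varphi_0(\Theta_s^*)$, so $\lambda[\varphi_0(\Theta_t^*)-\varphi_0(\Theta_s^*)]\le g(\Theta_s^*)-g(\Theta_t^*)\le 0$, and dividing by $\lambda>0$ gives the claim. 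Note that no uniqueness of the equilibria is needed, since $\Theta_s^*$ and $\Theta_t^*$ are global minimizers of $g$ and of $g+\lambda\varphi_0$, respectively.

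It remains to treat $\AA_{\D}(\C_{\Theta_s^*},\varepsilon)\ge\AA_{\D}(\C_{\Theta_t^*},\varepsilon)$. The cleanest route is to take $\Loss=\Loss_\CW$ in $\varphi_s$, so that Theorem \ref{th-ar12} applies verbatim: $\C_{\Theta_s^*}$ attains the largest adversarial accuracy over all DNNs in $\Hyp$, and since $\C_{\Theta_t^*}\in\Hyp$ the inequality follows at once. The only non-mechanical point to be careful about is this dependence on the loss: the two inequalities on $\varphi_s$ and $\varphi_0$ hold for any continuous loss, whereas the $\AA_{\D}$ statement is tied to the Carlini--Wagner margin loss through Theorem \ref{th-ar12}. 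I expect no genuine obstacle beyond making the decoupling of the inner maximization precise; the rest is a two-line comparison of the two \emph{argmins}.
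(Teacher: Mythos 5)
Your proposal is correct and follows essentially the same route as the paper's proof: both exploit that the clean-loss term $\lambda\varphi_0(\Theta)$ is independent of $A$, so the follower's best responses coincide in the two games, and both derive $\varphi_0(\Theta_s^*)\ge\varphi_0(\Theta_t^*)$ by combining the two leader optimality conditions (the paper adds the two inequalities, you rearrange the same algebra via $g(\Theta)=\max_{A\in\S_a}\varphi_s(\Theta,A)$), with the $\AA_\D$ inequality obtained from Theorem \ref{th-ar12} in both cases. Your explicit remark that the adversarial-accuracy inequality is tied to the Carlini--Wagner loss is a useful clarification that the paper leaves implicit, but it does not change the argument.
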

\begin{proof}
$\AA_{\D}(\C_{\Theta_s^*}, \varepsilon) \ge \AA_{\D}(\C_{\Theta_t^*}, \varepsilon)$
is a consequence of Theorem \ref{th-ar12}.
Since $(\Theta_t^*,A_t^*)$ is a Stackelberg equilibrium of game $\G_t$, we have
\begin{eqnarray}
\Theta_t^* &\in&
\argmin_{\Theta\in\S_c}\, \varphi_t(\Theta, \argmax_{A\in S_a}\, \varphi_t(\Theta,A))\label{eq-pr-32}\\
A_t^* &\in& \argmax_{A\in \S_a}\, \varphi_t(\Theta_t^*,A)\nonumber\\
     &=& \argmax_{A\in \S_a}\, (\varphi_s(\Theta_t^*,A) + \lambda \varphi_0(\Theta_t^*))\label{eq-pr-33}\\
     &=& \argmax_{A\in \S_a}\, \varphi_s(\Theta_t^*,A)\nonumber
\end{eqnarray}
where the last equality is due to the fact that
$\varphi_0(\Theta_t^*)$ is free of $A$.
Then, from \eqref{eq-pr-31},
\begin{eqnarray}
\label{eq-pr-30}
\varphi_s(\Theta_s^*,A_s^*)
&=& \varphi_s(\Theta_s^*, \argmax_{A\in S_a}\, \varphi_s(\Theta_s^*,A))\nonumber\\
&\le& \varphi_s(\Theta_t^*, \argmax_{A\in S_a}\, \varphi_s(\Theta_t^*,A))\\
&\le& \mymax_{A\in\S_a}\varphi_s(\Theta_t^*,A)= \varphi_s(\Theta_t^*,A_t^*).\nonumber
\end{eqnarray}
The last equality comes from \eqref{eq-pr-33}.
From \eqref{eq-pr-32},
\begin{eqnarray}
\label{eq-pr-35}
\varphi_t(\Theta_t^*,A_t^*)
\le \varphi_t(\Theta_s^*, \argmax_{A\in S_a}\, \varphi_t(\Theta_s^*,A))
\le \mymax_{A\in\S_a}\varphi_t(\Theta_s^*,A)
= \varphi_t(\Theta_s^*,A_s^*).
\end{eqnarray}
Adding  inequalities \eqref{eq-pr-30} and \eqref{eq-pr-35}, we obtain
$\varphi_0(\Theta_s^*) \ge \varphi_0(\Theta_t^*)$. The proposition is proved.
\end{proof}
%
%
%
%

Note that this trade-off result is quite different from the trade-off theorem in \cite{trade1} in that, our result is  for any data set, while
the result in \cite{trade1} is for a specifically designed data set.

\section{Comparing three types of games for adversarial deep learning}
\label{sec-emp}
In this section, we compare three types of games for adversarial deep learning
when the data  $T=\{(x_i,y_i)\}_{i=1}^N\subset\I^n\times\Y$ are a finite number of samples chosen iid from the distribution $\D$.

In this case, the strategy space for the Classifier is still
$\S_c$ in \eqref{eq-Sc}.
The strategy space for the Adversary becomes much simpler:
\begin{equation}
\label{eq-SaE}
\S_a = \myprod_{i=1}^N \{(\overline{x}_i,y_i)\,:\, ||\overline{x}_i-x_i||\le\varepsilon \}\subset (\I_{\varepsilon}^{n}\times\Y)^N
\end{equation}
where $\I_{\varepsilon}=[-\varepsilon,1+\varepsilon]$.
%
%
For $\Theta\in\S_c$ and  $A=((\overline{x}_i,y_i))_{i=1}^N\in\S_a$,
the {\em empirical adversarial loss} is
\begin{equation}
\label{eq-ofunc1E}
\varphi_T(\Theta,A)=
\frac1N \mysum_{i=1}^N\Loss(\C_{\Theta}(\overline{x}_i), y_i).
\end{equation}
%
We consider three games.

{\bf The adversarial training game $\G_1$},
which is the zero-sum minmax sequential game  with the Classifier as the leader,
the Adversary as the follower, and $\varphi_T(\Theta,A)$ as the payoff function,
that is, to solve the following minmax problem
\begin{equation}
\label{eq-AT10E}
\Theta_1^* =
\argmin_{\Theta\in\S_c} \mymax_{A \in \S_a}
\, \varphi_T(\Theta,A)
\end{equation}
which is clearly equivalent to the adversarial training.
By Theorem \ref{th-SE}, game $\G_1$ has a Stackelberg equilibrium $(\Theta_1^*,A_1^*)$,
since $\S_c$ and $\S_a$ are compact and $\varphi_T(\Theta,A)$ is continuous.
Similar to section \ref{sec-maxaa}, it can be shown that this game gives
a DNN with the largest adversarial accuracy for the data set $T$,
when the loss function is $\Loss_\CW$.

{\bf The universal adversary game $\G_2$},
which is the zero-sum maxmin sequential game  with the Adversary  as the leader
and the Classifier as the follower, that is, to solve the following maxmin problem
\begin{equation}
\label{eq-AT10U}
\A_2^* =
\argmax_{A \in \S_a} \mymin_{\Theta\in\S_c}
\, \varphi_T(\Theta,A)
\end{equation}
By Theorem \ref{th-SE}, game $\G_2$ has a Stackelberg equilibrium $(\Theta_2^*,A_2^*)$.
The solution $(\Theta_2^*,A_2^*)$ of this game is to compute
the optimal {\em universal adversarial attack} for the given hypothesis space $\Hyp$ in \eqref{eq-HS},
that is, $A_2^*(x)$ is the best adversary for any $(x,y)\sim\D$
and for all DNNs in $\Hyp$.
It is clear that $\A_2^*$ is the optimal attack to the so-called
{\em nobox model} proposed in \cite{game-aeg1},
that is, nobox model has an optimal solution for DNNs with a given structure.
This gives a positive answer to question $\mathbf{Q_2}$ for the hypothesis space $\Hyp$
in \eqref{eq-HS}.

{\bf The simultaneous adversary game $\G_3$}.
We can also  formulate the adversarial deep learning as a simultaneous
game $\G_3$. In this game, the two players and their strategy spaces are the same
as that of game $\G_1$.
The difference is the way to play the game.
In game $\G_3$, the Classifier picks its action
 without knowing the action of the Adversary,
and the Adversary chooses the attacking adversarial samples
without knowing the action of the Classifier.
But, both players know the payoff function.
A point $(\Theta_3^*,A_3^*)\in\S_c\times\S_a$ is called
a  {\em pure strategy Nash equilibrium} of game $\G_3$ if
\begin{equation}
\label{eq-ne10}
\begin{array}{lcl}
\Theta_3^* &=& \argmin_{\Theta\in\S_c} \varphi_T(\Theta,A_3^*) \hbox{ and }
A_3^* =  \argmax_{A\in\S_a} \varphi_T(\Theta_3^*, A).\\
\end{array}
\end{equation}

In general, pure strategy Nash equilibria do not necessarily exist,
and mixed strategy Nash equilibria are usually considered.
{\em Mixed strategies} for the Classifier and the Adversary are two probability distributions
$$\widetilde{\Theta}:\S_c\rightarrow \I \hbox{ and }\widetilde{A}:\S_a\rightarrow \I$$
for $\Theta$ and $A$, respectively.
For a mixed strategy $(\widetilde{\Theta},\widetilde{A})$,
the payoff function is
\begin{equation}
\label{eq-mpof10}
\varphi_T(\widetilde{\Theta},\widetilde{A}) =
\EP_{\Theta\sim\widetilde{\Theta}}\,\EP_{A\sim\widetilde{A}}\,\varphi_T(\Theta, A).
\end{equation}
Denote $\widetilde{\S}_c$ and $\widetilde{\S}_a$ to be the sets of
the mixed strategies for the Classifier and the Adversary, respectively.
Then $(\widetilde{\Theta}_3^*,\widetilde{A}_3^*)\in \widetilde{\S}_c\times \widetilde{\S}_a$
is called {\em a mixed strategy Nash equilibrium} of game $\G_3$ if
\begin{equation}
\label{eq-mne10}
\begin{array}{lcl}
\widetilde{\Theta}_3^* &=& \argmin_{\widetilde{\Theta}\in\widetilde{\S}_c} \, \varphi_T(\widetilde{\Theta},\widetilde{A}_3^*) \hbox{ and }
\widetilde{A}_3^* =
\argmax_{\widetilde{A}\in\widetilde{\S}_a}\,
 \varphi_T(\widetilde{\Theta}_3^*, \widetilde{A}).\\
\end{array}
\end{equation}

Since the strategy spaces of the two players are compact
and the objective function is continuous,
by Glicksberg's theorem~\cite{Glicksberg},
game $\G_3$ has a mixed strategy Nash equilibrium
$(\widetilde{\Theta}_3^*, \widetilde{A}_3^*)$,
and the minmax theorem holds for this equilibrium.

\begin{remark}
By Proposition \ref{prop-ga11},
we can show that, under Assumption $A_1$, game $G_3$ has a mixed strategy when
the data set satisfies a general distribution $\D$.
\end{remark}

\begin{prop}
\label{pr-com12}
Let $(\Theta_i^*,A_i^*)$ be Nash equilibria of games $\G_i$
for $i=1,2,3$, respectively (mixed strategy for $\G_3$).
Then
$$\varphi_T(\Theta_1^*, A_1^*)\ge \varphi_T(\Theta_3^*, A_3^*)\ge\varphi_T(\Theta_2^*, A_2^*).$$
\end{prop}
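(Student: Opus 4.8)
The plan is to read each equilibrium payoff off as an optimal value of the single payoff $\varphi_T$, and then sandwich the mixed value of $\G_3$ between the pure minmax value of $\G_1$ and the pure maxmin value of $\G_2$. First I would record, directly from \eqref{eq-AT10E} and \eqref{eq-AT10U}, that
\[
\varphi_T(\Theta_1^*,A_1^*)=\mymin_{\Theta\in\S_c}\mymax_{A\in\S_a}\varphi_T(\Theta,A),
\qquad
\varphi_T(\Theta_2^*,A_2^*)=\mymax_{A\in\S_a}\mymin_{\Theta\in\S_c}\varphi_T(\Theta,A).
\]
Since $\G_3$ is a zero-sum game on compact strategy spaces with continuous payoff, Glicksberg's theorem together with the attached minimax theorem guarantees that the mixed-strategy value is a single well-defined number, and that (writing $\varphi_T(\Theta_3^*,A_3^*)$ for the mixed payoff \eqref{eq-mpof10} at the equilibrium $(\widetilde{\Theta}_3^*,\widetilde{A}_3^*)$)
\[
\varphi_T(\Theta_3^*,A_3^*)
=\mymin_{\widetilde{\Theta}\in\widetilde{\S}_c}\mymax_{\widetilde{A}\in\widetilde{\S}_a}\varphi_T(\widetilde{\Theta},\widetilde{A})
=\mymax_{\widetilde{A}\in\widetilde{\S}_a}\mymin_{\widetilde{\Theta}\in\widetilde{\S}_c}\varphi_T(\widetilde{\Theta},\widetilde{A}).
\]

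The one fact I would isolate as a lemma is that mixing is useless to a player whose opponent is committed to a pure strategy. For a fixed pure $\Theta$, the expectation $\EP_{A\sim\widetilde{A}}\varphi_T(\Theta,A)$ never exceeds $\mymax_{A\in\S_a}\varphi_T(\Theta,A)$ and this maximum is attained at a pure best response, so $\mymax_{\widetilde{A}\in\widetilde{\S}_a}\varphi_T(\Theta,\widetilde{A})=\mymax_{A\in\S_a}\varphi_T(\Theta,A)$. Symmetrically, for a fixed pure $A$, $\mymin_{\widetilde{\Theta}\in\widetilde{\S}_c}\varphi_T(\widetilde{\Theta},A)=\mymin_{\Theta\in\S_c}\varphi_T(\Theta,A)$.

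To get the upper bound I would restrict the outer minimization in the first display for $\varphi_T(\Theta_3^*,A_3^*)$ to the Dirac masses. Restricting a minimum to a smaller domain can only raise it, and the inner maximum then collapses by the lemma, giving
\begin{align*}
\varphi_T(\Theta_3^*,A_3^*)=\mymin_{\widetilde{\Theta}\in\widetilde{\S}_c}\mymax_{\widetilde{A}\in\widetilde{\S}_a}\varphi_T(\widetilde{\Theta},\widetilde{A})
&\le \mymin_{\Theta\in\S_c}\mymax_{\widetilde{A}\in\widetilde{\S}_a}\varphi_T(\Theta,\widetilde{A})\\
&=\mymin_{\Theta\in\S_c}\mymax_{A\in\S_a}\varphi_T(\Theta,A)=\varphi_T(\Theta_1^*,A_1^*).
\end{align*}
Dually, restricting the outer maximization in the maxmin form to pure $A$ can only lower it, and the inner minimum collapses by the lemma, yielding
\begin{align*}
\varphi_T(\Theta_3^*,A_3^*)=\mymax_{\widetilde{A}\in\widetilde{\S}_a}\mymin_{\widetilde{\Theta}\in\widetilde{\S}_c}\varphi_T(\widetilde{\Theta},\widetilde{A})
&\ge \mymax_{A\in\S_a}\mymin_{\widetilde{\Theta}\in\widetilde{\S}_c}\varphi_T(\widetilde{\Theta},A)\\
&=\mymax_{A\in\S_a}\mymin_{\Theta\in\S_c}\varphi_T(\Theta,A)=\varphi_T(\Theta_2^*,A_2^*).
\end{align*}
Concatenating the two chains gives exactly $\varphi_T(\Theta_1^*,A_1^*)\ge\varphi_T(\Theta_3^*,A_3^*)\ge\varphi_T(\Theta_2^*,A_2^*)$.

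I do not anticipate a genuine obstacle: the argument reduces to the elementary half of the minimax inequality together with the collapse of a mixed optimum to a pure one when the opponent plays pure. The only points demanding care are bookkeeping ones, namely (i) fixing the abuse of notation so that $\varphi_T(\Theta_3^*,A_3^*)$ is understood as the mixed payoff \eqref{eq-mpof10}, and (ii) tracking the inequality directions correctly when the domain is shrunk from mixed to pure strategies — passing to the smaller set \emph{raises} a $\mymin$ but \emph{lowers} a $\mymax$, which is precisely what makes the two bounds point the right way.
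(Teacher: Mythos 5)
Your proof is correct, but it is organized differently from the paper's. You work at the level of game \emph{values}: you identify $\varphi_T(\Theta_1^*,A_1^*)$ and $\varphi_T(\Theta_2^*,A_2^*)$ with the pure minmax and maxmin values, invoke the minimax theorem for the mixed extension to equate the mixed equilibrium payoff with $\mymin_{\widetilde{\Theta}}\mymax_{\widetilde{A}}=\mymax_{\widetilde{A}}\mymin_{\widetilde{\Theta}}$, and then sandwich by restricting the outer optimization to Dirac masses and collapsing the inner mixed optimum to a pure one. The paper instead argues directly at the level of the specific equilibrium points: for the first inequality it writes $\varphi_T(\Theta_1^*,A_1^*)\ge \EP_{A\sim\widetilde{A}_3^*}\varphi_T(\Theta_1^*,A)=\varphi_T(\Theta_1^*,\widetilde{A}_3^*)\ge \varphi_T(\widetilde{\Theta}_3^*,\widetilde{A}_3^*)$, using only the best-response property of $A_1^*$ (applied under the expectation) and the one-sided Nash condition \eqref{eq-mne10}, and dually for the second inequality. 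The two arguments rest on the same elementary facts (expectation bounded by the maximum; Dirac masses form a subset of the mixed strategies), but the paper's version is slightly more economical in that it never needs the equality $\mymin\mymax=\mymax\mymin$ for the mixed game nor the identification of the equilibrium payoff with the value, whereas your version makes the logical structure (pure minmax $\ge$ mixed value $\ge$ pure maxmin) more transparent and isolates the ``mixing is useless against a pure opponent'' lemma explicitly. Both are complete; just be sure, as you note, to state that $\varphi_T(\Theta_3^*,A_3^*)$ denotes the mixed payoff \eqref{eq-mpof10}, and that for a zero-sum game every mixed Nash equilibrium payoff equals the common value — that is the one step your route uses that the paper's does not.
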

\begin{proof}
The mixed strategy $(\Theta_3^*,A_3^*)$ can be written as two distributions $\Delta_c:\S_c\rightarrow \I$ and  $\Delta_a:\S_a\rightarrow \I$, respectively.
To prove the first inequality, we have
\begin{eqnarray*}
\label{eq-pr-36}
\varphi_T(\Theta_1^*,A_1^*)
= \EP_{A\sim \Delta_a}\, \varphi_T(\Theta_1^*,A_1^*)
\stackrel{\eqref{eq-pr-31}}{\ge} \EP_{A\sim \Delta_a}\, \varphi_T(\Theta_1^*,A)
=  \varphi_T(\Theta_1^*,A_3^*)
\stackrel{\eqref{eq-mne10}}{\ge}  \varphi_T(\Theta_3^*,A_3^*).
\end{eqnarray*}
For the second inequality, we have
\begin{eqnarray*}
\label{eq-pr-34}
\varphi_T(\Theta_2^*,A_2^*)
=   \EP_{\Theta\sim \Delta_c}\, \varphi_T(\Theta_2^*,A_2^*)
\stackrel{\eqref{eq-AT10U}}{\le} \EP_{\Theta\sim \Delta_c}\,  \varphi_T(\Theta,A_2^*)
=  \varphi_T(\Theta_3^*,A_2^*)
\stackrel{\eqref{eq-mne10}}{\le}  \varphi_T(\Theta_3^*,A_3^*).
\end{eqnarray*}
The proposition is proved.
\end{proof}

The following example shows that the inequalities in
Proposition \ref{pr-com12} could be strict.
\begin{example}
Consider a two-player zero-sum minmax game with
 payoff matrix
\begin{equation*}
\left(
\begin{array}{cc}
0 & -a \\
-1 & 0\\
\end{array}
\right)
\end{equation*}
where $0<a<1$.
The strategy space for player one  is the rows and its goal is minimize the payoff.
Then, the Stackelberg game with player one as the leader is to solve
the $\minmax$ problem and a Stackelberg equilibrium is $($Row 1, Column 1$)$
with payoff $0$.
The Stackelberg game with player two (column) as the leader is to solve
the $\maxmin$ problem and a Stackelberg equilibrium is $($Row 1, Column 2$)$
with payoff $-a$.
By the well known minmax theorem, the corresponding simultaneous game has no
Nash equilibrium since  $\minmax\ne \maxmin$,
and a mixed strategy Nash equilibrium exists:
the first player plays $(\frac{1}{1+a},\frac{a}{1+a})$
and the second player plays $(\frac{a}{1+a},\frac{1}{1+a})$ with payoff  $- \frac{a}{1+a}$.
We summarize the above discussion as follows:
\begin{equation*}
\begin{array}{ll}
\minmax & \hbox{\rm payoff} =  0\\
\maxmin & \hbox{\rm payoff} =  -a\\
\hbox{\rm Mixed strategy} & \hbox{\rm payoff} = - \frac{a}{1+a}\in(-a,0).
\end{array}
\end{equation*}
\end{example}

\section{Conclusion}
\label{sec-conc}
In this paper, we give a game theoretical analysis for adversarial deep learning from
a more practical viewpoint.
In previous work, the adversarial deep learning was formulated
as a simultaneous game. In order for the Nash equilibrium to exist,
the strategy spaces for the Classifier and the Adversary
are assumed to be certain convex probability distributions, which are not used in real applications.
In this paper,  the adversarial deep learning is formulated
as a sequential game with the Classifier as the leader and
the Adversary as the follower.
In this case, we show that the game has Stackelberg equilibria
when the strategy space for the  classifier is
DNNs with given width and depth, just like people do in practice.

We prove that Stackelberg equilibria for such a sequential game
is the same as the DNNs obtained with adversarial training.
Furthermore, if the  margin loss introduced by Carlini-Wagner
is used as the payoff function, the   equilibrium DNN
has the largest adversarial accuracy and is thus the provable optimal defence.
Based on this  approach, we also give theoretical analysis for
other important issues such as the tradeoff between robustness
and the accuracy, and the generation of optimal universal adversaries.

For future research, it is desirable to develop
practical methods to use mixed strategy in deep learning,
since it is proved that such strategy has more power
than pure strategy when the depth and width of the DNNs are fixed.
It is also interesting to analysis the properties
of the Nash equilibria for adversarial deep learning,
such as whether the equilibria are regular or essential~\cite{Damme-game,Wu-game}?
Finally, we can use game theory to analyze  other adversarial problems
in deep learning.


\begin{thebibliography}{99}


 \bibitem{Obfuscated1}
  A. Athalye,  N. Carlini, D. Wagner.
  Obfuscated Gradients Give a False Sense of Security:
  Circumventing Defenses to Adversarial Examples.
  {\em Proc. ICML},   PMLR, 274-283, 2018.

 \bibitem{PhyAt2}
 A. Athalye, L. Engstrom, A. Ilyas, K. Kwok.
 Synthesizing Robust Adversarial Examples.
 ArXiv: 1707.07397, 2017.

 \bibitem{asulay1}
 A. Azulay and Y. Weiss.
 Why Do Deep Convolutional Networks Generalize so Poorly to
 Small Image Transformations?
 {\em Journal of Machine Learning Research}, 20, 1-25, 2019.

 \bibitem{Bast1}
 A. Bastounis, A.C. Hansen, V. Vla$\breve{\rm{c}}$i$\acute{\rm{c}}$.
 The Mathematics of Adversarial Attacks in AI
 - Why Deep Learning is Unstable Despite the Existence of Stable Neural Networks.
 arXiv:2109.06098, 2021.

\bibitem{game-aeg1}
 J. Bose, G. Gidel, H. Berard, A. Cianflone, P. Vincent, S. Lacoste-Julien, W. Hamilton.
 Adversarial Example Games.
 {\em Proc. NeurIPS}, 2020.

 \bibitem{CW1}
 N. Carlini, D. Wagner.
 Towards Evaluating the Robustness of Neural Networks.
 {\em Proc. of IEEE Symposium on Security and Privacy},
 IEEE Press,
 39-57, 2017.

 \bibitem{asdet1}
 N. Carlini, D. Wagner.
 Adversarial Examples are not Easily Detected: Bypassing Ten Detection Methods.
 {\em Proc. 10th ACM Workshop on Artificial Intelligence and Security},
 3-14, 2017.

 \bibitem{adl-gm1}
 A.S. Chivukula, X. Yang, W. Liu,  T. Zhu, W. Zhou.
 Game Theoretical Adversarial Deep Learning With Variational Adversaries.
 {\em IEEE Trans. Knowledge and Data Engineering},
 33(11), 3568-3581, 2021.


\bibitem{smooth1}
 J. Cohen, E. Rosenfeld,  Z. Kolter.
 Certified Adversarial Robustness via Randomized Smoothing.
 {\em Proc. ICML},  PMLR, 1310-1320, 2019.

 \bibitem{Bast2}
   M.J. Colbrook, V. Antun, A.C. Hansen.
  The Difficulty of Computing Stable And Accurate Neural Networks:
  On The Barriers of Deep Learning and Smale's 18th Problem.
  119 (12) e2107151119, 2022.

\bibitem{aml-01}
 N. Dalvi, P. Domingos, S. Mausam, D. Verma.
 Adversarial Classification.
 {\em Proc. KDD'04}, 99-108, ACM Press, New York, 2004.

\bibitem{Damme-game}
 E. van Damme.
 Stability and Perfection of Nash Equilibia.
 Springer, 1987.

\bibitem{game-Fiez1}
 T. Fiez, B. Chasnov, L.J. Ratliff.
 Implicit Learning Dynamics in Stackelberg Games: Equilibria Characterization, Convergence Analysis, and Empirical Study.
 {\em  Proc. ICML}, PMLR, 2020.

\bibitem{book-game1}
 D. Fudenberg and J. Tirole.
 Game Theory.
 MIT Press, Cambridge, MA, 1991.

\bibitem{Glicksberg}
 I.L. Glicksberg.
 A Further Generalization of the Kakutani Fixed Point Theorem,
 with Application to Nash Equilibrium Points.
 {\em  Proc. AMS}, 3(1), 1952.


\bibitem{game-latent1}
 G. Gidel, D. Balduzzi, W.M. Czarnecki, M. Garnelo, Y. Bachrach.
 Minimax Theorem for Latent Games or: How I Learned to Stop Worrying about Mixed-Nash and Love Neural Nets.
 arXiv:2002.05820v1, 2020.


\bibitem{DataPoi1}
 P.W. Koh, P. Liang.
 Understanding Black-box Predictions via Influence Functions.
 {\em Proc. ICML},  PMLR,
 1885-1894, 2017.

\bibitem{mixedne1}
 Y.P. Hsieh, C. Liu, V. Cevher.
 Finding Mixed Nash Equilibria of Generative Adversarial Networks.
 {\em Proc. ICML}, PMLR, 2019.

\bibitem{game-Jin1}
 C. Jin, P. Netrapalli, M.I. Jordan.
 What is Local Optimality in Nonconvex-nonconcave Minimax Optimization?
 {\em Proc. ICML}, PMLR, 2020.

\bibitem{book-game3}
 C.A. Kamhoua, C.D. Kiekintveld, F. Fang, Q. Zhu (eds).
 Game Theory and Machine Learning for Cyber Security.
 IEEE Press and Wiley, 2021.

\bibitem{PhyAt1}
 A. Kurakin, I. Goodfellow, S. Bengio.
 Adversarial Examples in the Physical World.
 ArXiv: 1607.02533, 2016.

\bibitem{lecun2015deep}
 Y. LeCun, Y. Bengio, G. Hinton.
 Deep Learning.
 {\em Nature}, 521(7553), 436-444, 2015.

\bibitem{pp1}
 Y. Liu,  L. Wei,  B. Luo, Q.  Xu.
 Fault Injection Attack on Deep Neural Network.
 {\em Proc. of the IEEE/ACM International Conference on Computer-Aided Design},
 131-138, 2017.


\bibitem{M2017}
 A. Madry,   A. Makelov,   L. Schmidt,   D. Tsipras,   A. Vladu.
 Towards Deep Learning Models Resistant to Adversarial Attacks.
 ArXiv:1706.06083,   2017.

\bibitem{game-ne1}
 L. Meunier, M. Scetbon, R. Pinot, J. Atif, Y. Chevaleyre.
 Mixed Nash Equilibria in the Adversarial Examples Game.
  {\em Proc. ICML},  PMLR 139, 2021.

\bibitem{G20141}
 G. Mont\'{u}far, R. Pascanu, K. Cho, Y. Bengio.
 On the Number of Linear Regions of Deep Neural Networks.
 {\em Proc. NIPS'2014}, 2014.

\bibitem{universal-adv}
 S.M. Moosavi-Dezfooli, A. Fawzi, O. Fawzi, P. Frossard.
 Universal Adversarial Perturbations.
 {\em Proc. CVPR}, 1765-1773, 2017.

\bibitem{N2014}
 B. Neyshabur, R. Tomioka, N. Srebro.
 Norm-based Capacity Control in Neural Networks.
{\em Proc. COLT'15}, 1376-1401, 2015.


 \bibitem{game-pv1}
 A. Pal and R. Vidal.
 A Game Theoretic Analysis of Additive Adversarial Attacks and Defenses.
 {\em Proc. NeurIPS}, 2020.

\bibitem{JSMA}
 N. Papernot, P. McDaniel, S. Jha,  M. Fredrikson, Z.B. Celik, A. Swami.
 The Limitations of Deep Learning in Adversarial Settings.
  {\em IEEE European Symposium on Security and Privacy},
  IEEE Press,  2016, 372-387.

\bibitem{Bbox1}
 N. Papernot, P. McDaniel, I. Goodfellow, S. Jha, Z.B. Celik, A. Swami.
 Practical Black-box Attacks Against Machine Learning.
 {\em Proc. ACM on Asia Conference on Computer and Communications Security}, ACM Press,
 506-519, 2017.

\bibitem{game-rd1}
 R. Pinot, R. Ettedgui, G. Rizk, Y. Chevaleyre, J. Atif.
 Randomization Matters: How to Defend Against Strong Adversarial Attacks.
   {\em Proc. ICML},  PMLR, 2020.

\bibitem{game-rd2}
 M.S. Pydi and V. Jog.
 Adversarial Risk via Optimal Transport and Optimal Couplings.
  {\em Proc. ICML},  PMLR, 2020.

\bibitem{mixedne2}
 F.A. Oliehoek, R. Savani, J. Gallego, E. van der Pol, R. Gro\ss.
 Beyond Local Nash Equilibria for Adversarial Networks.
 arXiv:1806.07268, 2018.


\bibitem{coop-game1}
 J. Rena, D. Zhanga, Y. Wangb, L. Chen, Z. Zhou,
 Y. Chen, X. Cheng, X. Wang, M. Zhoua, J. Shi, Q. Zhang.
 A Unified Game-Theoretic Interpretation of Adversarial Robustness
 arXiv:2103.07364v2, 2021.

\bibitem{DataPoi2}
 A. Shafahi, W.R. Huang, M. Najibi, O. Suciu, C. Studer, T. Dumitras, T. Goldstein.
 Poison Frogs! Targeted Clean-label Poisoning Attacks on Neural Networks.
 {\em Proc. NeurIPS},
 6103-6113, 2018.

 \bibitem{adv-inev1}
 A. Shafahi, W.R. Huang, C. Studer, S. Feizi, T. Goldstein.
 Are Adversarial Examples Inevitable?
 ArXiv:1809.02104, 2018.

\bibitem{book-game2}
 Y. Shoham and K. Leyton-Brown.
 Multiagent Systems: Algorithmic, Game Theoretic and Logical Foundations.
 Cambridge University Press, 2008.

 \bibitem{StackelbergNE1}
 M. Simaan and J.B. Cruz Jr.
 On the Stackelberg Strategy in Nonzero-sum Games.
 {\em Journal of Optimization Theory and Applications},
 11, 533-555, 1973.

\bibitem{S2013}
 C. Szegedy, W. Zaremba, I. Sutskever,
 J. Bruna, D. Erhan, I.J. Goodfellow, R. Fergus.
 Intriguing Properties of Neural Networks.
 ArXiv:1312.6199, 2013.

\bibitem{aq1}
 Y.L. Tsai, C.Y. Hsu, C.M. Yu, P.Y. Chen.
 Formalizing Generalization and Robustness of Neural Networks to Weight Perturbations.
 arXiv:2103.02200, 2021.

\bibitem{trade1}
 D. Tsipras, S. Santurkar, L. Engstrom, A. Turner, A. Madry.
 Robustness May Be at Odds With Accuracy.
  {\em Proc. ICML},  PMLR, 2019.

 \bibitem{Wu-game}
 W.T. Wu and J.H. Jiang.
 Essential Equilibrium Points of $n$-Person Noncooperative Games.
 {\em Scientia Sinica},
 11(10), 1307-1322, 1962.

\bibitem{sur-adv}
 H. Xu, Y. Ma, H.C. Liu, D, Deb, H. Liu J.L. Tang, A.K. Jain.
 Adversarial Attacks and Defenses in Images, Graphs and Text: A Review.
 {\em International Journal of Automation and Computing},
 17(2), 151-178, 2020.

\bibitem{trade2}
 Y.Y. Yang, C. Rashtchian, H. Zhang, R. Salakhutdinov, K. Chaudhuri.
 A Closer Look at Accuracy vs Robustness.
 arXiv:2003.02460v3, 2000.

\bibitem{yu-l2}
 L. Yu and X.S. Gao.
 Improve the Robustness and Accuracy of Deep Neural Network
 with $L_{2,\infty}$ Normalization.
 Accepted by  {\em Journal of Systems Science and Complexity}, 2022.
 arXiv:2010.04912.

\bibitem{yu-apa}
 L. Yu, Y. Wang, X.S. Gao.
 Adversarial Parameter Attack on Deep Neural Networks.
 arXiv:2203.10502, 2022.

\bibitem{yu-biasc}
 L. Yu and X.S. Gao.
 Robust and Information-theoretically Safe Bias Classifier against Adversarial Attacks.
 arXiv:2111.04404,   2021.

\bibitem{trades}
 H. Zhang,  Y. Yu,  J. Jiao,  E.P. Xing,  L.E. Ghaoui,  M.I. Jordan.
 Theoretically Principled Trade-off Between Robustness and Accuracy.
 {\em Proc. ICML}, PMLR, 2019.

\bibitem{aml-sur1}
 Y. Zhou, M. Kantarcioglu, B. Xi.
 A survey of Game Theoretic Approach for Adversarial Machine Learning.
 {\em WIREs Data Mining Knowl Discov}, 1-9, 2019.

\end{thebibliography}
\end{document}